\documentclass{article}

\usepackage[preprint]{neurips_2026}

\usepackage[utf8]{inputenc} 
\usepackage[T1]{fontenc}    
\usepackage{hyperref}       
\usepackage{url}            
\usepackage{booktabs}       
\usepackage{amsfonts}       
\usepackage{nicefrac}       
\usepackage{microtype}      
\usepackage{xcolor}         
\usepackage{amsmath}
\usepackage{amsthm}
\usepackage{bbm}
\usepackage{graphicx}
\usepackage{thmtools}
\usepackage{thm-restate}

\usepackage{algorithm}
\usepackage{algpseudocode}
\usepackage{wrapfig}

\newcommand\independent{\protect\mathpalette{\protect\independenT}{\perp}}
\def\independenT#1#2{\mathrel{\rlap{$#1#2$}\mkern2mu{#1#2}}}
\newcommand{\E}{\mathbb{E}}
\newcommand{\bP}{\mathbb{P}}

\newcommand{\R}{\mathbb{R}}

\newcommand{\cN}{\mathcal{N}}

\newcommand{\cov}{\textnormal{Cov}}
\newcommand{\var}{\textnormal{Var}}

\newcommand{\ATE}{\textnormal{ATE}}
\newcommand{\IV}{\textnormal{IV}}

\newcommand{\DR}{\textnormal{DR}}
\newcommand{\Pn}{\mathbb{P}_n}
\newcommand{\one}{\mathbbm{1}}

\newtheorem{theorem}{Theorem}

\newtheorem{lemma}{Lemma}
\newtheorem{assumption}{Assumption}
\newtheorem{corollary}{Corollary}

\title{Learning Treatment Effects during Resource Allocation via Priority-Queue Randomization}

\author{%
  JungHo Lee$^1$ 
  \qquad 
  Johnna Sundberg$^1$ \qquad 
  Pim Welle$^2$ \qquad
  Bryan Wilder$^1$ \\
  $^1$Carnegie Mellon University \\ $^2$Allegheny County Department of Human Services \\
  $^1$\texttt{\{junghol, jsundber, bwilder\}@andrew.cmu.edu} \\
  $^2$\texttt{paul.welle@alleghenycounty.us}
}

\begin{document}
\maketitle

\begin{abstract}
     Public service programs often allocate limited resources under uncertainty about their benefits, creating a need for randomization to support credible evaluation. In practice, however, applicants commonly enter waitlists where resources are prioritized toward individuals judged to have higher need through tiered priority queues, making direct randomization difficult. Motivated by this, we develop an experimental design framework for learning treatment effects while treating those most in need where incoming applicants are randomized into priority queues based on their assessed risk scores. Treatments are then provided across queues in priority order and first-in-first-out within queue as budget becomes available. Our contributions are two-fold. First, we characterize what causal effects are identified under this priority-queue allocation. When arrivals are exogenous, treatments are conditionally randomized, and hence standard estimands are identified; when arrivals are endogenous, queue randomization instead provides an instrument for treatment, identifying local treatment effects induced by the queuing process. Second, we develop optimized queue-assignment designs that trade off statistical efficiency against prioritizing higher-need applicants. We show in the process that, despite dependence in treatment assignments induced by the design, usual iid efficiency bounds remain well-justified design objectives. We illustrate the proposed designs using data from a housing allocation program in a large U.S. county.
\end{abstract}

\section{Introduction}\label{sec:intro}
In many public service programs, scarcity is managed not through a one-time allocation decision, but through tiered priority queues. Applicants arrive over time, resources become available over time, and each newly available slot is offered according to a service rule: applicants in higher-priority tiers are served before those in lower-priority tiers, while applicants within the same tier are typically ordered by time spent waiting. Such priority-based waiting systems are widely used, including for child care subsidies \citep{vdoe2024paying, vdoe2025waitlist}, Medicaid home- and community-based services \citep{macpac2020hcbs}, public housing and homelessness assistance \citep{acdhs2020aha}, and related settings such as organ transplantation \citep{colvin2012lung}.

While these programs provide essential services, they often operate under significant uncertainty regarding their causal impact. Rigorous randomized evaluation is the gold standard for resolving this uncertainty, yet direct randomization, such as a lottery for treatment slots, is often administratively or ethically unfeasible in these contexts. Bypassing high-priority applicants to serve those with lower assessed need would violate the program's core mandate. 

We therefore propose an experimental design framework that resolves this tension by randomizing queue assignment rather than treatment allocation. The resulting design problem is substantially more complex because treatment is not assigned independently at a fixed point in time; applicants arrive sequentially, resources become available over time, and each treatment decision depends on the evolving composition of all priority queues. Moreover, the design must preserve the program's priority-based logic. Under our design, applicants are randomized into priority queues with probabilities depending on their risk scores. Once assigned, treatment assignment is governed by existing rules: resources are allocated across queues in priority order and first-in-first-out within each queue. 

Our contributions are two-fold. First, we characterize what causal effects are identified under this dynamic allocation process. We show that when arrivals are exogenous, treatment is conditionally randomized and standard treatment effects can be recovered; when arrivals are endogenous, queue randomization instead provides an instrument for treatment, identifying local effects induced by the queuing process. Second, we develop optimized queue-assignment designs that balance statistical efficiency with the prioritization of high-need applicants. This design problem is difficult because the policymaker controls queue-assignment probabilities rather than treatment propensities directly, and because treatment assignments are dependent across applicants through the queuing process. We formulate policies that maximize statistical efficiency subject to a utility constraint ensuring that high-need individuals remain likely to be prioritized.


\paragraph{Relevant literature: } 
Our paper builds on \citep{wilder2025learning} who study how to design randomized allocation rules that balance two competing goals: prioritizing individuals with greater need and learning treatment effects. We extend this perspective to settings where treatment is instead allocated through priority-based rules. Our work is also related to the market-design literature on dynamic waitlists and priority-based allocation. This literature studies settings in which agents and/or objects arrive over time, and evaluates mechanisms according to objectives such as welfare, fairness, and incentives to accept or defer offers \citep[e.g.,][]{thakral2019matching, arnosti2020design, schummer2021influencing, leshno2022dynamic, murra2024public}. Our focus is not on characterizing the welfare or incentive properties of a waitlist mechanism per se, but on designing randomization within such mechanisms to enable causal inference while preserving priority-based allocation goals. 


A separate literature studies experimental design and policy learning under capacity constraints \citep{yamin2025dependent, zhou2023mind, tang2023learning, sun2021treatment}. These papers share our interest in scarce resource allocation, but typically choose allocation policies directly, whereas in our setting treatment is induced by a subsequent queuing process. Closest to our inferential setting is perhaps \citet{chaisemartin2020estimating}, who study causal estimation under randomized waitlists induced by limited resources. There, applicants' ranks are randomized and offers proceed down the list until seats are filled, which conflicts with the priority structure we seek to preserve.

\section{Basic setup and queue-randomizing design}\label{sec:setup_and_queue}

We consider a setting in which $n$ units arrive into our system over a fixed enrollment horizon. Each unit has an arrival time $A \in [0,\tau]$ and observed covariates $X \in \mathbb R^p$. Treatments are distributed at discrete allocation periods $t \in \{1,\ldots,\tau\}$, where $\tau \in \mathbb{N}$ is fixed. We use $Z \in \{0,1\}$ to denote terminal treatment status, where $Z=1$ means that the unit is treated at some point before the end of the allocation horizon and $Z=0$ otherwise. 
We adopt the potential outcomes framework for causal inference \citep{neyman1923, rubin1974estimating} and write $Y^0,Y^1 \in \mathbb R$ as the (stationary) potential outcomes under no treatment and treatment, respectively. 
We therefore posit that we have $n$ iid draws $O = (A,X,Y^0,Y^1)$ from some joint distribution $\mathcal P$. We make the standard \textit{consistency} assumption throughout that the observed $Y$ is $Y = Y^z$ if $Z=z$. 

Unlike idealized settings where treatment is determined independently for each individual, many public service settings allocate scarce resources through tiered priority queues. There are $K$ queues, indexed by $\{1,\ldots,K\}$, where lower-indexed queues have higher priority. At each individual's arrival time $A$, they are assigned a queue $Q \in \{1,\ldots,K\}$ based on their observable covariates $X$. At the end of each allocation period, a budget $b$ becomes available, representing $b$ available treatment slots, such as housing vacancies. In the formal notation below, we write this fixed period-specific budget as $b_{t,n}$ to allow treatment capacity to vary across periods and with the system size $n$, and $b_n \equiv (b_1,\ldots,b_{\tau,n})$ for the entire sequence of budgets. At the end of period $t$, the policymaker considers all units who have arrived by that period and have not yet been treated. Treatment is offered first to individuals in the top-priority queue $1$, then to individuals in queue $2$, and so on, until the period-$t$ budget $b_{t,n}$ is exhausted. Within each queue, treatment is first-in, first-out (i.e., treatment is offered in order of arrival time). Units who are not treated in period $t$ remain eligible for treatment in later periods, which therefore yields a waitlist structure.

We study a class allocation policies where the policymaker chooses which queue an individual with covariates $X$ is assigned to via a (potentially randomized) policy $\theta(X)$, where $\theta(X) = (\theta_1(X),\ldots,\theta_K(X))^\top \in \Delta_K$ denotes the vector of queue assignment probabilities from which the queue $Q \in \{1,\ldots,K\}$ is drawn (throughout, $\Delta_K \equiv \{w \in \R_+^K : \sum_{k=1}^K w_k = 1\}$ denotes the simplex). 
Since $\theta$ depends in $X$, 
the policymaker can offer different assignment probabilities depending on individual characteristics. At one extreme, higher-risk individuals might be deterministically assigned to higher-priority queues, as is typically the case in practice. At the other end, $\theta$ could be uniform for all individuals (effectively implementing an RCT). In between, higher risk individuals can be offered a greater probability of placement in top-priority queues. The choice of queue for an individual induces the probability that they will actually receive treatment. Let $\mathbb P_{\theta,b_n}$ be the distribution of the treatment induced by our design. We use $\tilde\pi_n(x,q;\theta,b_n) \equiv \bP_{\theta,b_n}(Z=1\mid X=x,Q=q)$ to denote the \textit{queue-conditional propensity score} and  $\pi_n(x;\theta,b_n) \equiv \bP_{\theta,b_n}(Z=1 \mid X=x)$ to denote the \textit{marginal propensity score} conditional just on covariates $X$.

Queue-based structure is needed to manage the operational complexity of real-world services, where applicants and resources both arrive online. However, it creates substantial complexity for inference of treatment effects because whether an individual is offered treatment depends on how the arrival of other individuals and resources flow through the queuing system. This stands in stark contrast to many other allocative settings, where a simple cutoff for eligibility enables designs like regression discontinuities. Our goal is two-fold. First is to determine \textit{which causal effects are identified under this allocation mechanism} given the complexity of the process. Second is develop experimental designs -- \textit{how can a policymaker select a randomized policy} $\theta$ enable accurate estimation of treatment effects while still allocating treatment to those most in need?



\section{What estimands can be identified?}\label{sec:identification}
The most basic question is what causal effects can be identified under what conditions for queue-based allocation systems. So far, we have not imposed any assumptions on how the arrival times $A$ relate to the other variables. This turns out to be a critical distinction because the arrival time distribution determines how resources are prioritized between otherwise-similar individuals.   

The simplest setting would be if the arrival time were \textit{independent} of the potential outcomes, at least conditional on observables:  $A \independent (Y^0, Y^1) \mid X$. In this setting, arrival times create a ``natural experiment" of their own, apart from any explicit randomization performed by the policymaker. That is, individuals who arrive when there happens to be more budget available may be more likely to receive treatment than otherwise-similar individuals who arrive when demand is higher and the system more congested. Formalizing this intuition, we show:

\begin{theorem}\label{thm:identification_exo}
    Suppose that the arrivals satisfy $A \independent (Y^0, Y^1) \mid X$. Then, treatment assignments are conditionally randomized: $ Z \independent (Y^0, Y^1) \mid X$. Moreover, suppose that every level of the covariates $X$ satisfies positivity: there exist $\gamma > 0$ and $n_0 \in \mathbb N$ such that for every $n \geq n_0$, $\gamma \leq \pi_n(X;\theta,b_n) \leq 1-\gamma$. Then, any functional of the potential outcome distribution $\bP(Y^z \mid X=x)$ is identified. For example, the average treatment effect (ATE) $\psi_{\ATE} \equiv \E(Y^1-Y^0)$ is identified.  \label{theorem:exog-arrivals}
\end{theorem}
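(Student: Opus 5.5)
The plan is to write a single unit's treatment status as a deterministic function of its own arrival time and queue, plus quantities that are independent of its potential outcomes given its covariates, and then apply standard conditional-independence arguments.

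Fix a unit $i$. Under the priority-queue mechanism, its terminal treatment status is
\[
Z_i = g\bigl(A_i, Q_i, \{(A_j,Q_j)\}_{j\neq i}, b_n\bigr)
\]
for a measurable, deterministic map $g$. The allocation rule is priority order across queues and FIFO within a queue, and the budgets $b_n$ are fixed. So the map uses only the arrival times and queue labels of all units, never outcomes. Next, $Q_i$ is drawn from $\theta(X_i)$ using randomization that is independent of everything else. Hence $Q_i \independent (A_i,Y_i^0,Y_i^1, O_{-i}) \mid X_i$.

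Because units are iid, the other units' data $O_{-i}$ and their queue draws $Q_{-i}$ are independent of $O_i$, and therefore of $(Y_i^0,Y_i^1)$ given $X_i$. Combining these facts with the hypothesis $A_i \independent (Y_i^0,Y_i^1)\mid X_i$, I would show
\[
(A_i, Q_i, \{(A_j,Q_j)\}_{j\neq i}) \independent (Y^0_i,Y^1_i) \mid X_i .
\]
The argument factorizes the joint conditional law given $X_i$. Write $W_i=(A_i,Q_i)$ and $W_{-i}=\{(A_j,Q_j)\}_{j\neq i}$. The conditional law of $(Y_i^0,Y_i^1,A_i,Q_i,W_{-i})$ given $X_i$ splits as
\[
\bP(Y_i^0,Y_i^1\mid X_i)\,\bP(A_i\mid X_i)\,\bP(Q_i\mid X_i)\,\bP(W_{-i}).
\]
Here the first two factors separate by the hypothesis, the third by external randomization, and the last by independence across units. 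Since $Z_i$ is a measurable function of the left-hand side, $Z_i \independent (Y_i^0,Y_i^1)\mid X_i$ follows. This factorization is the only delicate step. The main obstacle is being careful that the dependence across units induced by the queue does not leak outcome information. It does not, precisely because $g$ never takes outcomes as inputs and the budgets are nonrandom.

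For identification, I would use consistency and the conditional randomization to write, for each $z$,
\[
\bP(Y^z\le y\mid X=x) = \bP(Y^z\le y\mid X=x, Z=z) = \bP(Y\le y\mid X=x,Z=z).
\]
The conditioning event has positive probability because $\pi_n(x;\theta,b_n)\in[\gamma,1-\gamma]$ for $n\ge n_0$. The right-hand side is a functional of the observed-data law $\bP_{\theta,b_n}$, so any functional of $\bP(Y^z\mid X=x)$ is identified. For the ATE, integrate over the marginal law of $X$, which is unaffected by the design:
\[
\psi_{\ATE}=\E\bigl[\E(Y\mid X,Z=1)-\E(Y\mid X,Z=0)\bigr].
\]
Equivalently, in inverse-propensity form,
\[
\psi_{\ATE}=\E\!\left[\frac{ZY}{\pi_n(X)}-\frac{(1-Z)Y}{1-\pi_n(X)}\right],
\]
which is well defined by positivity.

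Finally, note that the observed-data law depends on $n$ only through the treatment mechanism, while $\bP(Y^z\mid X)$ is fixed. So the identification formula returns the same estimand for every $n\ge n_0$.
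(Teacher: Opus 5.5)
Your proposal is correct and follows essentially the same route as the paper. Like the paper, you write $Z_i$ as a deterministic function of $(A_i,Q_i)$, the other units' design variables, and the fixed budgets $b_n$. You then establish $(A_i,Q_i,\{(A_j,Q_j)\}_{j\neq i})\independent(Y_i^0,Y_i^1)\mid X_i$; your explicit factorization of the joint conditional law is the same step the paper gets by combining $O_{-i}\independent(A_i,Q_i,Y_i^0,Y_i^1)\mid X_i$ with $(A_i,Q_i)\independent(Y_i^0,Y_i^1)\mid X_i$. Identification then follows from consistency and positivity, and your distributional and inverse-propensity formulas are harmless extensions of the paper's ATE calculation.
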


This demonstrates that assuming exogeneity of arrival times suffices to identify many causal effects of interest simply by comparing individuals who are vs are not treated (conditional on covariates). Such comparisons have been used in a number of applied program evaluation settings, and Theorem  \ref{theorem:exog-arrivals} helps clarify the assumptions on the arrival process necessary for this strategy to be valid (which have not been discussed in previous resources for practitioners, e.g.\ \citet{frechtling2010userfriendly}). 

Making these assumptions explicit reveals that they are potentially stringent when arrivals may be correlated with underlying need. For example, consider a public housing program. It could be the case that more people apply for public housing assistance in time periods where outcomes are worse (e.g., because of worse overall economic circumstances), in ways that may not be fully captured by the covariates. In such a case, the naive analysis suggested by Theorem \ref{theorem:exog-arrivals} would suffer from confounding. Hence, we now turn to the development of identification strategies which do not rely on the assumption $A \independent (Y^0, Y^1) \mid X$. 

Discarding the assumptions of exogenous arrivals substantially complicates the identification of treatment effects. Even if the policymaker explicitly randomizes which queues individuals are placed into via the choice of $\theta$, this does not suffice for treatment assignments to be randomized: endogeneity in the arrival process can still confound treatment assignment by making individuals who arrive at particular times (and have particular potential outcomes) be more likely to receive treatment. 

To resolve these difficulties, we propose an identification strategy which treats the random assignment of the queue as an instrumental variable (IV). Since the queue assignment can be explicitly randomized by the policymaker, standard IV assumptions will be satisfied by design.\footnote{Exclusion is justified here because queue assignment is an administrative priority label whose only operational role is to determine the order in which applicants are offered treatment, and hence whether they receive treatment before the end of the allocation horizon. Although queue assignment may affect waiting time, our stationary potential outcome formulation assumes that outcomes depend only on terminal treatment receipt, not on the exact timing. This restriction would fail if queue assignment changed other aspects of service delivery or if outcomes depended directly on treatment timing.} Following the literature on formula instruments \citep{borusyak2025design}, we propose to instrument $Z$ using the difference between a given individual's \textit{expected} treatment probability pre-randomization and their \textit{realized} treatment probability post-randomization:
\begin{align*}
    r_n(X, Q) \equiv \tilde\pi_n(X,Q;\theta,b_n)-\pi_n(X;\theta,b_n).
\end{align*}
$r_n$ accounts for the fact that individuals with particular covariates $X$ might be more likely ex-ante to be placed into queues with high treatment probabilities by recentering relative to $\pi_n(X;\theta,b_n)$. 



\begin{lemma}\label{lem:valid_iv}
   Under our design, $r_n(X,Q)$ satisfies conditional IV-unconfoundedness, exclusion, and the induced treatment statuses satisfy monotonicity. 
\end{lemma}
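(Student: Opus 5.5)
I would work with individual-level potential treatment statuses indexed by the queue an individual is placed in. Fix a focal individual $i$ and write $Z_i(q)$ for the terminal treatment status that would result if $i$ were placed in queue $q$. All other individuals' arrivals, covariates and queue draws are held at their realized values, and the budget sequence $b_n$ is held fixed. Because the allocation rule (priority order across queues, FIFO within queues, fixed budgets) is deterministic given everyone's $(A,X,Q)$, $Z_i(q)$ is a measurable function of $(A_i,X_i,q)$ and of $\{(A_j,X_j,Q_j)\}_{j\neq i}$.

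\emph{Exclusion} then follows from the stationary potential outcome formulation and consistency. Outcomes are $Y=Y^Z$ with $Z=Z(Q)$, so $Q$, and hence $r_n(X,Q)$, affects $Y$ only through $Z$; I would state this explicitly as $Y^{z}$ not depending on $q$.

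\emph{Conditional IV-unconfoundedness.} I would show $Q_i \independent \{Y_i^0,Y_i^1,Z_i(1),\ldots,Z_i(K)\}\mid X_i$. By design, $Q_i$ is drawn from $\theta(X_i)$ using randomization independent of everything else, so $Q_i \independent (A_i,Y_i^0,Y_i^1,\{O_j,Q_j\}_{j\ne i})\mid X_i$. Since each $Z_i(q)$ is a function of $(A_i,X_i)$ and of the other units' data, the claim follows. The instrument $r_n(X_i,Q_i)$ is a fixed deterministic function of $(X_i,Q_i)$, because $\tilde\pi_n$ and $\pi_n$ are population quantities of the design. It therefore inherits this conditional independence. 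Recentering by $\pi_n$ also gives $\E[r_n(X,Q)\mid X]=0$, which is what the formula-instrument framework uses. Relevance, $\var(r_n\mid X)>0$, is not part of the lemma, so I would only remark on it.

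\emph{Monotonicity} is the main step. I would show that for $q<q'$ (higher priority), $Z_i(q)\ge Z_i(q')$ pointwise, and that $\tilde\pi_n(x,q)$ is nonincreasing in $q$, so larger $r_n$ corresponds to higher priority. The plan is a coupling or induction over periods $t=1,\ldots,\tau$. Compare two systems that are identical except that $i$ sits in $q$ versus $q'$. Claim: as long as $i$ is untreated in both, the set of other units treated by the end of period $t$ is the same in both. The priority position of $i$ in system $q$ is weakly ahead of its position in system $q'$: everyone ahead of $i$ in system $q$ is also ahead of $i$ in system $q'$, since the queue order is lexicographic in (queue, arrival time). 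Now suppose $i$ is treated in system $q'$ at period $t$, and both systems agree on others up to $t-1$. Then in system $q$ the number of untreated units ahead of $i$ at period $t$ is weakly smaller, so $i$ fits within $b_{t,n}$ and is treated at period $t$ or earlier.

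The subtlety I expect to be the obstacle is that once $i$ is treated earlier in one system, the slot it uses displaces someone else, and the other units' treatment sets diverge. I would handle this by arguing only up to the first period in which $i$ is treated in either system. Before that period the two systems coincide on all other units, because an untreated $i$ affects others only by occupying a position that was never reached within the budget. That is enough to conclude $Z_i(q')=1\Rightarrow Z_i(q)=1$. Taking expectations given $X$ yields $\tilde\pi_n(x,q)\ge\tilde\pi_n(x,q')$, so the ordering of $r_n$ agrees with the ordering of the potential treatments.
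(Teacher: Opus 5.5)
Your proposal is correct and follows the same route as the paper: unconfoundedness because $Q$ is drawn from $\theta(X)$ independently of everything else and $r_n$ is a function of $(X,Q)$ only; exclusion from the stationary outcome formulation $Y^{zk}=Y^z$; and monotonicity $Z^k\ge Z^\ell$ for $k<\ell$, averaged given $X$ to get $\tilde\pi_n(x,k)\ge\tilde\pi_n(x,\ell)$. The paper only asserts that moving a unit to a higher-priority queue cannot reduce its treatment status, so your period-by-period coupling up to the first period in which $i$ is treated fills in that step correctly rather than taking a different approach.
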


A natural causal estimand is therefore the IV ratio:
 \[\beta_{\IV} \equiv
    \frac{\E\{r_n(X,Q)Y\}}{\E\{r_n(X,Q)Z\}}
    =
    \frac{\E\big[\{\tilde\pi_n(X,Q;\theta,b_n)-\pi_n(X;\theta,b_n)\}Y\big]}
    {\E\big[\{\tilde\pi_n(X,Q;\theta,b_n)-\pi_n(X;\theta,b_n)\}Z\big]},
\]
given the relevance assumption $\E\{r_n(X,Q)Z\} \neq 0$. However, it is not apriori clear exactly what treatment effect $\beta_{\IV}$ targets. Generally, IVs are interpretable as identifying a treatment effect on a latent complier population who are treated only ``because" of the instrument. Here, this complier population is induced by the queuing process. We next derive an explicit interpretation of $\beta_{\IV}$ as a particular weighted combination of local average treatment effects (LATEs).

Formally, for any two queues $k < \ell$, let $\mathcal{C}_{k,\ell}$ denote the set of individuals who (because of their arrival times) would be treated if randomized to queue $k$ but \textit{not} if randomized to queue $\ell$. It turns out $\beta_{\IV}$ corresponds to a weighted average of treatment effects for such latent complier populations:

\begin{theorem}\label{thm:arrival_endo_late}
Let $\psi_{k,\ell}(X) \equiv \E(Y^1 - Y^0 | X, \mathcal{C}_{k,\ell})$ and $w_{k,\ell}(X) \equiv \theta_k(X)\theta_\ell(X)(\tilde\pi_n(X,k;\theta,b_n) - \tilde\pi_n(X,\ell;\theta,b_n))^2$. Then,
\[ 
    \beta_{\IV} = \frac{\E\{\sum_{k < \ell}w_{k,\ell} (X)\psi_{k,\ell}(X)\}}{\E\{\sum_{k < \ell}w_{k,\ell}(X)\}}.
\]
\end{theorem}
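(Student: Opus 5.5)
Conditioning on $X$ and applying iterated expectations, I would split the numerator and denominator of $\beta_{\IV}$ into contributions from each covariate level. For each unit, define potential treatment statuses $Z(q)$, $q \in \{1,\ldots,K\}$: the terminal treatment status the unit would have if assigned to queue $q$, holding everything else fixed (other units' queues, arrivals, budgets). By Lemma~\ref{lem:valid_iv} two facts are available. Since $Q$ is drawn from $\theta(X)$ independently of $(A, Y^0, Y^1)$ and of the rest of the system given $X$, we have $Q \independent (Z(\cdot), Y^0, Y^1) \mid X$. Monotonicity gives $Z(1) \ge Z(2) \ge \cdots \ge Z(K)$, because moving a unit to a higher-priority queue can only advance its position. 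Consistency with exclusion gives $Y = Y^0 + Z(Q)(Y^1 - Y^0)$.

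Write $r_n(X,q) = \tilde\pi_n(X,q) - \pi_n(X)$. Then $\E\{r_n(X,Q) \mid X\} = 0$, since $\pi_n(X) = \sum_q \theta_q(X)\tilde\pi_n(X,q)$. The $Y^0$ term therefore drops out of the numerator by independence of $Q$ given $X$:
\[
\E\{r_n Y \mid X\} = \sum_q \theta_q(X)\, r_n(X,q)\, \E\{Z(q)\Delta \mid X\}, \qquad \Delta \equiv Y^1 - Y^0 .
\]
Set $m_q(X) \equiv \E\{Z(q)\Delta \mid X\}$. The denominator has the same form with $\Delta$ replaced by $1$, so $\E\{Z(q) \mid X\} = \tilde\pi_n(X,q)$. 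For a centered weight $c_q = r_n(X,q)$ and any sequence $m_q$, the standard covariance identity holds:
\[
\sum_q \theta_q c_q m_q = \sum_q \theta_q (c_q - \bar c)(m_q - \bar m) = \tfrac12 \sum_{k,\ell} \theta_k \theta_\ell (c_k - c_\ell)(m_k - m_\ell).
\]
This equals $\sum_{k<\ell} \theta_k \theta_\ell (c_k - c_\ell)(m_k - m_\ell)$. Here $c_k - c_\ell = \tilde\pi_n(X,k) - \tilde\pi_n(X,\ell)$.

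Monotonicity then gives, for $k < \ell$, $Z(k) - Z(\ell) = \one\{\mathcal C_{k,\ell}\}$. Hence
\[
m_k - m_\ell = \E\{\Delta\, \one\{\mathcal C_{k,\ell}\} \mid X\} = \bP(\mathcal C_{k,\ell} \mid X)\,\psi_{k,\ell}(X),
\]
and the denominator analogue is $\tilde\pi_n(X,k) - \tilde\pi_n(X,\ell) = \bP(\mathcal C_{k,\ell} \mid X)$. Substituting, the numerator conditional on $X$ becomes $\sum_{k<\ell}\theta_k\theta_\ell(\tilde\pi_n(X,k)-\tilde\pi_n(X,\ell))^2\psi_{k,\ell}(X) = \sum_{k<\ell} w_{k,\ell}(X)\psi_{k,\ell}(X)$. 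The denominator becomes $\sum_{k<\ell} w_{k,\ell}(X)$. Taking outer expectations and forming the ratio yields the claim; when $\bP(\mathcal C_{k,\ell}\mid X)=0$ the weight is zero, so $\psi_{k,\ell}$ need not be defined there.

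The main obstacle is making the potential-treatment notation $Z(q)$ rigorous under the interference induced by the queue. A unit's treatment depends on the others' queues and arrivals, so $Z(q)$ must be defined with all other units' assignments held at their realized values. I then need $Q_i$ to be independent of $(Z_i(\cdot), Y_i^0, Y_i^1)$ given $X_i$, which holds because the other units' queue draws are independent of $Q_i$. A second consequence is that $\tilde\pi_n(X,q)$ equals $\E\{Z(q)\mid X\}$ averaged over the others' randomness. Monotonicity, from Lemma~\ref{lem:valid_iv}, I take as given. I would also check that $\mathcal C_{k,\ell}$ is defined with respect to these same $Z(\cdot)$, so that the identity $Z(k) - Z(\ell) = \one\{\mathcal C_{k,\ell}\}$ holds pointwise.
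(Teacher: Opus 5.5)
Your proposal is correct and follows the paper's proof essentially step for step: write $Y = Y^0 + Z^{Q}(Y^1-Y^0)$, drop the $Y^0$ term because $\E\{r_n(X,Q)\mid X\}=0$, rewrite the centered sum over queues as a pairwise sum over $k<\ell$, and use monotonicity to turn $Z^k - Z^\ell$ into the complier indicator $\one(\mathcal C_{k,\ell})$, with the same computation for the denominator. Where you go further is in stating the covariance identity explicitly and in pinning down $Z(q)$ under interference, holding the other units' queues and arrivals at their realized values; the paper uses both points without spelling them out.
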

The weights $w_{k, \ell}(X)$ reflect that particular LATEs $\psi_{k,\ell}(X)$ are weighted more heavily if there is a larger probability of individuals with covariates $X$ being randomized to queues $k$ and $\ell$ and if queues $k$ and $\ell$ induce substantially different treatment probabilities. That is, such designs target individuals whose treatment status is more likely to change because of randomization.  

In practice, $\beta_{\IV}$ might be hard to interpret if there is little basis for identifying which individuals are treated only at specific levels of priority. To provide a simplified estimand for interpretation, and later for experimental design, we also study the partially linear IV model 
\begin{equation}\label{eq:plm_itt}
    Y = \psi_Z Z + g(X) + U, \qquad \E(U \mid X,Q)=0,
\end{equation}
studied by, e.g., \cite{robinson1988root} and \cite{chernozhukov2018double}. This model estimates a single homogeneous treatment effect $\psi_Z$, although the covariates are still allowed to have a separate (nonparametric) effect on the outcome via $g(X)$. For practitioners interested in estimating such an ``average" effect, we show that a scaled version of $r_n$ remains the uniquely best choice of instrument for attaining the usual iid efficiency bound \citep{chamberlain1992efficiency}:
\begin{theorem}\label{thm:pliv_instrument}
    Let $\sigma(X) \equiv \E(U^2|X)$. Then, $r_n(X,Q)/\sigma(X)$, up to multiplicative scaling, uniquely attains the iid efficiency bound for the partially linear IV model \eqref{eq:plm_itt}.
\end{theorem}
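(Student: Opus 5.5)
We treat \eqref{eq:plm_itt} as a conditional moment restriction model and apply the classical results of \citet{chamberlain1992efficiency}. The only restriction is $\E(Y-\psi_Z Z - g(X)\mid X,Q)=0$, with $g$ nonparametric. The relevant score residual is $U(\psi)=Y-\psi Z-g(X)$. The efficient instrument for $\psi_Z$ is
\[
D^*(X,Q)\,/\,\E(U^2\mid X,Q),
\]
where $D^*$ is the residual of the derivative $D(X,Q)=\E(Z\mid X,Q)$ after projecting out the nuisance directions generated by $g$.

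\textbf{Step 1: nuisance projection.} Perturbations of $g$ give derivatives that are arbitrary functions $h(X)$. We must orthogonalize $D$ against these in the metric weighted by $\E(U^2\mid X,Q)^{-1}$.

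\textbf{Step 2: homoskedasticity in $Q$.} We argue $\E(U^2\mid X,Q)=\E(U^2\mid X)=\sigma(X)$. The plan is to justify this from the design: $Q$ is drawn from $\theta(X)$ independently of $(A,Y^0,Y^1)$ given $X$. The residual $U$ involves $Z$, however, so full independence may fail. If so, I would either
\begin{itemize}
\item state this as the working assumption implicit in defining $\sigma(X)$ (the theorem's notation suggests exactly this), or
\item note that the structural error $U=Y-\psi_Z Z-g(X)$ is, under the homogeneous model, a function of $(Y^0,X)$. Indeed $Y=Y^0+\psi_Z Z$ when $Y^1-Y^0\equiv\psi_Z$, so $U=Y^0-g(X)$. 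Queue randomization then gives $U\independent Q\mid X$, hence $\E(U^2\mid X,Q)=\sigma(X)$.
\end{itemize}

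\textbf{Step 3: compute the efficient instrument.} With weight depending only on $X$, the projection of $D$ onto functions of $X$ is simply $\E(D\mid X)$. Now $D(X,Q)=\tilde\pi_n(X,Q;\theta,b_n)$ (conditional on $X,Q$, the treatment probability under the design) and $\E(D\mid X)=\pi_n(X;\theta,b_n)$. Hence
\[
D^*=\tilde\pi_n(X,Q;\theta,b_n)-\pi_n(X;\theta,b_n)=r_n(X,Q),
\]
and the efficient instrument is $r_n(X,Q)/\sigma(X)$.

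\textbf{Step 4: bound attainment.} Any instrument $m(X,Q)$ satisfying $\E(m\mid X)=0$ yields an estimator with asymptotic variance
\[
\frac{\E(m^2\sigma)}{\{\E(m\,r_n)\}^2}.
\]
By Cauchy--Schwarz in $L^2(\sigma)$,
\[
\{\E(m\,r_n)\}^2=\Big\{\E\Big(m\sigma\cdot\frac{r_n}{\sigma}\Big)\Big\}^2\le \E(m^2\sigma)\,\E\Big(\frac{r_n^2}{\sigma}\Big).
\]
This gives the lower bound $\{\E(r_n^2/\sigma)\}^{-1}$, which is the Chamberlain bound. Equality holds iff $m\sigma\propto r_n/\sigma\cdot\sigma$, i.e.\ $m\propto r_n/\sigma$ almost surely. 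This delivers uniqueness up to scale.

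\textbf{Main obstacle.} The key difficulty is that the data are not iid: treatment assignments are dependent through the queue. The claim, however, concerns the \emph{iid efficiency bound} as a design objective. I would therefore compute the bound under the marginal law of $(X,Q,Z,Y)$ induced by $\mathbb P_{\theta,b_n}$ at fixed $n$, treating it formally as iid. The dependence-justification, as the abstract indicates, comes later in the paper.
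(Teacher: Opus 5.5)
Your argument is correct, but it takes a different route from the paper's.

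\textbf{The paper's route.} The paper gets this theorem as a by-product of Theorem~\ref{thm:asymp_var_plm}. In that proof it restricts to bounded instruments $f(X,Q)$ with $\E\{f(X,Q)\mid X\}=0$. It then computes the asymptotic variance of $\widehat\psi_Z(f)$ \emph{under the dependent queueing design}, using Lemma~\ref{lem:centered_z_cov} and Theorem~\ref{thm:propensity}, and obtains $\E\{f^2\sigma(X)\}/\{\E(f\zeta)\}^2$ with $\zeta=\alpha_Q(\beta,p)-\pi(X;\theta)$. Finally it notes that $f=r_n/\sigma$ gives $(\E[\zeta^2/\sigma(X)])^{-1}$ and identifies this with the iid bound by citing Chamberlain. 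The optimality and uniqueness step is never carried out explicitly.

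\textbf{Your route.} You work entirely in the iid model for the fixed-$n$ marginal law. You obtain the efficient instrument by projecting $\E(Z\mid X,Q)=\tilde\pi_n$ onto functions of $X$. You then prove optimality and uniqueness up to scale directly, by Cauchy--Schwarz in $L^2(\sigma)$.

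\textbf{What yours buys.}
\begin{itemize}
\item It is self-contained and needs none of Assumptions~\ref{assm:fixed_queue}--\ref{assm:arrivals_indep_covariates} or the dependence lemmas.
\item It gives exact uniqueness of $r_n/\sigma$ at each $n$. In the paper's asymptotic formula the exact minimizer is $\zeta/\sigma$, which $r_n/\sigma$ matches only in the limit.
\item Your Step~2 is more careful than the paper. The paper asserts $Q\independent U\mid X$ ``by design'' even though $U=Y-\psi_Z Z-g(X)$ involves $Z$. Your observation that a homogeneous effect gives $U=Y^0-g(X)$ is what makes $\E(U^2\mid X,Q)=\sigma(X)$ legitimate. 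Without it, the nuisance projection in your Step~1 would be a $1/\E(U^2\mid X,Q)$-weighted average rather than $\pi_n(X)$, and the stated instrument would not be efficient.
\end{itemize}

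\textbf{What the paper's buys.} It shows, which yours explicitly defers, that the same bound is the actual limiting variance of the estimator under dependent treatment assignment. That is what justifies using it as a design objective.
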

Intuitively, $r_n(X,Q)$ captures the randomized first-stage variation generated by queue assignment after partialing out $X$. Scaling by $1/\sigma(X)$ then prioritizes covariate regions with less residual outcome noise, yielding the usual inverse-variance weighting logic.

\section{How should experiments be designed?}\label{sec:design}
So far, we have treated the queue assignment policy $\theta(X)$ as fixed. Next, we turn to the \textit{design} of experiments that use randomization to queues. In particular, we consider policymakers who wish to balance learning treatment effects against the loss that randomization creates with respect to the originally desired prioritization. Following \cite{wilder2025learning}, we formalize this by modeling the policymaker as having a utility $u(X)$ for offering treatment to a participant with covariates $X$. For example, $u$ could be a risk score, representing a preference for allocating treatment to higher-risk participants. If the policymaker has no desire for experimentation, they would simply assign queues deterministically based on $u$, e.g., assigning the highest-risk applicants to the highest priority queue, and so on. However, without randomization, it becomes difficult to credibly infer treatment effects (particularly without assuming exogeneity of arrivals). On the other hand, uniformly random assignments would maximize power to estimate treatment effects but entirely ignore $u$. 

Our goal is to design policies $\theta$ which lie on the Pareto curve between these extremes, allowing policymakers to optimally trade off between the two goals. \cite{wilder2025learning} showed that in the case of single-shot treatment decisions, this can be reduced to a convex optimization problem that trades off expected utility with the optimal variance (efficiency bound) for estimating the treatment effect. However, the dynamic setting with queue-based allocation creates substantial new challenges.  First, we optimize over the queue allocation probabilities $\theta$ instead of directly over the treatment propensities $\pi$; formulating an optimization problem requires a tractable structure for the map $\theta \mapsto \pi$.  Second, in our setting, treatment assignments are not independent across individuals: the queue process creates correlations since individuals ``compete" for fixed resources. Without iid data, it is unclear whether standard efficiency bounds apply. Third, particularly under endogeneous arrivals, the treatment effects that we can estimate are substantially more complicated than a simple average treatment effect and it is not clear how to derive a tractable variance proxy for optimization. 

To address the first and second challenge, we describe a set of assumptions on the queuing process under which we can formulate tractable proxy objectives for optimization. We emphasize that these assumptions are \textit{not} required  for valid identification of treatment effects: our results in the previous section hold regardless. Instead, we view them as a useful approximation that can be imposed at the design stage in order to guide the planning of an experiment. All asymptotic statements are taken with respect to a growing experiment size $n \to \infty$, holding fixed the number of queues $K$, the number of allocation periods $\tau$, the queue-assignment policy $\theta$, and the arrival-time distribution.

\begin{assumption}[Fixed queue proportions]\label{assm:fixed_queue}
    $\E\{\theta(X)\} = p \equiv (p_1,\ldots,p_K)^\top \in \Delta_K$.
\end{assumption}

\begin{assumption}[Stable and non-negligible budget]\label{assm:stable_budget}
There exists $\beta \in (0,1)$ such that, as $n \to \infty$,
\[
    \max_{1 \leq t \leq \tau} \bigg|\frac{1}{n}\sum_{s \leq t} b_{s,n} - \beta \mathbb P(A \leq t)\bigg|\to 0.
\]
\end{assumption}


\begin{assumption}[Independent arrivals]\label{assm:arrivals_indep_covariates}
The arrivals are independent of covariates, i.e., $A \independent X$.
\end{assumption}

We view Assumption \ref{assm:fixed_queue} as a mild condition that is enforceable by-design: it says that each queue should capture some fixed proportion of applicants in-expectation. This is true of many empirical examples, where, e.g., a queueing system with 3 levels is designed so that the top queue corresponds to the 1/3 highest-risk individuals. 
Assumption \ref{assm:stable_budget} says that the budget is released over time in proportion to enrollment and that, asymptotically, treatment capacity is a constant fraction of the experiment size. If the fraction treated vanishes, then there is not an appropriate asymptotic sense in which the amount of the data observed increases with $n$.

Assumption \ref{assm:arrivals_indep_covariates} is perhaps the most substantive, and says that arrival times do not sort systematically according to covariates. This can be viewed as optimizing under a stationarity condition, where the arrival distribution does not change significantly over time. We view this as a reasonable approximation for the design stage since the policymaker likely does not have the ability to predict with any precision how the future arrival distribution will change. Additionally, without such an assumption, the relevant asymptotics are hard to define since the queuing system and corresponding treatment propensities need not converge as the number of individuals grows. 

\paragraph{Correspondence between queues and treatment propensities: }Under these conditions, we address the first challenge by showing that the policy $\theta$ induces a tractable mapping to the resulting treatment propensities $\pi$. The relevant object for later analysis is the \emph{asymptotic propensity score}
\begin{equation}\label{eq:asymp_propensity}
    \pi(x;\theta)
    \equiv
    \lim_{n\to\infty}\pi_n(x;\theta,b_n)
    =
    \lim_{n\to\infty}\sum_{k=1}^K \tilde\pi_n(x;\theta,b_n)\theta_k(x).
\end{equation}
We show $\pi(x; \theta)$ takes a specific form:
\begin{theorem}[Design-induced asymptotic propensity score]\label{thm:propensity}
    Suppose Assumptions \ref{assm:fixed_queue}--\ref{assm:arrivals_indep_covariates} hold. Let 
    $c_k \equiv \sum_{j=1}^k p_j$ with $c_0 \equiv 0$. For each $k$, define
    \begin{equation*}
         \alpha_k(\beta,p) \equiv
         \begin{cases}
             \{(\beta-c_{k-1})_{+}-(\beta-c_k)_{+}\}/p_k, & p_k > 0, \\
             0, & p_k = 0,
         \end{cases}
     \end{equation*}
    where $u_+ \equiv \max\{u,0\}$. Then, for a.e. $x$, $\pi(x;\theta) = \sum_{k=1}^K \alpha_k(\beta,p)\theta_k(x)$.
\end{theorem}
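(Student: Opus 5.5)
Since $\pi_n(x;\theta,b_n)=\sum_k \tilde\pi_n(x,k;\theta,b_n)\theta_k(x)$ and $K$ is fixed, it suffices to show that $\tilde\pi_n(x,k;\theta,b_n)\to\alpha_k(\beta,p)$ for every $k$ with $p_k>0$ and a.e.\ $x$. Queues with $p_k=0$ are handled separately: $\E\{\theta_k(X)\}=0$ forces $\theta_k(x)=0$ for a.e.\ $x$, so their contribution vanishes. The key structural observation is that, under Assumption~\ref{assm:arrivals_indep_covariates}, a unit's treatment status depends on $X$ only through its queue label $Q$. Once $Q=k$ is fixed, its arrival time $A$ is independent of $X$. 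The other units' $(A_j,Q_j)$ are iid and independent of the focal unit. Each $Q_j$ is drawn with marginal probabilities $p$ (Assumption~\ref{assm:fixed_queue}) independently of $A_j$. Hence $\tilde\pi_n(x,k)$ does not depend on $x$, and the problem reduces to a fluid limit for a $K$-class priority queue with iid arrivals and class labels.

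I would carry out the fluid analysis by induction on periods $t$. Let $F(t)=\bP(A\le t)$. By the Glivenko--Cantelli/LLN, the empirical fraction of units in queue $j$ with $A\le t$ converges uniformly over the finitely many $t$ to $p_jF(t)$. Cumulative budget up to $t$ is $\beta F(t)+o(1)$ by Assumption~\ref{assm:stable_budget}. I then claim the cumulative number treated from queue $j$ by the end of period $t$ is, after dividing by $n$,
\[
\min\{p_jF(t),\,(\beta F(t)-c_{j-1}F(t))_+\}+o_p(1)=F(t)\{(\beta-c_{j-1})_+-(\beta-c_j)_+\}+o_p(1).
\]
The heuristic is that the cumulative budget always fills higher queues first. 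Because arrivals into each queue are proportional to $F(t)$ with ratio $p_j$ and the budget is proportional to $F(t)$ with ratio $\beta$, the backlog in a cumulative sense is stationary. Queues $j$ with $c_j\le\beta$ are fully cleared every period. Queue $j^*$ with $c_{j^*-1}<\beta<c_{j^*}$ receives a fraction $(\beta-c_{j^*-1})/p_{j^*}$ of its cumulative arrivals. Lower queues receive nothing. I would verify this by checking that the greedy allocation satisfies the cumulative identities at each $t$, using that the priority rule makes cumulative service to queues $1..j$ equal to $\min(\text{cumulative budget},\text{cumulative arrivals to }1..j)$. This follows because a Skorokhod-type reflection formula holds once higher queues are always served first. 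It is continuous in its inputs, so the $o(1)$ errors propagate.

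The hardest step is translating aggregate fractions into the individual probability $\tilde\pi_n$, since this needs the FIFO structure within the partial queue $j^*$. Summing over periods, by time $\tau$ the treated units of queue $j^*$ are those served under FIFO. In the fluid limit the treated set within each period's arrival cohort is then a fraction $\alpha_{j^*}$ of that cohort, because cumulative service is exactly $\alpha_{j^*}$ times cumulative arrivals at every $t$. I would make this rigorous by taking $\tilde\pi_n(x,k)=\E[\#\text{treated in queue }k]/\E[\#\text{in queue }k]$, which follows by exchangeability of units with the same label. Dividing the aggregate limit at $t=\tau$ by $p_kF(\tau)=p_k$ then gives $\alpha_k(\beta,p)$ with convergence in expectation by bounded convergence. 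This exchangeability shortcut actually sidesteps the FIFO detail, so I would use it as my main route. The remaining checks are the a.e.\ qualifier and $F(\tau)=1$ since $A\in[0,\tau]$.
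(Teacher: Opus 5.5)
Your plan follows the paper's route. You take a fluid limit for the cumulative treated mass in queues $1,\dots,j$ via a reflection (Lindley) recursion, then use exchangeability to pass from aggregate counts to $\tilde\pi_n(x,k)$.

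One intermediate claim is wrong as stated. In finite samples, cumulative service to queues $1,\dots,j$ does \emph{not} equal $\min(\text{cumulative budget},\text{cumulative arrivals})$. Slots left over in a period when the top-$j$ backlog is exhausted go to lower queues and are not available to later top-$j$ arrivals. For example, per-period budgets $100,50$ against top-$j$ arrivals $50,100$ give total top-$j$ service $100$, not $150$. Let $\bar N$ be cumulative top-$j$ arrivals, $\bar W$ the top-$j$ backlog and $\bar b$ cumulative budget, all divided by $n$. The correct identity, which the paper proves by induction on $t$, is
\[
\bar N_{\le j,\le t,n}-\bar W_{\le j,\le t,n}=\min_{0\le s\le t}\bigl\{\bar N_{\le j,\le s,n}+\bar b_{\le t,n}-\bar b_{\le s,n}\bigr\}.
\]
It reduces to $\min\{\beta,c_j\}F(t)$ only after passing to the limit. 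There the map $s\mapsto\beta F(t)+(c_j-\beta)F(s)$ is monotone, so the minimum sits at $s=0$ or $s=t$. That collapse is exactly where the proportional form of Assumption~\ref{assm:stable_budget} enters; a front-loaded budget would give a different limit. So apply the reflection map to the pre-limit inputs and use its continuity, rather than the min identity itself.

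Your final step is valid: $\tilde\pi_n(x,k)=\E(T_{k,n})/\E(N_{k,n})=\E(T_{k,n}/n)/p_k$, using $x$-invariance and exchangeability across units. It is slightly cleaner than the paper's $\E(T_{k,n}/N_{k,n}\mid X_i=x,Q_i=k)$, because it needs only the unconditional fluid limit plus bounded convergence.
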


Intuitively, this corresponds to modeling individuals as having a fixed probability of treatment conditional on their queue assignment, with the final treatment propensity being a mixture of these queue-specific probabilities. Thus the asymptotic propensity score is affine in $\theta$, which is the key structural property that makes the design problem tractable.


\paragraph{Estimation with non-iid assignments:} We address the second challenge by showing that, even with dependent treatment
allocations, estimators based on the usual iid efficient influence function (EIF) or orthogonal estimating equation representations \citep{kennedy2024semiparametric} have the same leading asymptotic variance as in the iid case. We start with the case of exogenous arrivals, focusing on the standard ATE estimand. Let $\mu_z(x) \equiv \E(Y\mid Z=z,X=x)$ and $\Pn f \equiv n^{-1}\sum_{i=1}^n f(O_i)$ denote the empirical average. A canonical EIF-based estimator is the doubly-robust estimator 
\begin{equation*}
 \widehat\psi^{\mathrm{DR}}_{\ATE} 
 \equiv \Pn \bigg[\widehat\mu_1(X) - \widehat\mu_0(X) + \frac{Z}{\pi_n(X;\theta,b_n)}\{Y - \widehat\mu_1(X)\} - \frac{1-Z}{1-\pi_n(X;\theta,b_n)}\{Y - \widehat\mu_0(X)\}\bigg].
\end{equation*}
Its asymptotic variance under our treatment allocation scheme can be characterized as follows:
\begin{theorem}[Asymptotic variance of $\widehat\psi^{\mathrm{DR}}_{\ATE}$]\label{thm:asymp_var_DR}
Suppose the conditions in Theorem \ref{thm:identification_exo} and Assumptions \ref{assm:fixed_queue}--\ref{assm:arrivals_indep_covariates} hold. Further, assume $\widehat\mu_z$ is constructed from an independent sample $\mathcal D$ such that $\|\widehat\mu_z-\mu_z\|_{L_2(\bP)}=o_{\bP}(1)$ for $z\in\{0,1\}$. If there exists $m < \infty$ such that  $\E\{(Y^1)^2 +(Y^1)^0\} \leq m$, 
then conditional on $\mathcal D$,
\[
\lim_{n\to\infty} n\var(\widehat\psi^{\mathrm{DR}}_{\ATE})
= 
\E\bigg\{\frac{\var(Y \mid Z=1, X)}{\pi(X;\theta)}+
\frac{\var(Y \mid Z=0, X)}{1-\pi(X;\theta)}\bigg\}
+\var\big(\mu_1(X)-\mu_0(X)\big).
\]
\end{theorem}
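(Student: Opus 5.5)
We decompose the estimator around the oracle version with the true regression functions $\mu_z$. Write $\pi_{n,i}\equiv\pi_n(X_i;\theta,b_n)$ and
\[
\varphi_n(O_i,Z_i)\equiv \mu_1(X_i)-\mu_0(X_i)+\frac{Z_i}{\pi_{n,i}}\{Y_i-\mu_1(X_i)\}-\frac{1-Z_i}{1-\pi_{n,i}}\{Y_i-\mu_0(X_i)\}.
\]
Then $\widehat\psi^{\DR}_{\ATE}=\Pn\varphi_n+R_n$, where
\[
R_n=\Pn\Big[\Big(1-\tfrac{Z}{\pi_n}\Big)(\widehat\mu_1-\mu_1)-\Big(1-\tfrac{1-Z}{1-\pi_n}\Big)(\widehat\mu_0-\mu_0)\Big].
\]
The plan has three steps. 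First, show $n\var(R_n)\to0$ conditional on $\mathcal D$. Second, show $n\var(\Pn\varphi_n)$ converges to the stated limit. Third, combine the two through Cauchy--Schwarz on the cross term.

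\textbf{The variance of $\Pn\varphi_n$.} By Theorem~\ref{thm:identification_exo}, $Z\independent(Y^0,Y^1)\mid X$, and $\bP(Z_i=1\mid X_i)=\pi_{n,i}$ by definition. Split $\varphi_n=m(X)+\epsilon_n$, where $m=\mu_1-\mu_0$. The $m(X_i)$ terms are iid and contribute $\var(m(X))$. The diagonal terms of $\epsilon_n$ give
\[
\E\{\var(Y^1\mid X)/\pi_n(X)+\var(Y^0\mid X)/(1-\pi_n(X))\}.
\]
Under exogeneity and consistency, $\var(Y^z\mid X)=\var(Y\mid Z=z,X)$. Positivity and Theorem~\ref{thm:propensity} give $\pi_n\to\pi$ a.e. with bounds $[\gamma,1-\gamma]$, so dominated convergence (using the moment bound $m$) yields the desired limit. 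Cross terms between $m(X_i)$ and $\epsilon_{n,j}$ vanish: conditioning on all $X$'s, all $A$'s and all $Z$'s, the residual $Y_i-\mu_{Z_i}(X_i)$ has conditional mean zero. This requires that, given $(X_i,A_i)$, unit $i$'s potential outcomes be independent of the whole assignment vector $(Z_1,\dots,Z_n)$. That holds because the $Z$'s are measurable with respect to all units' $(A,X)$, the queue randomizations and the budgets, and by the exogeneity assumption together with independence across units.

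This same conditioning argument is what makes the off-diagonal terms $\E[\epsilon_{n,i}\epsilon_{n,j}]$, $i\neq j$, vanish exactly. Each such term factors into conditionally mean-zero residuals, since $Y_i$ depends only on $O_i$ and $Z_i$ enters only as a weight. So dependence among the $Z$'s never contaminates the leading variance. The one thing to double-check is the step $\E[Y_i\mid X_i,Z_i]=\mu_{Z_i}(X_i)$ when also conditioning on the other units' data. It follows from $A\independent(Y^0,Y^1)\mid X$ and the iid structure of $O$, though one should be careful that $\mu_z$ as defined uses the marginal $Z$. Both quantities equal $\E(Y^z\mid X)$, so there is no discrepancy.

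\textbf{The remainder $R_n$ (main obstacle).} Each summand of $R_n$ has the form $h(X_i)\,(1-Z_i/\pi_{n,i})$, where $h=\widehat\mu_1-\mu_1$ is fixed given $\mathcal D$. Its variance is
\[
n^{-2}\sum_i\E[h^2(1-Z/\pi_n)^2]+n^{-2}\sum_{i\neq j}\cov\big(h(X_i)(1-Z_i/\pi_{n,i}),\,h(X_j)(1-Z_j/\pi_{n,j})\big).
\]
The diagonal part is at most $n^{-1}\gamma^{-2}\|h\|^2_{L_2(\bP)}=o(n^{-1})$. The off-diagonal terms are genuinely nonzero, because units compete for the budget. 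Controlling them is the crux.

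The approach is a propagation-of-chaos/law-of-large-numbers argument for the queue system. Under Assumptions~\ref{assm:fixed_queue}--\ref{assm:arrivals_indep_covariates}, the empirical queue occupancies concentrate, so the period-wise cutoffs (the queue reached and the arrival-time threshold within it) converge in probability to deterministic limits. These limits are the same ones underlying Theorem~\ref{thm:propensity}. Hence $Z_i=\one\{A_i\text{ and }Q_i\text{ beat the cutoffs}\}+\Delta_i$, with $\bP(\Delta_i\neq0)\to0$ uniformly. Replacing $Z_i$ with the iid limiting indicator $Z_i^*$ makes the cross covariances exactly zero. The error terms are then bounded via Cauchy--Schwarz by $\|h\|_{L_2}$ times $\bP(\Delta\neq0)^{1/2}$, which is $o(1)$. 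Consequently $\sum_{i\neq j}\cov(\cdot)=o(n^2)\cdot o(1)$, and combined with $\|h\|\to0$ this gives $n\var(R_n)\to0$. If a sharper rate is needed, one can use the exchangeability of units to reduce the off-diagonal sum to $n^2\cov$ of a single pair, then bound that pair covariance using the $O(n^{-1/2})$ fluctuation of the cutoffs.

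Finally,
\[
n\var(\widehat\psi^{\DR}_{\ATE})=n\var(\Pn\varphi_n)+n\var(R_n)+2n\cov(\Pn\varphi_n,R_n).
\]
The last term is bounded by $2\{n\var(\Pn\varphi_n)\,n\var(R_n)\}^{1/2}\to0$, which gives the limit stated in Theorem~\ref{thm:asymp_var_DR}.
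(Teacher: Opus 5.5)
\paragraph{Where the proposal matches the paper.} Your treatment of the oracle term is the same as the paper's: the diagonal variances, the exact vanishing of all cross-unit covariances by conditioning on $(X^n,A^n,Q^n)$, and dominated convergence via Theorem~\ref{thm:propensity}. The final Cauchy--Schwarz step is also the same.

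\paragraph{The gap: the rate for the remainder $R_n$.} The off-diagonal part of $n\var(R_n)$ is $n^{-1}\sum_{i\neq j}\cov(\cdot,\cdot)$, which equals $(n-1)$ times a single pair covariance. So each pair covariance must be $O(n^{-1})$ times something vanishing, such as $\|h\|^2_{L_2(\bP)}$. A pair covariance that is merely $o(1)$ gives only $n\var(R_n)=o(n)$. Hence the step ``$\sum_{i\neq j}\cov(\cdot)=o(n^2)\cdot o(1)$, hence $n\var(R_n)\to 0$'' does not follow.

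The coupling to iid indicators $Z_i^*$ cannot repair this, for two reasons. First, the Cauchy--Schwarz bounds you describe carry no factor $n^{-1}$. Second, the realized within-cell FIFO cutoffs fluctuate at scale $n^{-1/2}$, so $\bP(\Delta_i\neq 0)$ is itself typically of order $n^{-1/2}$. Equivalently, write $R_n=R_n^*-\Pn\{h\Delta/\pi_n\}$. The only bound your tools give for the second piece is $n\var\le\gamma^{-2}n\,\E(h^2\Delta^2)$, which need not vanish.

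\paragraph{What is needed, and how the paper gets it.} The missing ingredient is the uniform conditional bound $|\E[\{Z_i-\pi_n(X_i)\}\{Z_j-\pi_n(X_j)\}\mid X_i,X_j]|\le C/n$ a.s., which is Lemma~\ref{lem:remainder_var_multi}. Conditioning on $(X_i,X_j)$ is what lets the bound factor against $|g(X_i)g(X_j)|$. It then gives an off-diagonal contribution of at most $C\,\E\{g(X)^2\}=o_\bP(1)$.

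The paper proves the bound by conditioning on the queue--period cells $C^n$, in four pieces:
\begin{itemize}
\item \emph{Within a cell.} FIFO service is sampling without replacement, so the conditional covariance is at most $1/\{4(N_{c,n}-1)\}$, with $\E\{1/(N_{c,n}-1)\mid C_i=C_j=c\}\le 1/\{(n-1)\lambda_{\min}\}$.
\item \emph{Across cells.} The treated fractions $T_{c,n}/N_{c,n}$ have bounded differences $(L+1)/N_{c,n}$. This is because altering one unit's $(Q,A)$ changes every treated cell count by at most a constant $L(K,\tau)$ (Lemma~\ref{lem:bounded_treatment_count}). Efron--Stein then gives their variance as $O(n^{-1})$.
\item \emph{Influence of $C_j$.} An $O(n^{-1})$ bound controls how much $C_j$ can shift the expected treated fraction of $C_i$'s cell.
\item \emph{Mean mismatch.} The term $\E(Z_i\mid X_i,X_j)-\pi_n(X_i)$ arises because conditioning on $X_j$ shifts $j$'s queue distribution. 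It is $O(n^{-1})$, so its product term is $O(n^{-2})$.
\end{itemize}

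Your closing remark about reducing to one pair and using the $O(n^{-1/2})$ cutoff fluctuations points in this direction. But it is the heart of the proof, not an optional sharpening. As sketched, it does not produce a bound conditional on $(X_i,X_j)$. It also does not handle the direct influence of units $i$ and $j$ on the cutoffs, or the within-cell arrival-order dependence.
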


This result has two implications. First, it characterizes when we can analyze the data from the queue-based experiment \textit{as if} the treatment assignments were iid and still obtain valid confidence intervals based on asymptotic normality (Appendix \ref{sec:analysis}). Second, the variance above is exactly the semiparametric efficiency bound for estimating the ATE from iid data \citep{hahn1998role, imbens2009recent}. Thus, the same quantity gives the accuracy with which treatment effects can be estimated even when treatment assignments are correlated via the queuing process. This will allow us to use the efficiency bound to instantiate an objective for optimizing over experimental designs.

The key intuition behind Theorem \ref{thm:asymp_var_DR} is that the dependence induced queuing systems, though complex, is structured. There are two sources of randomness: the queue composition and the arrival order within partially served queue-period cells. Under our assumptions, queue composition is stable in the sense that changing one other unit's queue changes each queue count by at most one, and treated queue counts by at most a constant. Conditional on the realized design variables, treatment is deterministic outside partially served cells, while treatment within such cells behaves like sampling without replacement from a cell of size order $n$. Thus pairwise treatment dependence is at most order $n^{-1}$. Importantly, this dependence does not affect the leading oracle term: after decomposing the DR estimator into an average of oracle influence functions plus a nuisance-estimation remainder, the dependence enters only through the nuisance remainder, where it is multiplied by nuisance estimation errors. Because $\widehat\mu_z$ are consistent, the remainder contributes $o_\bP(1)$ to the scaled variance, leaving the usual influence-function variance.

\paragraph{Estimation with endogenous arrivals:} we now show an analogous result holds for the case of endogenous arrivals, when randomization at the queue level provides an instrumental variable for the treatment. As discussed above, the definition and interpretation of treatment effects is more complex in this setting. The fully nonparametric characterization of the IV estimand show that it is a weighted combination of LATEs where the weights depend themselves on the choice of instrument (i.e., the policy $\theta$ we are now optimizing over). To provide a tractable objective for the design stage, we suggest optimizing with respect to estimation efficiency for the treatment effect in the partially linear IV (PLIV) model, which provides a natural counterpart to estimating a homogeneous average treatment effect. 
We show an analogous result to Theorem \ref{thm:asymp_var_DR} holds for an estimator  obtained by solving an empirical moment equation induced by the instrument in Theorem \ref{thm:pliv_instrument}:
\[
    \widehat\psi_Z^\star
    \equiv
    \frac{\Pn\left[\frac{r_n(X,Q)}{\widehat\sigma(X)}\{Y-\widehat m(X)\}\right]}
         {\Pn\left[\frac{r_n(X,Q)}{\widehat\sigma(X)}\{Z-\pi_n(X;\theta,b_n)\}\right]}, \qquad m(x) \equiv \E(Y \mid X=x).
\]
Theorem~\ref{thm:asymp_var_plm} shows that, under our design, the asymptotic variance of $\widehat\psi_Z$ matches the usual efficiency bound expression evaluated at the limiting instrument $\alpha_Q(\beta,p)-\pi(X;\theta)$ \citep{chamberlain1992efficiency}.

\begin{theorem}[Asymptotic variance of $\widehat\psi_Z^\star$]\label{thm:asymp_var_plm}
Suppose Assumptions \ref{assm:fixed_queue}--\ref{assm:arrivals_indep_covariates} and the model \eqref{eq:plm_itt} hold. Further, assume $\widehat \sigma$, $\widehat m$ are constructed from an independent sample $\mathcal D$ such that $\|\widehat\sigma - \sigma\|_{L_2(\bP)} = o_\bP(1)$ and $\|\widehat m-m\|_{L_2(\bP)} = o_\bP(1)$. If $\E\left[\{\alpha_Q(\beta,p)-\pi(X;\theta)\}^2\right] \geq \gamma >0$, then conditional on $\mathcal D$,
\[
    \lim_{n\to\infty} n\var(\widehat\psi_Z^\star)
    = \left(\E\left[\frac{\{\alpha_Q(\beta,p)-\pi(X;\theta)\}^2}{\sigma(X)}\right]\right)^{-1}.
\]
\end{theorem}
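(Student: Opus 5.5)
The plan is to mirror the argument for Theorem~\ref{thm:asymp_var_DR}. We linearize the ratio estimator around its probability limit, isolate an oracle numerator that is an average of iid-like terms, and then control the dependence induced by the queue only through remainders that are multiplied by nuisance errors.

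Write $R_i \equiv r_n(X_i,Q_i)$, $\widehat D \equiv \Pn[\{R/\widehat\sigma(X)\}\{Z-\pi_n(X)\}]$ and $\widehat N \equiv \Pn[\{R/\widehat\sigma(X)\}\{Y-\widehat m(X)\}]$. Using model \eqref{eq:plm_itt}, $Y - m(X) = \psi_Z\{Z-\pi_n(X)\} + U + (\text{terms in } X)$. The identity $\E(Z\mid X)=\pi_n$ gives $m(X) = \psi_Z\pi_n(X) + g(X)$, so
\[
\widehat\psi_Z^\star - \psi_Z = \frac{\Pn[\{R/\widehat\sigma\}U] + \Pn[\{R/\widehat\sigma\}\{m-\widehat m\}]}{\widehat D}.
\]
The proof then has three steps.

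\emph{Step 1 (denominator).} We show $\widehat D \to_p \E[\{\alpha_Q-\pi(X;\theta)\}^2/\sigma(X)] \equiv V^{-1}$, which is bounded below by a positive constant thanks to the assumed $\gamma$ and boundedness of $\sigma$. Theorem~\ref{thm:propensity} and the analogous convergence $\tilde\pi_n(x,q)\to\alpha_q(\beta,p)$, obtained in its proof, give $R\to\alpha_Q-\pi(X;\theta)$. The conditional mean of $Z$ given $(X,Q)$ is $\tilde\pi_n$, so $\E\widehat D\to V^{-1}$ up to $\|\widehat\sigma-\sigma\|$ errors. For $\var(\widehat D)\to 0$, we reuse the covariance bound from Theorem~\ref{thm:asymp_var_DR}: pairwise covariances of treatment indicators are $O(n^{-1})$. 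Summing $n^2$ such covariances and dividing by $n^2$ gives $O(1/n)$.

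\emph{Step 2 (oracle numerator).} The term $\Pn[\{R/\sigma\}U]$ has mean zero since $\E(U\mid X,Q)=0$. Its variance is $n^{-1}\E[R^2\sigma^{-2}U^2]$ plus cross terms $\E[R_iR_jU_iU_j/(\sigma_i\sigma_j)]$. The cross terms vanish because $R_i$ depends only on $(X_i,Q_i)$, the units are iid in $(X,Q,U)$, and $U_i$ has conditional mean zero. Importantly $R$ is a deterministic function of $(X,Q)$, not of $Z$, so the queue dependence never enters. Conditioning on $X$ gives $n\var\to \E[\{\alpha_Q-\pi\}^2\sigma^{-1}] = V^{-1}$.

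\emph{Step 3 (remainders).} The term $\Pn[R(\sigma^{-1}-\widehat\sigma^{-1})U]$ and the term $\Pn[(R/\widehat\sigma)(m-\widehat m)]$ are, conditional on $\mathcal D$, averages of iid functions of $(X,Q,U)$. The second has mean $\E[\E(R\mid X)(m-\widehat m)/\widehat\sigma]=0$ because $\E(R\mid X)=0$ by construction of the recentering. Their scaled variances are therefore bounded by $\|\widehat\sigma-\sigma\|^2$ and $\|\widehat m-m\|^2$ times constants, which is $o_\bP(1)$; this step needs $\widehat\sigma$ bounded away from zero.

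Combining the steps via a Slutsky/delta argument on the ratio yields $n\var\to V^{-1}\cdot V^2 = V$. The main obstacle is upgrading convergence in probability of the ratio to convergence of $n\var$, which requires uniform integrability. I would first try truncating or assuming the denominator is bounded away from zero on a high-probability event, together with bounded $R/\widehat\sigma$. A second issue is verifying that $\tilde\pi_n\to\alpha_q$, not just the marginal, follows from the fluid-limit argument behind Theorem~\ref{thm:propensity}.
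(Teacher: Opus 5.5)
Your decomposition and Steps 2--3 follow the paper's argument. For a bounded weight $f(X,Q)$ with $\E\{f(X,Q)\mid X\}=0$, the paper writes $\Pn[f\{Y-\widehat m(X)\}]=\psi_Z\Pn[f\{Z-\pi_n(X)\}]+\Pn[fU]-\Pn[f\{\widehat m(X)-m(X)\}]$. It gets $n\var(\Pn[fU])=\E(f^2U^2)$ exactly, because $f$ is $(X,Q)$-measurable and $(X,Q,U)$ is iid, and it controls the $\widehat m$ term conditionally on $\mathcal D$. Your explicit treatment of the $\widehat\sigma$ substitution is, if anything, more careful than the paper's one-line remark.

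The step that fails as written is Step~1. The bound you want to reuse (Lemma~\ref{lem:remainder_var_multi}, from the proof of Theorem~\ref{thm:asymp_var_DR}) conditions only on $(X_i,X_j)$ and centers at $\pi_n(X)$. That suffices there because the weights $g(X)$ depend on $X$ alone. In $\widehat D$ the weights $f_i=R_i/\widehat\sigma(X_i)$ depend on $Q_i$, which is correlated with $Z_i$, so you cannot pull $f_if_j$ out of an $(X_i,X_j)$-conditional expectation. Conditioning on $(X_i,Q_i,X_j,Q_j)$ instead does not help directly: the cross moment of $Z-\pi_n(X)$ is then approximately $R_iR_j$, which is of order one, not $O(n^{-1})$.

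Your claim that the relevant pairwise covariances are $O(n^{-1})$ is true, but you must first split $Z_i-\pi_n(X_i)=\{Z_i-\tilde\pi_n(X_i,Q_i)\}+R_i$, as the paper does. The second piece gives the iid average $\Pn[fR]$, handled by the law of large numbers and Theorem~\ref{thm:propensity}. The first needs the $(X,Q)$-conditional bound $|\E[\{Z_i-\tilde\pi_n(X_i,Q_i)\}\{Z_j-\tilde\pi_n(X_j,Q_j)\}\mid X_i,Q_i,X_j,Q_j]|\le C/n$ of Lemma~\ref{lem:centered_z_cov}. That bound uses the same ingredients as Lemma~\ref{lem:remainder_var_multi}: within-cell sampling without replacement, Efron--Stein with Lemma~\ref{lem:bounded_treatment_count}, and the $O(n^{-1})$ effect of $Q_j$ on $\E(Z_i\mid\cdot)$. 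It is nonetheless a distinct statement that your proof must state and establish; the paper only asserts it.

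Your closing concern about uniform integrability is legitimate, and the paper does not resolve it either. After showing that the inverse denominator converges in probability and the numerator is $O_\bP(n^{-1/2})$, the paper passes directly from the $o_\bP(n^{-1/2})$ linearization of the ratio to the limit of $n\var$. Restricting to a high-probability event on which $\widehat D$ is bounded away from zero controls the linearization, but not the second moment off that event. A ratio whose denominator can come arbitrarily close to zero need not have finite variance at all. What the argument actually delivers is the variance of the limiting normal law, as in Corollary~\ref{cor:asymp_normal_psi_Z}. A literal $\lim n\var$ statement would require modifying the estimator, e.g.\ by trimming the denominator.

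Your other worry is not an obstacle. The proof of Theorem~\ref{thm:propensity} establishes precisely $\tilde\pi_n(x,k)\to\alpha_k(\beta,p)$ for every queue with $p_k>0$, via $T_{k,n}/N_{k,n}\to\alpha_k(\beta,p)$ from Lemma~\ref{lem:eventual_treated_mass}. Bounded convergence then gives $R\to\alpha_Q(\beta,p)-\pi(X;\theta)$ in $L_2$.
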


\paragraph{Multiobjective optimization:} In either setting (exogenous or endogenous arrivals), our goal now is to select the queue assignment probabilities $\theta$ to optimally trade off between the participant's utility $u(X)$ and statistical power to estimate treatment effects.
We use $\mathbb{V}_\psi(\theta, \eta)$ to denote the asymptotic variance of an estimate of treatment effect $\psi$ given queue assignment probabilities $\theta$ and nuisance functions $\eta$. $\eta$ comprises terms in the objective that are not knowable ahead of running the experiment. For example, for the ATE estimand above, $\eta = (\var(Y|Z = 1, \cdot), \var(Y|Z = 0, \cdot))$ since the conditional variances of the outcomes typically cannot be estimated precisely before running an experiment. Similarly, for the PLIV case, $\eta = \sigma(\cdot)$. We stipulate that such nuisances lie within an uncertainty set $\mathcal{U}$ and solve a robust optimization problem of the form 
\begin{align*}
    \min_{\theta: \mathcal{X} \to \Delta_k} \max_{\eta \in \mathcal{U}}  \mathbb{V}_\psi(\theta, \eta)  \text{ s.t. } 
    \,\, \E[\theta(X)] = p; \,\,\E[\pi(X; \theta)u(X)] \geq c,
\end{align*}
where the 
first requires that the queue assignment probabilities $\theta$ satisfy the budget constraint, and the second imposes the expected utility of the resulting treatment assignments be at least $c$. By adjusting the value of $c$, we can trace out the entire Pareto curve between treatment effect variance and expected utilities, allowing the policymaker to select a point that represents their desired balance. Note that since $\pi$ is affine in $\theta$, all of the constraints are linear with respect to $\theta$. 

As a general approach to constructing the uncertainty set $\mathcal{U}$, we assume that the nuisances are bounded, i.e., $\var(Y|Z,X) \leq \delta$ for some constant $\delta$ for the ATE or $\sigma(X) \leq \delta$ for the PLIV model. Generically, we represent this as an uncertainty set of the form $\mathcal{U} = \{\eta: ||\eta||_\infty \leq \delta\}$. If the policymaker happens to have additional information (e.g., a tighter bound on one of the conditional variance functions), our techniques can be naturally extended. For uncertainty sets that have such box-constraint style structure, we obtain tractable simplifications of the inner max:
\begin{theorem}
    For the ATE case, minimizing $\max_{\eta \in \mathcal{U}} \mathbb{V}_{\ATE}(\theta, \eta)$ is equivalent to minimizing $\E\left\{\frac{1}{\pi(X;\theta)} + \frac{1}{1 - \pi(X;\theta)}\right\}$, which is convex in $\theta$. For the PLIV case, minimizing $\max_{\eta \in \mathcal{U}} \mathbb{V}_{\mathrm{PLIV}}(\theta, \eta)$ is equivalent to maximizing $\E\left[\{\alpha_Q(\beta,p)-\pi(X;\theta)\}^2\right]$, which is concave in $\theta$. 
\end{theorem}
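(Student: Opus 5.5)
The plan is to compute the inner maximum in closed form in each case, using the variance formulas from Theorem \ref{thm:asymp_var_DR} and Theorem \ref{thm:asymp_var_plm}. Both formulas are monotone in the nuisance functions. The worst case over the box $\mathcal{U}=\{\eta:\|\eta\|_\infty\le\delta\}$ should therefore be attained at the constant nuisance $\eta\equiv\delta$. Convexity or concavity will then follow from the affine structure of $\theta\mapsto\pi(\cdot;\theta)$ established in Theorem \ref{thm:propensity}.

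\emph{ATE case.} By Theorem \ref{thm:asymp_var_DR},
\[
\mathbb{V}_{\ATE}(\theta,\eta)=\E\Big\{\tfrac{\var(Y\mid Z=1,X)}{\pi(X;\theta)}+\tfrac{\var(Y\mid Z=0,X)}{1-\pi(X;\theta)}\Big\}+\var\big(\mu_1(X)-\mu_0(X)\big).
\]
The last term does not depend on $\theta$, because $\mu_z(x)=\E(Y^z\mid X=x)$ under the conditional randomization of Theorem \ref{thm:identification_exo}. I will treat it as part of the nuisance but note that it is constant in $\theta$. Since $\pi\in(0,1)$ by positivity, the integrand is pointwise nondecreasing in each conditional variance. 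Hence the supremum over $\mathcal{U}$ is attained at $\var(Y\mid Z=z,X)\equiv\delta$, giving
\[
\delta\,\E\{1/\pi(X;\theta)+1/(1-\pi(X;\theta))\}+\text{const}.
\]
Minimizing this over the feasible $\theta$ is equivalent to minimizing the stated objective. For convexity, the map $s\mapsto 1/s+1/(1-s)$ is convex on $(0,1)$. By Theorem \ref{thm:propensity}, $\pi(x;\theta)=\sum_k\alpha_k(\beta,p)\theta_k(x)$ is affine in $\theta$ for fixed $p$, and $p$ is fixed by the constraint $\E\theta(X)=p$. A convex function composed with an affine map is convex, and taking expectations preserves convexity.

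\emph{PLIV case.} By Theorem \ref{thm:asymp_var_plm}, $\mathbb{V}_{\mathrm{PLIV}}(\theta,\sigma)=\big(\E[\{\alpha_Q-\pi(X;\theta)\}^2/\sigma(X)]\big)^{-1}$, which is increasing in $\sigma$ pointwise. The maximum over $\sigma\le\delta$ is therefore at $\sigma\equiv\delta$, giving $\delta\,(\E[\{\alpha_Q-\pi\}^2])^{-1}$. Minimizing this is equivalent to maximizing $\E[\{\alpha_Q-\pi(X;\theta)\}^2]$.

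The main obstacle is that this expectation is taken over $Q\sim\theta(X)$, so $\theta$ enters both the distribution and the integrand. I will condition on $X$ and write the expectation as a conditional variance:
\[
\E[\{\alpha_Q-\pi\}^2\mid X]=\sum_k\theta_k(X)\alpha_k^2-\Big(\sum_k\theta_k(X)\alpha_k\Big)^2.
\]
This uses that $\pi(X;\theta)=\sum_k\theta_k\alpha_k$ is exactly the conditional mean of $\alpha_Q$. The first term is linear in $\theta$, and the second is minus the square of a linear function, which is concave. The sum is therefore concave pointwise in $\theta(x)$, and it stays concave after taking the expectation over $X$. Throughout, $p$, and hence $\alpha$, is held fixed by the constraint $\E\theta(X)=p$.
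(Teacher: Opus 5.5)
Your proposal is correct and follows essentially the same route as the paper's argument in Appendix~\ref{sec:opt_detail}. There, the worst case over the box $\mathcal{U}$ is the constant nuisance $\eta\equiv\delta$; convexity in the ATE case comes from composing $s\mapsto 1/s+1/(1-s)$ with the affine map $\theta\mapsto\pi(\cdot;\theta)$; and concavity in the PLIV case comes from rewriting $\E[\{\alpha_Q-\pi\}^2\mid X]$ as the conditional variance $\sum_k\theta_k\alpha_k^2-(\sum_k\theta_k\alpha_k)^2$. Two details the paper leaves implicit are handled more carefully in your version: the monotonicity justification of the inner maximum, and the point that $\alpha_k(\beta,p)$ stays fixed only because the constraint $\E\{\theta(X)\}=p$ pins down $p$.
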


One final complication is that since $\pi$ is affine in $\theta$, there may not be a unique solution to the above optimization problems. We resolve this by augmenting the objective with a regularizer $\kappa R(\theta)$ where $R$ is strictly convex and $\kappa > 0$. Experimentally, we take $R$ to be the (negative) entropy to tie-break in favor of designs with greater randomization at the queue level. The final result is a strictly convex minimization (or strictly concave maximization) with respect to $\theta$.

To compute the resulting policies in practice, we work with the dual formulation of the constrained population problem that yields a tractable finite-sample analogue. We defer the explicit sample optimization problem and additional details to Appendix~\ref{sec:opt_detail}.

\section{Numerical experiments}\label{sec:experiments}

\begin{figure}
    \centering
    \includegraphics[width=\textwidth]{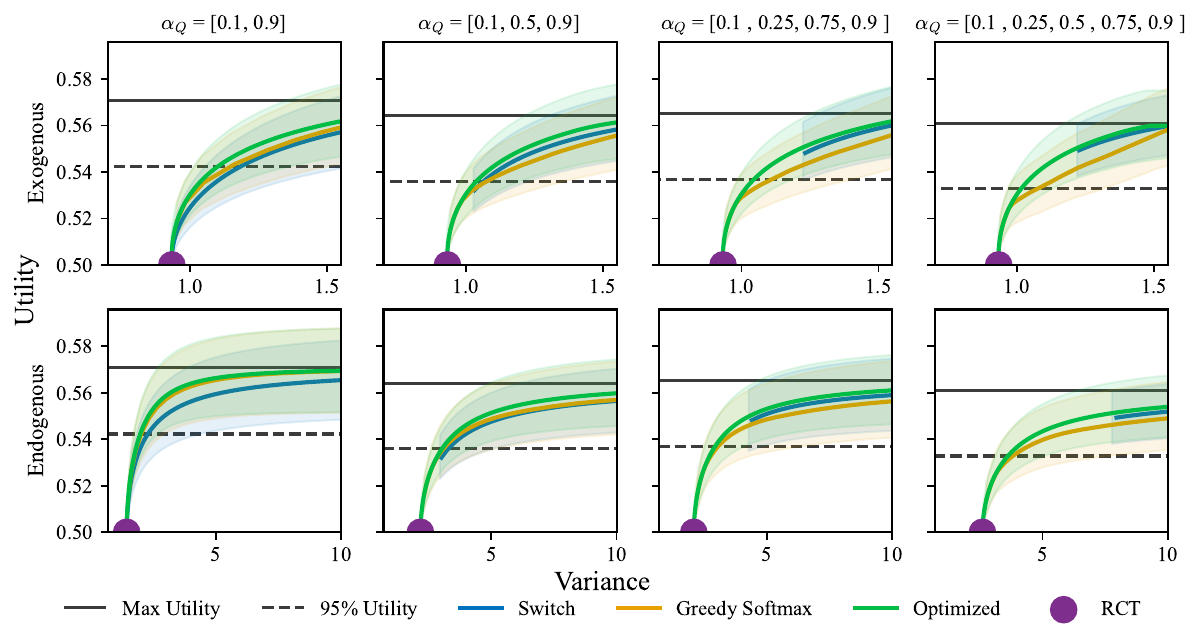}
    \caption{Pareto frontier of variance and utility under our optimized designs (green) versus two heuristic designs (orange, blue) and an RCT baseline (purple) under exogenous arrivals (top) and endogenous arrivals (bottom) across different queue configurations with a 50\% treatment budget. Queues are equally sized but vary in number, with treatment probabilities at the top of each column. The optimized designs achieve higher efficiency and utility across all configurations compared to the heuristic methods. Generally, exogenous designs are more efficient than endogenous designs.}
    \label{fig::sample_efficiency}
\end{figure}

We evaluate the sample efficiency and robustness to arrival endogeneity of our proposed designs using data from a public housing allocation program in a large U.S. county. 
The program assigns individuals to priority queues using a predictive risk score of homelessness, $h(X)$, built from administrative records, and allocates housing units using exactly the kind of first-in, first-out assignment mechanism studied here. The dataset contains 2,317 service-eligible individuals and spans the last 10 years.
Our outcome of interest is an indicator of whether an individual experiences homelessness in the six months following unit assignment. 
Throughout, we set utility $u(X) = h(X)$ to reflect a preference for assigning units to those with the highest predicted risk of homelessness.
As only outcomes under the assigned treatment are observed, we use the predictive risk scores to simulate potential outcomes with a homogeneous treatment effect.
We provide further details on the data and synthetic data-generating process in Appendix~\ref{app:data}.

\subsection{Sample Efficiency}

To evaluate the efficiency of our proposed estimators, we consider equally-sized queues under a 50\% treatment budget across various queue treatment probabilities. We include results under alternative queue configurations and budgets in Appendix~\ref{app:sample_efficiency}.
We then solve the optimization problem across a range of utility constraints for both the exogenous and endogenous arrivals designs.
Figure~\ref{fig::sample_efficiency} shows the Pareto frontier between utility and asymptotic variances in Theorems~\ref{thm:asymp_var_DR} and \ref{thm:asymp_var_plm}.
We compute 95\% confidence bands using the multiplier bootstrap with 10,000 replicates \citep{kennedyNonparametricCausalEffects2019, vandervaartWeakConvergenceEmpirical1996}. 
We compare our proposed designs to two heuristics for constructing queue assignment probabilities: 
(1) \textit{Switch} assigns individuals to queues deterministically via $u(X)$, then randomizes each assignment via switching probabilities to adjacent queues weighted by relative queue size, (2) \textit{Greedy Softmax} sequentially constructs assignment probabilities from highest- to lowest-priority queues using a softmax-style score over remaining utility.
In order to enforce queue proportions, Greedy Softmax requires access to the full sample.
Both methods include a parameter governing assignment determinism, which we vary to compute the Pareto curve.
Additional implementation details are provided Appendix~\ref{app:sample_efficiency}.

Figure~\ref{fig::sample_efficiency} reveals a favorable concave trade-off between utility and statistical efficiency (variance) under our proposed designs.
Both designs achieve high utility at modest cost to variance relative to an RCT.
The endogenous design is generally higher variance than the exogenous design,  
and becomes increasingly so as the number of queues increases.
Our optimization approach generally outperforms both heuristics, achieving consistently high statistical efficiency and utility across all queue configurations. 
The Switch heuristic's restriction to neighboring queues means that it is only capable of generating a portion of the full tradeoff curve, particularly when the number of queues is large. For a smaller number of queues, its performance is dominated by the optimized design. On the other hand, the performance of Greedy Softmax deteriorates as more queues are added.
As the number of queues increases, more individuals fall near queue boundaries, and the greedy allocation spreads their residual mass across an increasing number of lower queues rather than concentrating mass in the adjacent queue.
The optimized design avoids both failure modes by jointly solving for assignment probabilities across all queues simultaneously.

\subsection{Robustness to Arrival Endogeneity} 
\begin{wrapfigure}{r}{0.45\linewidth}
  \includegraphics[width=\linewidth]{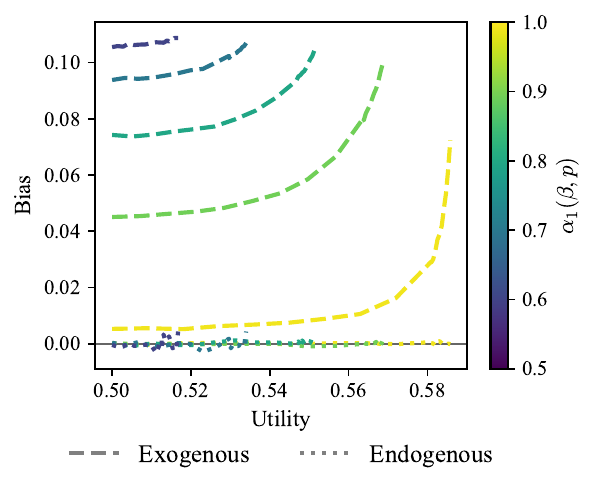}
  \caption{Bias under arrival endogeneity for two queues with varying treatment probabilities. The endogenous design (dotted) remains unbiased; the exogenous design (dashed) shows substantial bias at high utility levels and when treatment probability in the highest-priority queue is low.}
  \label{fig:bias}
\end{wrapfigure}
Although generally more sample efficient, the design under exogenous arrivals may lead to biased estimates if arrival times are not conditionally independent of potential outcomes. 
To demonstrate, we introduce a synthetic confounder $U$ to induce arrival endogeneity. 
We then estimate $\widehat{\psi}_Z$ and $\widehat{\psi}^{\DR}$ using the known synthetic values and calculate the bias of each estimator, see Appendix~\ref{app:bias} for additional implementation details.
Figure~\ref{fig:bias} shows the average bias of both designs across utility levels for two equally-sized queues under a 50\% budget and varying treatment probabilities. 
For each queue configuration, we solve the optimization problem to obtain queue assignment policies under both designs across a range of utility constraints, and compute bias over 10,000 simulations.
As expected, the endogenous estimator (dotted) is unbiased across all levels of utility.
The exogenous estimator (dashed) exhibits substantial bias at high levels of utility and low treatment probabilities in the highest-priority queue.
As the treatment probability of the highest-priority queue $\alpha_1$ increases, treatment becomes mostly determined  by queue assignment rather than arrival order, thus mitigating arrival endogeneity.

\section{Conclusion}\label{sec:conclusion}
In this paper, we introduced an experimental design framework that integrates randomized queue assignment with the priority-based allocation rules that are standard in public service settings. We first characterized causal identification under two distinct arrival regimes and developed optimized queue-assignment policies that effectively balance statistical efficiency against the utility of prioritizing high-need applicants. Our current work focuses on whether treatment is received at all before the allocation horizon ends. While this is natural when the primary scientific question concerns overall access to treatment, the timing of the intervention may itself be of substantive importance. In such cases, it would be valuable to extend our framework to accommodate time-indexed treatment probabilities or the full distribution of treatment timing. 


Overall, our results show that incorporating randomization into priority-queue allocation schemes offers policymakers a means to learn about program effectiveness while limiting sacrifices in the targeting of the program and accommodating the operational constraints of dynamic settings. While the right balance between objectives is setting-specific, our hope is that expanding the Pareto frontier opens up the possibility for more widespread evaluation and improvement of critical services. 

\paragraph{Acknowledgment:} This material is based upon work supported by the AI Research Institutes Program funded by the National Science Foundation under AI Institute for Societal Decision Making (AI-SDM), Award No. 2229881.

\bibliography{references}
\bibliographystyle{apalike}

\appendix

\section{Optimization details}\label{sec:opt_detail}

Recall that we let $\eta \in \mathcal U$ denote the set of unknown nuisance functions at the time of optimization. Then, our optimization problem is 
\begin{align}
\label{app:eq:minmaxefficiencyobjective}
    \min_{\theta: \mathcal{X} \to \Delta_k} \max_{\eta \in \mathcal{U}}  \mathbb{V}_\psi(\theta, \eta)
    \quad \text{ s.t. } 
    \E[\theta(X)] = p; \,\,\E[\pi(X; \theta)u(X)] \geq c.
\end{align}
It is worth noting that the above optimization problem can be easily extended to include other constraints, such as bounding the maximum difference in treatment probabilities between demographic groups to satisfy fairness objectives. 
Formally, following \citet{wilder2025learning}, if each additional constraint $j$ is of the form $\E\{g_j(\pi(X; \theta), X)\} \leq c_j$, as long as the constraint is convex in $\theta$ it may be added to the optimization problem in (\ref{app:eq:minmaxefficiencyobjective}).

In the sections that follow, we provide the complete sample optimization objective under both the exogenous and arrival settings.
This objective is generally not strictly convex in $\theta$, and so we conclude by discussing the choice of strictly convex regularization terms $R(\theta)$ which may be added to the objective to obtain unique solutions.

\subsection{Optimization under exogenous arrivals}\label{sec:opt_detail_exo}

Recall the asymptotic variance from Theorem~\ref{thm:asymp_var_DR}:
\begin{align*}
\lim_{n \to \infty} n \var(\widehat \psi_{\ATE}^{\DR})= \mathbb E \left\{\frac{\var(Y \mid Z = 1, X)}{\pi(X; \theta)} +\frac{\var(Y \mid Z = 0, X)}{1 - \pi(X; \theta)}\right\} + \var(\mu_1(X) - \mu_0(X)). 
\end{align*}

Under exogenous arrivals, we assume that $\eta = (\var(Y \mid Z = 1, X), \var(Y \mid Z =0, X)) \equiv (\eta_1,\eta_0)$.  It may be the case that the policymaker has estimates and/or reasonable assumptions about these nuisances, for example using historical data to estimate $\var(Y \mid Z =0, X)$ and assuming $\var(Y \mid Z = 1, X)$ is bounded above by some function $h(\var(Y \mid Z =0, X), X)$.
In this case, the policymaker can plug-in these values directly into the optimization problem.
If the policymaker does not provide these values, we assume that the uncertainty set is of the form $\mathcal U = \{ \eta : \|\eta\|_{\infty} \leq \delta\}$, i.e. that all nuisances are bounded above.
In this case, the objective in (\ref{app:eq:minmaxefficiencyobjective}) under the limiting asymptotic variance in Theorem~\ref{thm:asymp_var_DR} may be rewritten as:
\begin{align*}
&\min_{\theta: \mathcal X \to \Delta_k} \max_{\eta \in \mathcal U} \mathbb E \left\{\frac{\eta_1}{\pi(X; \theta)} + \frac{\eta_0}{1 -\pi(X; \theta)}\right\} + \var(\mu_1(X) - \mu_0(X)) \\ 
&= \min_{\theta: \mathcal X \to \Delta_k} \delta \mathbb E \left\{\frac{1}{\pi(X; \theta)} + \frac{1}{1 -\pi(X; \theta)}\right\}.
\end{align*}
And hence, the full optimization problem simplifies to
\begin{align*}
&\min_{\theta: \mathcal X \to \Delta_k} \mathbb E \left\{\frac{1}{\pi(X; \theta)} + \frac{1}{1 - \pi(X; \theta)}\right\}  \\
&\pi(X; \theta) = \sum_{k = 1}^{K} \alpha_k(\beta, p) \theta_k(X), \\ 
&\mathbb E \left\{\theta(X)\right\} = p, \\ 
& \mathbb E \left\{\pi(X)u(X)\right\} \geq c. 
\end{align*}

As in \citet{wilder2025learning}, we reformulate the above optimization problem in terms of the dual to yield a separable inner objective:

\begin{align*}
\max_{\lambda \geq 0} \mathbb E \left\{\min_{\theta(X)\to \Delta_k} \frac{1}{\pi(X; \theta)} + \frac{1}{1 - \pi(X; \theta)}+ \sum_{j  = 1}^{J} \lambda_j(g(\theta(X), X) - c_j)\right\},
\end{align*}
where $\lambda_j$ is the dual variable associated with constraint $j$. 
The program now may be solved using the sample analogue
\begin{align*}
\max_{\lambda \geq 0} \frac{1}{n} \sum_{i  = 1}^{n} \left\{\min_{\theta(X_i ) \to \Delta_k} \frac{1}{\pi(X_i; \theta)} + \frac{1}{1 - \pi(X_i; \theta)}  + \sum_{j  = 1}^{J} \lambda_j(g(\theta(X_i), X_i) - c_j)\right\}.
\end{align*}
As all constraints are affine and the objective is convex in $\theta$, the problem is convex and so solutions may be obtained using standard solvers.

\subsection{Optimization under endogenous arrivals}\label{sec:opt_detail_endo}

Analogously, we recall the asymptotic variance under Theorem~\ref{thm:asymp_var_plm}:
\begin{align*}
\lim_{n \to \infty} n \var(\widehat \psi_Z) = \left(\mathbb E \left[\frac{\{\alpha_Q(\beta, p) - \pi(X, \theta)\}^{2}}{\sigma(X)}\right]\right)^{-1}.
\end{align*}
In the case of endogenous arrivals, $\eta = \sigma(X) = \mathbb E \left(U^{2} \mid X\right)$. 
We assume that the policymaker does not have access to this value beforehand, but as in the exogenous arrivals case that the uncertainty set is of the form $\mathcal U = \{ \eta : \|\eta \|_{\infty} \leq \delta)$.
And so, the objective in (\ref{app:eq:minmaxefficiencyobjective}) simplifies to
\begin{align*}
&\min_{\theta : \mathcal X \to \Delta_k} \max_{ \eta \in \mathcal U} \left(\mathbb E \left[\frac{\{\alpha_Q(\beta, p) - \pi(X, \theta)\}^{2}}{\eta}\right]\right)^{-1} =\min_{\theta : \mathcal X \to \Delta_k} \left(\frac{1}{\delta}\mathbb E \left[\{\alpha_Q(\beta, p) - \pi(X, \theta)\}^{2}\right]\right)^{-1} .
\end{align*}
Because $\delta$ is constant, the optimization problem under endogenous arrivals is thus
\begin{align*}
&\min_{\theta: \mathcal X \to \Delta_k} \frac{1}{\E[\{ \alpha_Q(\beta, p) - \pi(X, \theta)\}^{2}]} \\
&\pi(X; \theta) = \sum_{k = 1}^{K} \alpha_k(\beta, p) \theta_k(X), \\ 
&\mathbb E \left[\theta(X)\right] = p, \\ 
& \mathbb E \left[\pi(X)u(X)\right] \geq c. 
\end{align*}
Unlike the exogenous arrivals case, the objective is not separable when formulated as a minimization problem. 
Notice that we may consider an equivalent objective of the form
\begin{align*}
\max_{\theta : \mathcal X \to \Delta_k} \mathbb E \left[\{\alpha_Q(\beta, p) - \pi(X; \theta)\}^{2}\right],
\end{align*}
which may be rewritten as
\begin{align*}
\max_{\theta : \mathcal X \to \Delta_k} \mathbb E \left[\left\{\sum_{k = 1}^{K}\alpha_k(\beta, p)^{2} \theta_k(X) - \left(\sum_{k = 1}^{K} \alpha_k(\beta, p) \theta_k(X)\right)^{2}\right\}\right].
\end{align*}
The objective is thus to maximize the expected conditional variance induced by the randomization policy $\theta$, which is concave in $\theta$.
We may now rewrite the objective in terms of the dual
\begin{align*}
\min_{\lambda \geq 0} \max_{\theta : \mathcal X \to \Delta_k} \mathbb E \left[\left\{\sum_{k = 1}^{K}\alpha_k(\beta, p)^{2} \theta_k(X) - \left(\sum_{k = 1}^{K} \alpha_k(\beta, p)\theta_k(X)\right)^{2}\right\} - \sum^{J}_{j = 1} \lambda_j(g(\theta(X), X) - c_j)\right],
\end{align*}
where we once again use $\lambda_j$ to denote the dual variable associated with constraint $j$.
This is now separable, and so we can solve the problem using the sample population  
\begin{align*}
\min_{\lambda \geq 0} \frac{1}{n} \sum^{n}_{i=1}\max_{\theta(X_i) \to \Delta_k} \left\{\sum_{k = 1}^{K}\alpha_k(\beta, p)^{2} \theta_k(X_i) - \left(\sum_{k = 1}^{K} \alpha_k(\beta, p)\theta_k(X_i)\right)^{2}\right\} - \sum^{J}_{j = 1} \lambda_j(g(\theta(X_i), X_i) - c_j).
\end{align*}
Solutions may be obtained using standard solvers as in the exogenous case.

\subsection{Regularization}

\begin{figure}[]
    \centering
    \includegraphics[width=\textwidth]{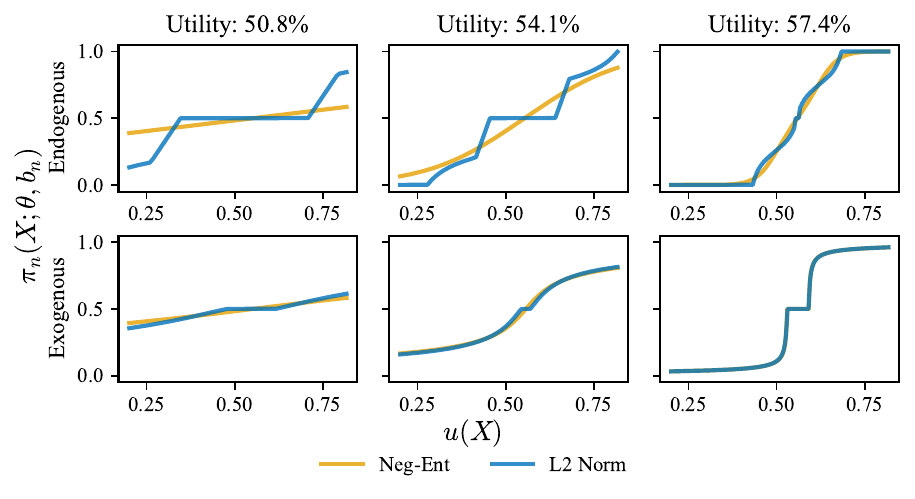}
    \caption{Effect of Negative-Entropy (orange) vs. L2 (blue) regularization on treatment probabilities $\pi_n(X; \theta, b_n)$ by individual utility $u(X)$ across varying utility constraints. L2-regularization consistently favors more deterministic solutions, assigning low treatment probability to low-utility individuals and high to high-utility, with the difference most pronounced in the endogenous setting at low levels of the utility constraint.}
    \label{app:fig:regularization}
\end{figure}

As $\pi$ is affine in $\theta$, neither objective is not strictly convex and thus there may be multiple valid solutions of $\theta$.  
To obtain a unique solution, we augment the objective with a strictly convex regularizer $R(\theta)$ that selects a single optimizer among potentially many designs that obtain the same value of the primary efficiency objective.
The choice of $R(\theta)$ can additionally encode a secondary intent; for example, a negative-entropy regularizer $R(\theta) = - \mathbb E \left[\sum_{k=1}^{K} \theta_k(X) \log(\theta_k(X))\right]$ favors higher-randomization queue assignments, while a quadratic regularizer such as $R(\theta) = \mathbb E \left[\| \theta(X) - p \|^{2}_{2}\right]$ promotes stability by discouraging deviations from the baseline proportions $p$. 
We demonstrate how the choice of regularization affects individual treatment probabilities in Figure~\ref{app:fig:regularization}.

\section{How should experiments be analyzed?}\label{sec:analysis}
As alluded in Section \ref{sec:design}, once the experiment is complete and the outcomes are observed, one can conduct standard statistical inference by constructing the Wald confidence interval $\widehat\psi \pm 1.96\sqrt{\widehat{\var}(\widehat\psi)}$ for $\psi \in \{\psi_{\ATE}, \psi_Z\}$ based on the following asymptotic normality results. Here $\widehat{\var}(\widehat\psi)$ denotes any consistent estimator of the asymptotic variance, such as the empirical variance of the estimated EIF divided by $n$.

\begin{theorem}[Asymptotic normality of DR estimator]\label{thm:asymp_normal_DR}
Assume the conditions of Theorem \ref{thm:asymp_var_DR} and additionally that there exists $m' < \infty$ such that $\E(|Y_1|^4 + |Y^0|^4) < m'$. Then conditional on $\mathcal{D}$,
\[
\sqrt{n}\big(\widehat\psi_{\ATE}^{\DR} - \psi_{\ATE}\big) \rightsquigarrow \cN\left(0, V(\theta)\right),
\]
where
\[
V(\theta) \equiv  \E\bigg\{\frac{\var(Y^1\mid X_i)}{\pi(X;\theta)} 
+ \frac{\var(Y^0\mid X_i)}{1-\pi(X;\theta)}\bigg\}
+ \var\big(\mu_1(X)-\mu_0(X)\big).
\]
\end{theorem}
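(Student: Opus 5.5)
We would prove Theorem \ref{thm:asymp_normal_DR} by reusing the decomposition behind Theorem \ref{thm:asymp_var_DR}. Write $\varphi(O;\pi_n)$ for the oracle influence function built from the true $\mu_z$ and the finite-$n$ propensity $\pi_n(X;\theta,b_n)$:
\[
\varphi = \mu_1(X)-\mu_0(X) + \frac{Z\{Y-\mu_1(X)\}}{\pi_n(X;\theta,b_n)} - \frac{(1-Z)\{Y-\mu_0(X)\}}{1-\pi_n(X;\theta,b_n)} - \psi_{\ATE}.
\]
We then decompose
\[
\sqrt n(\widehat\psi^{\DR}_{\ATE}-\psi_{\ATE}) = \sqrt n\,\Pn\varphi + \sqrt n\, R_n ,
\]
where $R_n$ collects the terms involving $\widehat\mu_z-\mu_z$. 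There are two steps. First, conditional on $\mathcal D$, we show $\sqrt n R_n=o_\bP(1)$. Second, we show $\sqrt n\,\Pn\varphi\rightsquigarrow\cN(0,V(\theta))$. Slutsky's theorem then finishes the proof.

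\emph{Remainder.} The remainder is
\[
R_n=\Pn\big[(\widehat\mu_1-\mu_1)(X)\{1-Z/\pi_n(X;\theta,b_n)\}\big] - \Pn\big[(\widehat\mu_0-\mu_0)(X)\{1-(1-Z)/(1-\pi_n(X;\theta,b_n))\}\big].
\]
Because the propensity is known, each summand has mean zero conditional on $\mathcal D$. By Theorem \ref{thm:identification_exo}, $Z$ is conditionally independent of the outcomes given $X$, and $\E(Z\mid X)=\pi_n$. The proof of Theorem \ref{thm:asymp_var_DR} already shows that $n\var(R_n\mid\mathcal D)$ is bounded by a constant times $\|\widehat\mu_z-\mu_z\|^2_{L_2}/\gamma^2$, plus cross terms of order $n\cdot n^{-1}$ times the same squared error. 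The $n^{-1}$ bound on pairwise treatment covariance makes these cross terms small. Chebyshev's inequality then gives $\sqrt nR_n=o_\bP(1)$. We take this step as essentially inherited from that theorem.

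\emph{Oracle CLT (main obstacle).} The $\varphi(O_i)$ are not independent, because the $Z_i$ are coupled through the queue. We split
\[
\varphi = \{\mu_1-\mu_0\}(X)-\psi_{\ATE} + \xi ,
\]
where $\xi$ contains the two residual terms, with $\varepsilon^z=Y^z-\mu_z(X)$. The first part is iid, so the ordinary CLT applies to it. For $\xi$, we condition on $\mathcal G=\sigma(\{A_i,X_i,Q_i\}_{i\le n},\text{arrival order})$, under which all $Z_i$ are determined. By exogeneity, and with $\varepsilon^z$ independent of $Z$ given $X$, the terms $\xi_i$ are conditionally independent given $\mathcal G$ and $(Z_i)$, with mean zero. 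More precisely, we condition on the full design $(A,X,Q,Z)$, and the $\varepsilon_i$ remain conditionally independent across $i$ with mean zero given $X_i$.

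A Lindeberg CLT given the design then yields asymptotic conditional variance
\[
\frac1n\sum_i\Big\{\frac{Z_i\var(Y^1\mid X_i)}{\pi_n(X_i)^2}+\frac{(1-Z_i)\var(Y^0\mid X_i)}{(1-\pi_n(X_i))^2}\Big\}.
\]
The Lindeberg condition holds by the fourth-moment bound and $\pi_n\in[\gamma,1-\gamma]$. It remains to show this average converges in probability to $\E\{\var(Y^1\mid X)/\pi+\var(Y^0\mid X)/(1-\pi)\}$. Its mean converges by Theorem \ref{thm:propensity}, dominated convergence, and positivity. Its variance is $O(1/n)$, by the same $O(n^{-1})$ pairwise-dependence bound on $(Z_i,Z_j)$ used for Theorem \ref{thm:asymp_var_DR}. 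This is the delicate point, and we would cite that structural lemma directly.

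Finally, we combine the two parts. The $\xi$-part has a conditional characteristic function converging to a constant Gaussian limit. The iid part depends only on $X$ and, after conditioning, on $Y$-means. The two parts are conditionally uncorrelated. So a characteristic-function argument (conditional CLT plus bounded convergence) gives joint convergence to a sum of independent normals with total variance $V(\theta)$. Replacing $\pi_n$ by its limit $\pi$ in $V$ uses the pointwise convergence established in Theorem \ref{thm:propensity}.
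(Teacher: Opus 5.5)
Your proposal is correct and follows essentially the same route as the paper: inherit $\sqrt n R_n=o_\bP(1)$ from the proof of Theorem \ref{thm:asymp_var_DR}, and split the oracle term into the iid part $\mu_1(X)-\mu_0(X)-\psi_{\ATE}$ and the residual part $\xi$. You then apply a conditional Lyapunov/Lindeberg CLT given the design (under which $Z^n$ is deterministic and the $\xi_i$ are independent), show the normalized conditional variance converges via Theorem \ref{thm:propensity} and the $O(n^{-1})$ bound of Lemma \ref{lem:remainder_var_multi}, and combine through characteristic functions. The only phrase to tighten is ``conditionally uncorrelated'': what actually drives the last step, as in the paper, is that the iid part $T$ is $\mathcal G$-measurable, so $\E e^{it(T+S)}=\E[e^{itT}\,\E(e^{itS}\mid\mathcal G)]$ for the $\xi$-part $S$, and the inner conditional characteristic function converges in probability to a deterministic Gaussian limit.
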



\begin{corollary}[Asymptotic normality of $\widehat{\psi}_Z^\star$]\label{cor:asymp_normal_psi_Z}
Assume the conditions of Theorem \ref{thm:asymp_var_plm} and additionally that there exists $m' < \infty$ such that $\E(U^4) < m'$. Then conditional on $\mathcal{D}$,
\[
    \sqrt{n}\bigl(\widehat{\psi}_Z^\star -\psi_Z\bigr) \rightsquigarrow \cN(0,V_Z(\theta)), \qquad V_Z(\theta) \equiv \left(\E\left[\frac{\{\alpha_Q(\beta,p)-\pi(X;\theta)\}^2}{\sigma(X)}\right]\right)^{-1}.
\]
\end{corollary}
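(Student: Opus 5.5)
The statement to prove is Corollary~\ref{cor:asymp_normal_psi_Z}. The plan is to write $\widehat\psi_Z^\star-\psi_Z$ as a ratio. The numerator is an average of approximately mean-zero terms with only weak dependence; the denominator converges in probability to a nonzero constant. The result then follows from a CLT for the numerator together with Slutsky's lemma, and the variance is the one already identified in Theorem~\ref{thm:asymp_var_plm}.

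\textbf{Step 1 (linearization).} Substituting the model \eqref{eq:plm_itt}, we have $Y-\widehat m(X)=\psi_Z\{Z-\pi_n(X)\}+\{g(X)+\psi_Z\pi_n(X)-\widehat m(X)\}+U$. Since $m=g+\psi_Z\pi_n$ up to $o(1)$, this gives
\[
\widehat\psi_Z^\star-\psi_Z=\frac{\Pn[(r_n/\widehat\sigma)\{U+m-\widehat m\}]}{\Pn[(r_n/\widehat\sigma)\{Z-\pi_n\}]}.
\]

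\textbf{Step 2 (denominator).} By Theorem~\ref{thm:propensity}, $r_n(X,Q)\to\alpha_Q(\beta,p)-\pi(X;\theta)$. The argument behind Theorem~\ref{thm:asymp_var_plm} shows that pairwise covariances of treatment indicators are $O(n^{-1})$. A Chebyshev argument therefore yields convergence of the denominator, conditional on $\mathcal D$, to $\E[\{\alpha_Q-\pi\}^2/\sigma(X)]$. This limit is at least $\gamma/\sup\sigma>0$, using the assumption $\E\{(\alpha_Q-\pi)^2\}\ge\gamma$ and a bounded $\sigma$ (or a lower bound on $\widehat\sigma$). The replacement of $\widehat\sigma$ by $\sigma$ is justified by $L_2$ consistency.

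\textbf{Step 3 (numerator).} The nuisance term $\Pn[(r_n/\widehat\sigma)(m-\widehat m)]$ has mean zero, because $\E(r_n\mid X)=0$ and $(X,Q)$ are iid. Its variance is $O(n^{-1}\|\widehat m-m\|^2)=o(n^{-1})$, so it is $o_\bP(n^{-1/2})$. The same holds for swapping $\widehat\sigma$ for $\sigma$ in the oracle term. For the oracle term, note that $r_n(X_i,Q_i)$ is a deterministic function of $(X_i,Q_i)$ and $\E(U\mid X,Q)=0$. Hence $\xi_i\equiv r_n(X_i,Q_i)U_i/\sigma(X_i)$ are iid across $i$ (given $n$) and form a triangular array. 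They have mean zero and variance converging to $\E[\{\alpha_Q-\pi\}^2/\sigma]$. The moment bound $\E(U^4)<m'$ supplies Lyapunov's condition, since $|r_n|\le1$ and $1/\sigma$ is assumed bounded. The Lindeberg--Feller CLT then gives $\sqrt n\,\Pn\xi\rightsquigarrow\cN(0,\E[\{\alpha_Q-\pi\}^2/\sigma])$.

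\textbf{Step 4 (conclusion).} Slutsky's lemma combines Steps 2 and 3 and yields the variance $V_Z(\theta)$.

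\textbf{Main obstacle.} The key point is that no dependent-data CLT should be needed. Dependence through the queue enters only through $Z$, and $Z$ appears only in the denominator, where a law of large numbers suffices. The delicate step is that LLN: it needs the $O(n^{-1})$ pairwise-dependence bound for $Z_i$, which I would take from the proof of Theorem~\ref{thm:asymp_var_plm}. A secondary concern is that $\E(U\mid X,Q)=0$ must hold with $Q$ assigned before the queue process. This holds because $Q$ is drawn from $\theta(X)$ independently of everything else.
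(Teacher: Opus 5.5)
Your proposal is correct and follows essentially the same route as the paper: the ratio decomposition from the proof of Theorem~\ref{thm:asymp_var_plm}, a Chebyshev argument for the denominator based on the $O(n^{-1})$ bound of Lemma~\ref{lem:centered_z_cov}, a mean-zero/variance argument making the $\widehat m$ remainder $o_\bP(n^{-1/2})$, a Lyapunov CLT for the triangular array $\Pn\{fU\}$ using $\E(U^4)<m'$, and Slutsky. One small caveat, which the paper shares: your ``hence the $\xi_i$ are iid'' does not follow from $\E(U\mid X,Q)=0$ alone; it also needs $U_i$ to be a unit-level error that does not depend on the queue-coupled $Z_i$ (e.g.\ $U_i=Y_i^0-g(X_i)$ under a homogeneous effect) and $\sigma$ bounded away from zero so that $r_n/\sigma$ is bounded, and both conditions are tacit in the paper's argument as well.
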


\section{Proofs}\label{sec:proofs}
In this section, we provide proofs for the results discussed in both the main text and the appendix. Throughout the proofs, for a generic random variable $V$, we use the notation $V^n \equiv (V_1,\ldots,V_n)$ and adopt shorthands $\pi_n(x) = \pi_n(x;\theta,b_n)$ and $\tilde\pi_n(x,q) = \tilde \pi_n(x,q;\theta,b_n)$ for notational convenience. We also implicitly condition on the separate independent sample $\mathcal D$ in the proofs of Theorems \ref{thm:asymp_var_DR}, \ref{thm:asymp_var_plm}, \ref{thm:asymp_normal_DR}, and Corollary \ref{cor:asymp_normal_psi_Z}.

While we suppressed unit indices throughout the main text since the data are iid, we introduce indices such as $i$ and $j$ in the proofs when needed to distinguish units or to analyze dependence induced by the queueing process.

\subsection{Proof of Theorem \ref{thm:identification_exo}}
\begin{proof}
    Fix $i \in \{1,\ldots,n\}$ and let $O_{-i} = \{(X_j, A_j, Q_j): j \neq i\}$. Then $Z_i = \Phi(A_i, Q_i, O_{-i}, b_n)$ for some measurable function $\Phi$ that encodes our design.

    By the iid assumption, together with independent queue assignment across units, $O_{-i}$ is independent of $(X_i,A_i,Q_i,Y_i^0,Y_i^1)$. Therefore
    \[
        O_{-i} \independent (A_i,Q_i,Y_i^0,Y_i^1)\mid X_i.
    \]
    Further, $Q_i \independent (A_i,Y_i^0,Y_i^1)\mid X_i$ by the queue assignment mechanism and $A_i \independent (Y_i^0,Y_i^1)\mid X_i$ by assumption. Hence
    \[
        (A_i,Q_i)\independent (Y_i^0,Y_i^1)\mid X_i.
    \]
    It follows that
    \[
        (A_i,Q_i,O_{-i}) \independent (Y_i^0,Y_i^1)\mid X_i.
    \]

    Hence for any bounded measurable $h$, we have
    \begin{align*}
        \E\{h(Z_i)\mid X_i,Y_i^0,Y_i^1\}
        &= \E\{h(\Phi(A_i,Q_i,O_{-i}, b_n))\mid X_i,Y_i^0,Y_i^1\}\\
        &= \E\{h(\Phi(A_i,Q_i,O_{-i}, b_n))\mid X_i\}\\
        &= \E\{h(Z_i)\mid X_i\},
    \end{align*}
    which is equivalent to
    \[
        Z_i \independent (Y_i^0,Y_i^1)\mid X_i.
    \]
    Since the choice of $i$ was arbitrary, this proves conditional randomization.

    Given conditional randomization, identification is straightforward. Fix any $n\geq n_0$. As an example, consider the ATE:
    \begin{align*}
        \E(Y_i^1 - Y_i^0) &= \E\{\E(Y_i^1 \mid X_i) - \E(Y_i^0 \mid X_i)\} \\
        &= \E\{\E(Y_i^1 \mid Z_i=1, X_i) - \E(Y_i^0 \mid Z_i=0, X_i)\} \\
        &= \E\{\E(Y_i \mid Z_i=1, X_i) - \E(Y_i \mid Z_i=0, X_i)\},
    \end{align*}
    where the second equality follows from conditional randomization and the third from consistency. Positivity allows conditional means to be well-defined under the $n$-unit design.
\end{proof}

\subsection{Proof of Lemma \ref{lem:valid_iv}}
\begin{proof}
Let $Z^k$ denote the potential treatment status if we set $Q=k$. Further, let $Y^{zk}$ denote the potential outcome if we set $Z=z$ and $Q=k$. 

First consider conditional IV-unconfoundedness. Since queue assignments are conditionally randomized given $X$ and $r_n(X,Q)$ is a function only of $(X,Q)$, it follows that
\[
    r_n(X,Q) \independent \{(Y^{0k}, Y^{1k}, Z^k): k=1,\ldots,K\} \mid X.
\]

Next consider exclusion. By design, queue assignment only determines priority order for treatments and does not alter any other program feature. Therefore, under our stationary potential outcome formulation in which outcomes depend on whether treatment is eventually received by the horizon, but not on the precise timing of receipt, queue assignment affects outcomes only through terminal treatment status:
\[
    Y^{zk} = Y^z
    \quad \text{for all } z\in\{0,1\},\ k\in\{1,\ldots,K\}.
\]
Then any function of the queue assignment, such as $r_n(X,Q)$, can affect the observed outcome only through $Z$. Hence $r_n(X,Q)$ also satisfies exclusion.

Finally, note that monotonicity based on queue assignments
\[
    Z^k \geq Z^\ell \quad \text{for all } k < \ell,
\]
also follows straightforwardly from our design; moving a unit to a higher-priority queue cannot reduce its treatment status. Hence taking conditional expectations yields
\[
    \tilde\pi_n(x,k) = \E(Z^k \mid X=x) \geq \E(Z^\ell \mid X=x) = \tilde\pi_n(x,\ell) \quad \text{for all } k < \ell. \qedhere
\]
\end{proof}

\subsection{Proof of Theorem \ref{thm:arrival_endo_late}}
\begin{proof}
First, note that we can write
\[
    Y = Y^0 + Z^{Q}(Y^1 - Y^0).
\]
Consider $\E\{r_n(X,Q)Y \mid X=x\}$. Substituting the above yields 
\begin{align*}
    \E\{&r_n(X,Q)Y \mid X=x\} = \E\Big[(r_n(X,Q)\{Y^0 + Z^{Q}(Y^1 - Y^0)\} \mid X=x\Big] \\  
    &= \E\{r_n(X,Q)Y^0 \mid X=x\} + \E\Big[(r_n(X,Q)Z^{Q}(Y^1 - Y^0) \mid X=x\Big] \\
    &= 0 + \sum_{k=1}^K \theta_k(x)\{\tilde\pi_n(x,k) - \pi_n(x)\}\E\{Z^k(Y^1 - Y^0) \mid X=x\} \\
    &= \sum_{k < \ell}\theta_k(x)\theta_\ell(x)\{\tilde\pi_n(x,k) - \tilde\pi_n(x,\ell)\}\E\{(Z^k -Z^\ell)(Y^1 - Y^0) \mid X=x\},
\end{align*}
where the second equality holds from $\E\{r_n(X, Q) \mid X\} = 0$ and conditional unconfoundeness.

Now recall that we have $Z^k \geq Z^\ell$ for $k < \ell$ by monotonicity, which implies $Z^k - Z^\ell = \one(i \in \mathcal C_{k,\ell})$. It follows that
\begin{align*}
    \E\{(Z^k -Z^\ell)(Y^1 - Y^0) \mid X=x\} &= \bP(i \in \mathcal C_{k,\ell} \mid X=x)\E(Y^1 - Y^0 \mid X=x, i \in \mathcal C_{k,\ell}) \\
    &= \{\tilde\pi_n(x,k) - \tilde\pi_n(x,\ell)\}\psi_{k,\ell}(x).
\end{align*}
Hence 
\[
    \E\{r_n(X,Q)Y \mid X=x\} = \sum_{k < \ell}\theta_k(x)\theta_\ell(x)\{\tilde\pi_n(x,k) - \tilde\pi_n(x,\ell)\}^2\psi_{k,\ell}(x),
\]
and subsequently
\[
\E\{r_n(X,Q)Y\} =  \E\left\{\sum_{k < \ell}w_{k,\ell}(X)\psi_{k,\ell}(X)\right\}.
\]
Now consider $\E\{r_n(X,Q)Z \mid X=x\}$. Since $Z = Z^{Q}$, an analogous derivation yields
\begin{align*}
    \E\{r_n(X,Q)Z \mid X=x\} &= \sum_{k=1}^K \theta_k(x)\{\tilde\pi_n(x,k) - \pi_n(x)\}\E\{Z^k \mid X=x\} \\
    &= \sum_{k < \ell}\theta_k(x)\theta_\ell(x)\{\tilde\pi_n(x,k) - \tilde\pi_n(x,\ell)\}^2.
\end{align*}
Hence $\E\{r_n(X,Q)Z\} = \E\{\sum_{k < \ell}w_{k,\ell}(X)\}$, and we have the desired.
\end{proof}

\subsection{Proof of Theorem \ref{thm:pliv_instrument}}
\begin{proof}
    This is a direct consequence of Theorem \ref{thm:asymp_var_plm}.
\end{proof}

\subsection{Proof of Theorem \ref{thm:propensity}}
To show Theorem \ref{thm:propensity} holds, we first prove a lemma that shows the number of the eventually treated units in queues $1,\dots,k$ converges to $\min\{\beta,c_k\}$ defined in the statement of Theorem \ref{thm:propensity} under Assumptions \ref{assm:fixed_queue}--\ref{assm:arrivals_indep_covariates}, which is an intuitive result under our design.
\begin{lemma}\label{lem:eventual_treated_mass}
Suppose Assumptions \ref{assm:fixed_queue}--\ref{assm:arrivals_indep_covariates} hold. Let $Z_{i,t}$ be the treatment status of unit $i$ at time $t$ where $Z_{i,\tau} = Z_i$. For each queue $k \in \{1,\ldots,K\}$ and each treatment allocation time $t \in \{1,\ldots,\tau\}$, define
\[
    \bar T_{\leq k,t,n} \equiv \frac{1}{n}\sum_{i=1}^n \one(Q_i \le k, Z_{i,t} = 1).
\]
Then, as $n \to \infty$,
\[
    \sup_{1 \le t \le \tau}\left|\bar T_{\leq k,t,n} - \min\{\beta,c_k\}\bP(A_i \leq t)\right| \to 0 \quad \text{a.s.}
\]
In particular,
\[
    \frac{T_{\le k,n}}{n} \to \min\{\beta,c_k\} \quad \text{a.s.}, \quad T_{\le k,n} \equiv \sum_{i=1}^n \one(Q_i \le k, Z_i = 1).
\] 
\end{lemma}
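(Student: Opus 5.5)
The plan is to treat the queue $1,\dots,k$ block as a single aggregated top-priority queue and track it through a deterministic fluid recursion over the finitely many periods. Because service is priority-ordered, units in queues $1,\dots,k$ are served before anyone in queues $k+1,\dots,K$. The number of such units treated by the end of period $t$ is therefore determined by only two inputs: the cumulative budget and the cumulative arrivals of block-$\le k$ units. Queues beyond $k$ and the within-block FIFO order play no role. Concretely, let
\[
N_{\le k,t,n} \equiv \sum_i \one(Q_i\le k, A_i\le t), \qquad B_{t,n}\equiv \sum_{s\le t} b_{s,n}.
\]
I will first establish the exact pathwise identity
\[
n\bar T_{\le k,t,n} = \min_{0\le s\le t}\Big\{ N_{\le k,t,n}-N_{\le k,s,n} + B_{s,n}\Big\}\wedge N_{\le k,t,n}
\]
(with $N_{\le k,0,n}=B_{0,n}=0$). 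Equivalently, $n\bar T_{\le k,t,n}=\min\{n\bar T_{\le k,t-1,n}+b_{t,n},\,N_{\le k,t,n}\}$. This holds because in period $t$ the block absorbs budget until either the budget runs out or every arrived, untreated block member is served. I will prove it by induction on $t$ directly from the allocation rule.

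Second, I will take limits of the inputs. The pairs $(Q_i,A_i)$ are iid. By Assumption \ref{assm:arrivals_indep_covariates}, and because $Q_i$ is drawn from $\theta(X_i)$ independently of $A_i$ given $X_i$, we get $Q_i\independent A_i$. Assumption \ref{assm:fixed_queue} then gives
\[
\bP(Q_i\le k, A_i\le t)=c_k\,\bP(A\le t).
\]
The strong law of large numbers at each of the finitely many $t$ yields $N_{\le k,t,n}/n\to c_k\bP(A\le t)$ a.s. Assumption \ref{assm:stable_budget} gives $B_{t,n}/n\to\beta\bP(A\le t)$ uniformly in $t$. The recursion map is a finite composition of mins and sums, hence Lipschitz in its inputs, so
\[
\bar T_{\le k,t,n}\to \min_{0\le s\le t}\big\{c_k(F_t-F_s)+\beta F_s\big\}\wedge c_kF_t \quad \text{a.s., where } F_t=\bP(A\le t).
\]
The supremum over $t$ is a maximum over finitely many indices, so uniform convergence is automatic.

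Third, I will evaluate the fluid limit. If $\beta\ge c_k$, every term in the outer min satisfies $c_k(F_t-F_s)+\beta F_s\ge c_kF_t$, so the limit is $c_kF_t$. If $\beta<c_k$, each term in the min is $c_kF_t-(c_k-\beta)F_s$, which is minimized at $s=t$ because $F$ is nondecreasing. That gives $\beta F_t\le c_kF_t$. In both cases the limit is $\min\{\beta,c_k\}F_t$. Taking $t=\tau$, where $F_\tau=1$ since $A\in[0,\tau]$, gives the final statement.

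The main obstacle is the first step, namely justifying the exact recursion. Care is needed about who counts as arrived at period $t$, so that the event is $A_i\le t$ and matches the assumptions. It must also be checked that lower queues never consume budget while block members are waiting. Both facts follow from priority service, but the induction should be written carefully. The convergence step is then routine.
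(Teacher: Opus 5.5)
Your overall route is the paper's. The paper runs the same fluid recursion for the aggregated block $1,\dots,k$, but on the waiting mass: $\bar W_{\le k,\le t,n}=(\bar W_{\le k,\le t-1,n}+\Delta\bar N_{\le k,t,n}-b_{t,n}/n)_+$. It unrolls this by induction into $\max_{0\le s\le t}\{(\bar N_{\le k,\le t,n}-\bar N_{\le k,\le s,n})-(\bar b_{\le t,n}-\bar b_{\le s,n})\}$, passes to the limit via the SLLN and Assumption~\ref{assm:stable_budget}, and recovers $\bar T=\bar N-\bar W$. Your recursion $n\bar T_{\le k,t,n}=\min\{n\bar T_{\le k,t-1,n}+b_{t,n},N_{\le k,t,n}\}$ is exactly the complement of this, and it is correct.

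The closed form you call ``equivalent'' is not. Unrolling the recursion gives
\[
n\bar T_{\le k,t,n}=\min_{0\le s\le t}\bigl\{N_{\le k,s,n}+B_{t,n}-B_{s,n}\bigr\},
\]
where the $s=t$ term is $N_{\le k,t,n}$. You wrote $\min_{s}\{N_{\le k,t,n}-N_{\le k,s,n}+B_{s,n}\}\wedge N_{\le k,t,n}$, which swaps arrivals and budget between the ``since $s$'' and ``up to $s$'' roles. A two-period example separates them: take $N_{\le k,1,n}=0$, $b_{1,n}=10$, $N_{\le k,2,n}=5$, $b_{2,n}=0$. Unused period-1 slots are lost, as your own recursion encodes, so the true value is $n\bar T_{\le k,2,n}=0$. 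The correct formula gives $\min\{10,0,5\}=0$, while yours gives $\min\{5,15,10\}=5$. So the induction you propose in step one cannot be carried out.

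The damage is local. Your step-three answer is right only by coincidence. In the fluid limit both cumulative inputs are multiples of $F_t$, so the true and the incorrect limiting expressions are both affine in $F_s$. Each is therefore minimized at an endpoint $s\in\{0,t\}$, and both give $\min\{\beta,c_k\}F_t$.

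You can repair the argument in either of two ways.
\begin{itemize}
\item Use the correct closed form. Its limit is $\min_{0\le s\le t}\{\beta F_t+(c_k-\beta)F_s\}$, with $F_0\equiv 0$. This is attained at $s=0$ (value $\beta F_t$) when $c_k\ge\beta$, and at $s=t$ (value $c_kF_t$) when $c_k<\beta$.
\item Skip the closed form and push the limits through the recursion itself, which your Lipschitz remark already justifies. The limit satisfies $\tau_t=\min\{\tau_{t-1}+\beta(F_t-F_{t-1}),\,c_kF_t\}$ with $\tau_0=0$. A one-line induction then gives $\tau_t=\min\{\beta,c_k\}F_t$.
\end{itemize}
With either fix, the rest of your argument goes through as written: independence of $Q_i$ and $A_i$ from Assumption~\ref{assm:arrivals_indep_covariates}, the SLLN at finitely many $t$, uniformity from the finiteness of $\tau$, and $F_\tau=1$.
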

\begin{proof}
For each $k \in \{1,\dots,K\}$ and $t \in \{1,\ldots,\tau\}$, define
\begin{align*}
    \bar N_{\leq k, \leq t, n} &\equiv \frac{1}{n}\sum_{i=1}^n \one(Q_i \le k, A_i \leq t), \\
    \bar W_{\leq k, \leq t, n} &\equiv \frac{1}{n}\sum_{i=1}^n \one(Q_i \le k, A_i \leq t, Z_{i,t} = 0), \\
    \bar b_{\leq t, n} &\equiv \frac{1}{n}\sum_{s=1}^t b_{s,n}, 
\end{align*}
which denotes, on $1/n$ scale, the average number of units that arrive in queues up to queue $k$ by period-$t$ allocation, the average number of units that arrive in queues up to queue $k$ by period-$t$ allocation and have not been treated, and the average budget up to period-$t$ allocation, respectively, with the conventions
\[
    \bar N_{\le k,\le 0,n} = 0, \quad \bar b_{\le 0,n} = 0, \quad \bar W_{\le k,\le 0,n} = 0.
\]
Further, define 
\[
    \Delta \bar N_{\leq k,t,n} \equiv \bar N_{\le k, \le t, n} - \bar N_{\le k, \le t-1, n}, 
\]
which corresponds to the units that arrive after period-$t-1$ allocation and before period-$t$ allocation. Our first goal is to show
\begin{equation}\label{eq:avg_waiting_mass}
\bar W_{\le k, \le t, n} = \max_{0\le s \le t}\Bigl\{\bigl(\bar N_{\le k, \le t, n} - \bar N_{\le k, \le s, n}\bigr) - \bigl(\bar b_{\le t, n} - \bar b_{\le s, n}\bigr)
\Bigr\},    
\end{equation}
for every $t$. Fix $k$ and consider the quantity
\[
    \bar W_{\le k,\le t-1,n} + \Delta \bar N_{\le k, t, n},
\]
which corresponds to the average number of waiting units in queues up to queue $k$ \emph{immediately} before the period-$t$ allocation. 
Since the average number of period-$t$ treatment slots used on queues $1,\dots,k$ is exactly
\[
    \min\left\{\bar W_{\le k,\le t-1,n} + \Delta \bar N_{\leq k,t,n}, b_{t,n}/n\right\},
\]  
by our design, we can write
\begin{align*}
    \bar W_{\le k, \le t, n} &= \bar W_{\le k,\le t-1,n} + \Delta \bar N_{\leq k,t,n} - \min\{\bar W_{\le k,\le t-1,n} + \Delta \bar N_{\leq k,t,n},b_{t,n}/n\} \\
    &= \bigl(\bar W_{\le k,\le t-1,n} + \Delta \bar N_{\leq k,t,n} - b_{t,n}/n\bigr)_+.
\end{align*}

Next we use induction to prove \eqref{eq:avg_waiting_mass}. For the base case $t=0$, both sides are zero. Now suppose \eqref{eq:avg_waiting_mass} holds for $t-1$. Then
\begin{align*}
\bar W_{\le k, \le t, n}
&= \biggl( \max_{0\le s \le t-1} \Bigl\{ \bigl(\bar N_{\le k,\le t-1,n} - \bar N_{\le k,\le s, n}\bigr) - \bigl(\bar b_{\le t-1,n} - \bar b_{\le s,n}\bigr)\Bigr\}+\Delta \bar N_{\leq k,t,n} - b_{t,n}/n \biggr)_+ \\
&= \biggl( \max_{0\le s \le t-1} \bigl\{ \bigl(\bar N_{\le k,\le t,n} - \bar N_{\le k, \le s, n}\bigr) - \bigl(\bar b_{\le t, n} - \bar b_{\le s,n}\bigr) \bigr\}\biggr)_+ \\
&= \max_{0\le s \le t} \Bigl\{ \bigl(\bar N_{\le k,\le t,n} - \bar N_{\le k, \le s, n}\bigr) - \bigl(\bar b_{\le t, n} - \bar b_{\le s,n}\bigr) \Bigr\},
\end{align*} 
where the second equality follows from $\bar N_{\le k,\le t-1,n} + \Delta \bar N_{\leq k,t,n} = \bar N_{\le k,\le t,n}$ and $\bar b_{\le t-1,n} + b_{t,n}/n = \bar b_{\le t, n}$, and the third from the fact that the term corresponding to $s=t$ equals $0$.

Now let $a_t \equiv \bP(A_i \leq t)$. As a second step, we show
\[
\sup_{1\le t\le \tau} \Bigl| \bar W_{\le k, \le t, n} - \max_{0\le s \le t}(c_k-\beta)(a_t-a_s) \Bigr| \to 0,
\]
and 
\[
    \max_{0 \leq s \leq t}(c_k-\beta)(a_t-a_s) = (c_k-\beta)_+ a_t,
\]
to arrive at 
\[
    \sup_{1\leq t\leq \tau} \bigl| \bar W_{\le k, \le t,n} - (c_k-\beta)_+ a_t \bigr| \to 0.
\]

Recall that Assumption \ref{assm:stable_budget} states
\[
    \sup_{1\leq t\leq \tau} \bigl|\bar b_{\le t, n} - \beta a_t\bigr| \to 0,
\]
and Assumptions \ref{assm:fixed_queue} and \ref{assm:arrivals_indep_covariates} along with strong law of large numbers give
\[
    \bar N_{\leq k, \leq t, n} = \frac{1}{n}\sum_{i=1}^n \one(Q_i \le k, A_i \leq t) \to c_ka_t \quad \text{a.s.}
\]
As a result,
\[
    \sup_{1\le t\le \tau}\bigl|\bar N_{\leq k, \le t,n} - c_k a_t\bigr| + \sup_{1\le t\le \tau}\bigl|\bar b_{\le t, n} - \beta a_t\bigr| \to 0 \quad \text{a.s.}
\]

Now for every $0\leq s \leq t \leq \tau$, note that
\begin{align*}
&
\Bigl|\bigl(\bar N_{\le k, \le t,n} - \bar N_{\le k, \le s, n}\bigr)-\bigl(\bar b_{\le t, n} - \bar b_{\le s,n}\bigr) - (c_k-\beta)(a_t-a_s) \Bigr|
\\
&\leq \bigl| \bar N_{\le k, \le t,n} - c_k a_t \bigr| + \bigl| \bar N_{\le k, \le s, n} - c_k a_s \bigr|
+ \bigl| \bar b_{\le t, n} - \beta a_t \bigr| + \bigl| \bar b_{\le s,n} - \beta a_s \bigr| \\
& \leq 2\bigg\{\sup_{1\le t\le \tau}\bigl|\bar N_{\leq k, \le t,n} - c_k a_t\bigr| + \sup_{1\le t\le \tau}\bigl|\bar b_{\le t, n} - \beta a_t\bigr|\bigg\}.
\end{align*}
Taking the maximum over $0\le s \le t$ gives
\[
\sup_{1\le t\le \tau} \Bigl| \bar W_{\le k, \le t, n} - \max_{0\le s \le t}(c_k-\beta)(a_t-a_s) \Bigr| \to 0.
\]

Since $a_t$ is non-decreasing in $t$ and $a_0=0$, there are two cases: 
\begin{enumerate}
    \item If $c_k \ge \beta$, then $(c_k-\beta)(a_t-a_s)$ is maximized at $s=0$, and
        \[
        \max_{0\le s \le t}(c_k-\beta)(a_t-a_s) = (c_k-\beta)a_t.
        \]
    \item If $c_k < \beta$, then $(c_k-\beta)(a_t-a_s)\le 0$ for all $s\le t$, so
        \[
        \max_{0\le s \le t}(c_k-\beta)(a_t-a_s) = 0.
        \]
\end{enumerate}
It follows that
\[
    \sup_{1 \leq t \leq \tau} \bigl| \bar W_{\le k, \le t,n} - (c_k-\beta)_+ a_t \bigr| \to 0.
\]

Finally, since
\[
    \bar T_{\le k, t,n} = \bar N_{\le k, \le t,n} - \bar W_{\le k, \le t, n},
\]
by construction, we have
\begin{align*}
\sup_{1\leq t\leq \tau} \bigl| \bar T_{\le k, t,n} &- \min\{\beta,c_k\}a_t \bigr| = \sup_{1 \leq t \leq \tau} \Bigl| \bar T_{\le k, t,n} - \bigl\{ c_k a_t - (c_k-\beta)_+ a_t \bigr\} \Bigr| \\
&= \sup_{1 \leq t \leq \tau} \Bigl| \bar N_{\le k, \le t,n} - \bar W_{\le k, \le t, n} - \bigl\{ c_k a_t - (c_k-\beta)_+ a_t \bigr\} \Bigr| \\
&\leq \sup_{1 \leq t \leq \tau} \bigl|\bar N_{\le k, \le t, n} - c_k a_t\bigr| + \sup_{1 \leq t \leq \tau} \bigl| \bar W_{\le k,\le t, n} - (c_k-\beta)_+ a_t \bigr|  \to 0.    
\end{align*}

Taking $t=\tau$ and using $a_{\tau}=1$ gives $T_{\le k,n}/n = \bar T_{\le k,\tau,n} \to \min\{\beta,c_k\}$. 
\end{proof}

\begin{proof}[Proof of Theorem \ref{thm:propensity}]
all statements involving conditioning on $X_i = x$ should be understood to hold for almost every $x$, and all convergences below are almost sure convergences except for the deterministic budget path.

It suffices to show that, for each $k$ with $p_k > 0$ and each $x$ such that
$\theta_k(x) > 0$,
\[
\tilde\pi_n(x,k) = \bP_{\theta,b_n}(Z_i=1\mid X_i=x,Q_i=k) \to \alpha_k(\beta,p).
\]
Fix such a $k$ and consider
\[
T_{k,n} \equiv \sum_{i=1}^n \one(Q_i = k, Z_i = 1),
\quad
N_{k,n} \equiv \sum_{i=1}^n \one(Q_i = k),
\]
which represent the number of individuals in queue $k$ who are treated by the end of the allocation process, and the total number of individuals that end up in queue $k$, respectively.

Let $J_k \equiv \{j : Q_j = k\}$ where $|J_k| = N_{k,n}$. Conditional on $Q^n = q^n$ with $q_i = k$, the set $J_k$ is fixed. Since the arrival times are iid, continuous, and independent of $(X_i,Q^n)$, the arrival order within $J_k$ is a uniform random permutation even after conditioning on $X_i = x$. Under our treatment allocation rule, for any realization of the arrivals, exactly $T_{k,n}$ of the $N_{k,n}$ units in queue $k$ are treated by the end of the allocation process. Therefore, we have
\[
\tilde\pi_n(x,k) = 
\E\left(\frac{T_{k,n}}{N_{k,n}} \biggm| X_i = x, Q_i = k\right).
\]

Now consider
\[
T_{\leq k,n} \equiv \sum_{i=1}^n \one(Q_i \leq k, Z_i = 1), \quad T_{\leq 0,n} \equiv 0,
\] 
as defined in Lemma \ref{lem:eventual_treated_mass} and note that $T_{k,n} = T_{\leq k,n} - T_{\leq k-1,n}$. It follows by Lemma \ref{lem:eventual_treated_mass} that
\begin{align*}
\frac{T_{k,n}}{n} &= \frac{T_{\leq k,n} - T_{\leq k-1,n}}{n} \to \min\{\beta,c_k\} - \min\{\beta,c_{k-1}\} \\
&= \beta-(\beta-c_k)_+ - \{\beta-(\beta-c_{k-1})_+\} \\
&= (\beta-c_{k-1})_+ - (\beta-c_k)_+,
\end{align*}
and
\[
\frac{T_{k,n}}{N_{k,n}} = \frac{T_{k,n}/n}{N_{k,n}/n} \to \frac{(\beta-c_{k-1})_+ - (\beta-c_k)_+}{p_k} = \alpha_k(\beta,p).
\]
Therefore, dominated convergence implies
\[
\E\left(\left.\frac{T_{k,n}}{N_{k,n}}\right| X_i = x, Q_i = k\right) \to \alpha_k(\beta,p). \qedhere
\]
\end{proof}

\subsection{Proof of Theorem \ref{thm:asymp_var_DR}}
We first present two lemmas that are necessary to show Theorem \ref{thm:asymp_var_DR}. The first lemma says that changing one unit's queue assignment or arrival time can change the number of eventually treated units in any queue-period cell by at most a constant. The second lemma is a more technical one that shows the quantity
\[
    \Big|\E\Big[\{Z_i-\pi_n(X_i)\}\{Z_j-\pi_n(X_j)\}\mid X_i,X_j\Big]\Big|,
\]
is of order $n^{-1}$.

Throughout the proofs of the two lemmas, we write $S_i=s$ to denote $A_i\in [s-1,s)$ and $C_i=(Q_i,S_i) \in \mathcal C=\{1,\ldots,K\}\times\{1,\ldots,\tau\}$.

\begin{lemma}\label{lem:bounded_treatment_count}
Define $V \equiv \{(Q_1,A_1),\ldots,(Q_n,A_n)\}$. For each $r\in\{1,\ldots,n\}$, let $V^{(r)}$ be the vector obtained from $V$ by replacing its $r$-th coordinate $(Q_r,A_r)$ by an arbitrary $(Q_r',A_r')$. For $c=(k,s)\in\mathcal C$, define
\[
    T_{c,n}(V) \equiv \sum_{i=1}^n \one(C_i=c, \ Z_i(V)=1).
\]
Then there exists a constant $L=L(K,\tau)<\infty$ such that, for every
$r\in\{1,\ldots,n\}$,
\[
    \sup_{c\in\mathcal C}\left|T_{c,n}(V)-T_{c,n}(V^{(r)})\right| \le L.
\]
\end{lemma}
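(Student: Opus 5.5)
The idea is to write every cell count $T_{c,n}$ as an explicit, Lipschitz function of a few cumulative arrival counts. Changing one unit's $(Q_r,A_r)$ moves each of those counts by at most one. We work on the unscaled count scale (multiply the quantities of Lemma~\ref{lem:eventual_treated_mass} by $n$) and set
\[
N_{\le k,\le t}=\sum_i \one(Q_i\le k,A_i\le t),\quad W_{\le k,\le t}=\sum_i\one(Q_i\le k,A_i\le t,Z_{i,t}=0),\quad B_{\le t}=\sum_{s\le t}b_{s,n}.
\]
The recursion in the proof of Lemma~\ref{lem:eventual_treated_mass} is purely combinatorial. Queues $1,\dots,k$ are served before any others, so in period $t$ they absorb $\min\{W_{\le k,\le t-1}+\Delta N_{\le k,t},b_{t,n}\}$ slots. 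It therefore holds exactly, and gives
\[
W_{\le k,\le t}=\max_{0\le s\le t}\bigl\{(N_{\le k,\le t}-N_{\le k,\le s})-(B_{\le t}-B_{\le s})\bigr\}.
\]
The budget path $b_n$ is deterministic and does not depend on $V$.

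\textbf{Step 1 (prefix quantities).} The term $N_{\le k,\le t}-N_{\le k,\le s}$ counts units with $Q_i\le k$ and $A_i\in(s,t]$. Replacing unit $r$ changes only that unit's indicator, so the term moves by at most $1$. Since a maximum of functions each perturbed by at most $1$ is perturbed by at most $1$, $W_{\le k,\le t}$ moves by at most $1$. Hence the number of treated units in queues $\le k$ by time $t$, $T_{\le k,t}=N_{\le k,\le t}-W_{\le k,\le t}$, moves by at most $2$. Taking differences over consecutive prefixes, the final number treated in queue $k$, $T_k=T_{\le k,\tau}-T_{\le k-1,\tau}$, moves by at most $4$.

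\textbf{Step 2 (from queues to queue-period cells).} Next we use first-in-first-out within a queue. A waiting unit in queue $k$ is never passed over by a later arrival in the same queue. So the set of eventually treated units in queue $k$ is exactly the first $T_k$ arrivals to queue $k$ in arrival order; ties have probability zero, or we break them by any fixed rule. Writing $N_{k,\le s}=\sum_i\one(Q_i=k,S_i\le s)$, this gives
\[
T_{(k,s),n}=\min\{N_{k,\le s},T_k\}-\min\{N_{k,\le s-1},T_k\}.
\]
Each $N_{k,\le s}$ moves by at most $1$ and $T_k$ moves by at most $4$. The map $(a,b)\mapsto\min\{a,b\}$ is $1$-Lipschitz in the max norm, so each minimum moves by at most $4$, and each cell count by at most $8$. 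Thus $L=8$ works, uniformly in $c$, $r$ and $n$; this is well within the allowance $L=L(K,\tau)$.

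\textbf{Main obstacle.} The delicate step is Step 2: justifying rigorously that FIFO within a queue across periods means the treated units are exactly the earliest $T_k$ arrivals. This needs a short induction over periods. At each allocation, the treated units of queue $k$ so far form an initial segment of that queue's arrival order among those who have arrived, and newly served units extend this segment. One must also check that the pathwise formula for $W$ is unaffected by $V$ changing which units, rather than how many, sit in a given cell. It is unaffected, because the formula depends on $V$ only through the counts $N$.
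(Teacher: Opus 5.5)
Your proof is correct, and it takes a genuinely different route from the paper's.

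\textbf{The paper's route.} The paper works cell by cell. It writes the number newly treated in cell $c$ at period $t$ as $\{R^-_{c,t,n}\}_+-\{R^+_{c,t,n}\}_+$, the residual budget before and after serving $c$. It bounds each such change by twice the total $\ell_1$ discrepancy $\Delta_t$ of the waiting counts. It then propagates this through $\Delta_{t+1}\le(1+2K\tau)\Delta_t+2$. Because discrepancies are allowed to compound across periods, the resulting constant $L=4\tau\sum_{h=0}^{\tau-1}(1+2K\tau)^h$ is exponential in $\tau$.

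\textbf{Your route.} You use the exact reflected-walk (Lindley) representation of the prefix waiting count $W_{\le k,\le t}$. The paper derives this only on the $1/n$ scale in Lemma~\ref{lem:eventual_treated_mass}. Since every term inside the maximum moves by at most one, the perturbation does not accumulate over time. The FIFO initial-segment property then turns per-queue totals into cell counts. The result is the absolute constant $L=8$, uniform in $K$, $\tau$ and $n$.

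\textbf{What each buys.} Your approach gives a sharp, dimension-free constant. That constant would shrink the $(L+1)^2$ factors in Lemma~\ref{lem:remainder_var_multi}, and it would even tolerate $K$ or $\tau$ growing with $n$. It also recycles machinery the paper already has. The cost is that you rely on two specific features: strict priority across queues, and FIFO across periods within a queue. The paper's residual-budget bookkeeping only uses that each period serves cells in some priority order. It would therefore survive disciplines where the within-queue cell order changes from period to period, which would break your initial-segment step.

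\textbf{A possible simplification.} You could drop Step 2 entirely. Each lexicographic prefix $\{(\ell,u):\ell<k\}\cup\{(k,u):u\le s\}$ is always served before everything outside it. So the same Lindley argument applies to these prefixes directly, and differencing consecutive prefixes gives $L=4$.
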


\begin{proof}
For $c=(k,s)\in\mathcal C$ and $t\in\{1,\ldots,\tau\}$, define
\begin{align*}
    W_{c,\le t,n}(V) &\equiv \sum_{i=1}^n \one(C_i=c,\ Z_{i,t-1}(V)=0,\ s\le t), \\
    D_{c,t,n}(V) &\equiv \sum_{i=1}^n \one(C_i=c,\ Z_{i,t-1}(V)=0,\ Z_{i,t}(V)=1),
\end{align*}
where $W_{c,\le t,n}(V)$ is the number of untreated units from cell $c=(k,s)$ available for treatment at the beginning of period $t$, and $D_{c,t,n}(V)$ is the number newly treated from cell $c$ in period $t$. By construction,
\[
    T_{c,n}(V)=\sum_{t=s}^{\tau}D_{c,t,n}(V).
\]
By our treatment rule, cells are served in lexicographic order: first by queue
$k$, then by arrival period $s$. Hence for $s>t$, $D_{c,t,n}(V)=0$. For $c=(k,s)$ with $s\le t$, define
\[
    R^-_{c,t,n}(V) \equiv b_{t,n} - \sum_{\ell<k}\sum_{u\le t}W_{(\ell,u),\le t,n}(V) - \sum_{u<s}W_{(k,u),\le t,n}(V),
\]
and
\[
    R^+_{c,t,n}(V) \equiv b_{t,n} - \sum_{\ell<k}\sum_{u\le t}W_{(\ell,u),\le t,n}(V) - \sum_{u\le s}W_{(k,u),\le t,n}(V),
\]
as the the remaining budget after serving all cells that have higher priority than $(k,s)$, and the remaining budget after \emph{also} serving cell $(k,s)$, respectively. Then
\[
    D_{c,t,n}(V)=\{R^-_{c,t,n}(V)\}_+ - \{R^+_{c,t,n}(V)\}_+.
\]

Now let
\[
    \Delta_t \equiv \sum_{c\in\mathcal C}\left|W_{c,\le t,n}(V)-W_{c,\le t,n}(V^{(r)})\right|,
\]
which measures the total discrepancy in the number of units who have arrived by period $t$ and have not yet been treated under $V$ versus $V^{(r)}$. We use $\Delta_t$ to upper bound $|D_{c,t,n}(V)-D_{c,t,n}(V^{(r)})|$, and show that $\Delta_t$ is bounded by some finite constant $C(K,\tau)$.

Suppose $t=1$. Changing one coordinate of $V$ can remove at most one unit from one cell and add at most one unit to another, so $\Delta_1\le 2$. 

Next, fix $t\in\{1,\ldots,\tau-1\}$. If $s>t$, then $D_{c,t,n}(V)=D_{c,t,n}(V^{(r)})=0$. If $s\le t$, then 
\[
\begin{aligned}
&\left|D_{c,t,n}(V)-D_{c,t,n}(V^{(r)})\right| \\
&=
\left|\{R^-_{c,t,n}(V)\}_+ - \{R^+_{c,t,n}(V)\}_+ - \{R^-_{c,t,n}(V^{(r)})\}_+ + \{R^+_{c,t,n}(V^{(r)})\}_+\right| \\
&\leq \left|\{R^-_{c,t,n}(V)\}_+ - \{R^-_{c,t,n}(V^{(r)})\}_+\right| + \left|\{R^+_{c,t,n}(V)\}_+ - \{R^+_{c,t,n}(V^{(r)})\}_+\right| \\
&\leq \left|R^-_{c,t,n}(V)-R^-_{c,t,n}(V^{(r)})\right| + \left| R^+_{c,t,n}(V)-R^+_{c,t,n}(V^{(r)}) \right|.
\end{aligned}
\]
where the last line follows from $u\mapsto u_+$ being $1$-Lipschitz. Further note that
\[
\left|
    R^-_{c,t,n}(V)-R^-_{c,t,n}(V^{(r)})
\right|
\le
\sum_{c'\in\mathcal C}
\left|
    W_{c',\le t,n}(V)-W_{c', \le t,n}(V^{(r)})
\right|
=
\Delta_t,
\]
and similarly $\left|R^+_{c,t,n}(V)-R^+_{c,t,n}(V^{(r)})\right| \le \Delta_t$. Therefore, for every $c\in\mathcal C$,
\[
\left|D_{c,t,n}(V)-D_{c,t,n}(V^{(r)})\right|
\le 2\Delta_t .
\]
Summing over $c\in\mathcal C$ gives
\[
    \sum_{c\in\mathcal C}\left|D_{c,t,n}(V)-D_{c,t,n}(V^{(r)})\right|
    \leq \sum_{c\in\mathcal C}2\Delta_t = 2K\tau\,\Delta_t .
\]

Now let $A^{\mathrm{new}}_{c,t+1,n}(V)$ denote the number of units from cell $c$ that become newly available in period $t+1$. Then we can write
\[
    W_{c,\le t+1,n}(V)
    =
    W_{c,\le t,n}(V)-D_{c,t,n}(V)+A^{\mathrm{new}}_{c,t+1,n}(V).
\]
Now triangle inequality implies
\[
\begin{aligned}
&\Delta_{t+1} = \sum_{c\in\mathcal C}\left|W_{c,\le t+1,n}(V)-W_{c,\le t+1,n}(V^{(r)})\right| \\
&\leq
\sum_{c\in\mathcal C}\left|W_{c,\le t,n}(V)-W_{c,\le t,n}(V^{(r)})\right| + \sum_{c\in\mathcal C}\left|D_{c,t,n}(V)-D_{c,t,n}(V^{(r)})\right| + \sum_{c\in\mathcal C}\left|A^{\mathrm{new}}_{c,t+1,n}(V)-A^{\mathrm{new}}_{c,t+1,n}(V^{(r)})\right| \\
&\leq (1+2K\tau)\Delta_t + \sum_{c\in\mathcal C} \left| A^{\mathrm{new}}_{c,t+1,n}(V)-A^{\mathrm{new}}_{c,t+1,n}(v^{(r)}) \right|.
\end{aligned}
\]
Since $V$ and $V^{(r)}$ differ in only one coordinate, the newly available arrival-count vectors in period $t+1$ differ in $\ell_1$ distance by at most $2$. Thus
\[
\Delta_{t+1} \leq \Delta_t + 2K\tau\,\Delta_t + 2 = (1+2K\tau)\Delta_t+2,
\]
and subsequently, after some algebra, $\Delta_t \leq 2\sum_{h=0}^{t-1}(1+2K\tau)^h$. Since $\tau$ is fixed, we obtain
\[
    \max_{1\le t\le \tau}\Delta_t \leq 2\sum_{h=0}^{\tau-1}(1+2K\tau)^h \equiv C(K,\tau)
\]
Consequently, $\left|D_{c,t,n}(V)-D_{c,t,n}(V^{(r)})\right| \le 2C(K,\tau)$ for every $c\in\mathcal C$ and every $t$.

Finally, for $c=(k,s)$,
\[
\begin{aligned}
\left|T_{c,n}(V)-T_{c,n}(V^{(r)})\right|
&=
\left|\sum_{t=s}^{\tau}\{D_{c,t,n}(V)-D_{c,t,n}(V^{(r)})\}\right| \\
&\leq \sum_{t=s}^{\tau}\left|D_{c,t,n}(V)-D_{c,t,n}(V^{(r)})\right| \\
&\leq 2\tau C(K,\tau).
\end{aligned}
\]
Taking the supremum over $c\in\mathcal C$, the result holds with
$L(K,\tau)=2\tau C(K,\tau)$.
\end{proof}

\begin{lemma}
\label{lem:remainder_var_multi}
Suppose Assumption \ref{assm:arrivals_indep_covariates} holds. For $c=(k,s)\in\mathcal C$, let $\lambda_{\min}\equiv\min\{\lambda_c:\lambda_c>0\}>0$ where $\lambda_c \equiv \bP(C_i=c)=p_k\bP(S_i=s)$. Then there exists $C=C(\lambda_{\min},L)<\infty$ such that, for all $n\ge2$ and all $i\neq j$,
\[
\Big|\E\Big[\{Z_i-\pi_n(X_i)\}\{Z_j-\pi_n(X_j)\}\mid X_i,X_j\Big]\Big|
\le \frac{C}{n}
\quad\textnormal{a.s.}
\]
\end{lemma}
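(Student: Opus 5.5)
The approach is to condition on the coarse design variables and split the conditional covariance into a within-cell sampling part and a between-configuration part. Let $C^n=(C_1,\ldots,C_n)$ be the cell labels and put $\mathcal G\equiv\sigma(C^n,X_i,X_j)$.

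\textbf{Step 1: structure of treatment given the cells.} Cells are served in lexicographic order (queue, then arrival period), and ties within a cell are broken by exact arrival time. Hence the number of treated units in each cell, $T_{c,n}$, is a deterministic function of the cell-count vector $(N_{c,n})_{c\in\mathcal C}$ alone. By Assumption \ref{assm:arrivals_indep_covariates}, the exact arrival times within each cell are iid and independent of $(X^n,Q^n)$, exactly as in the proof of Theorem \ref{thm:propensity}. So conditionally on $\mathcal G$:
\begin{itemize}
\item the treated set in cell $c$ is a uniformly random subset of size $T_{c,n}$;
\item these subsets are independent across cells.
\end{itemize}
Consequently $E[Z_i\mid\mathcal G]=g_i\equiv T_{C_i,n}/N_{C_i,n}$. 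Also, because queue draws depend only on own covariates and arrivals are independent of $X$, the labels $C_k$ for $k\ne i,j$ remain iid given $(X_i,X_j)$.

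\textbf{Step 2: law of total covariance.} Since $E[Z_i-\pi_n(X_i)\mid X_i,X_j]=0$, we have
\[
\E\big[\{Z_i-\pi_n(X_i)\}\{Z_j-\pi_n(X_j)\}\mid X_i,X_j\big]=\E\{\cov(Z_i,Z_j\mid\mathcal G)\mid X_i,X_j\}+\cov(g_i,g_j\mid X_i,X_j).
\]

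\emph{First term.} If $C_i\ne C_j$ it vanishes by independence across cells. If $C_i=C_j=c$, sampling without replacement gives $|\cov(Z_i,Z_j\mid\mathcal G)|\le 1/(N_{c,n}-1)$. Given $C_i=c$, the count $N_{c,n}-2$ dominates a $\mathrm{Bin}(n-2,\lambda_c)$ variable with $\lambda_c\ge\lambda_{\min}$. A Chernoff bound on $\{N_{c,n}<\lambda_{\min}n/2\}$, which has exponentially small probability and on which the covariance is bounded by $1$, then yields a bound $C/n$.

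\textbf{Step 3: the between-configuration term.} Write $g_i=h_i(C_i,C_j,C_{-ij})$. Let $g_i^{(j)}$ denote $g_i$ with $C_j$ replaced by an independent copy, and define $g_j^{(i)}$ symmetrically. By Lemma \ref{lem:bounded_treatment_count}, changing one coordinate moves $T_{c,n}$ by at most $L$ and $N_{c,n}$ by at most $1$. Hence on the good event $\{N_{c,n}\ge\lambda_{\min}n/2\ \forall c\}$ we get $|g_i-g_i^{(j)}|\le C'/n$, and the complement again costs only an exponentially small term. Decomposing,
\[
\cov(g_i,g_j)=\cov(g_i-g_i^{(j)},g_j)+\cov(g_i^{(j)},g_j-g_j^{(i)})+\cov(g_i^{(j)},g_j^{(i)}),
\]
and the first two pieces are $O(1/n)$.

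For the last piece, condition on $C_{-ij}$. Given $C_{-ij}$, $g_i^{(j)}$ depends only on $(C_i,C_j')$ and $g_j^{(i)}$ only on $(C_j,C_i')$, so they are conditionally independent. The piece therefore reduces to $\cov(\bar g_i,\bar g_j)$, where $\bar g_i=E[g_i^{(j)}\mid C_{-ij}]$. Each $\bar g$ is a function of $n-2$ iid coordinates with bounded differences $O(1/n)$ (up to the bad event), so Efron--Stein gives $\var(\bar g)\le n\cdot O(n^{-2})=O(1/n)$. Cauchy--Schwarz then finishes, with all constants depending only on $(\lambda_{\min},L)$.

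\textbf{Main obstacle.} The delicate points are two. First, verifying carefully that $T_{c,n}$ depends only on cell counts, so that within-cell exchangeability holds. Second, controlling $1/N_{c,n}$ uniformly: cells with $\lambda_c=0$ are almost surely empty and irrelevant, while small-count events must be handled by the Chernoff truncation rather than by a deterministic bound.
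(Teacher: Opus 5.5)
\textbf{The gap.} Your Step 2 begins from the claim $\E[Z_i-\pi_n(X_i)\mid X_i,X_j]=0$, which is false. By definition $\pi_n(X_i)=\E(Z_i\mid X_i)$. Conditioning additionally on $X_j$ changes the law of $C_j$ through $\theta(X_j)$, and $Z_i$ depends on $C_j$ because unit $j$ competes for the same budget: whether $j$ is pushed into queue $1$ or queue $K$ moves $T_{C_i,n}$, and possibly $N_{C_i,n}$. The correct identity is
\begin{align*}
\E\big[\{Z_i-\pi_n(X_i)\}\{Z_j-\pi_n(X_j)\}\mid X_i,X_j\big]
&=\cov(Z_i,Z_j\mid X_i,X_j)\\
&\quad+\{\E(Z_i\mid X_i,X_j)-\pi_n(X_i)\}\{\E(Z_j\mid X_i,X_j)-\pi_n(X_j)\}.
\end{align*}
Your decomposition drops the second term, which is nonzero in general, so it has to be bounded separately.

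\textbf{The fix.} Your own tools suffice. Set $a_c(d)\equiv\E(g_i\mid C_i=c,C_j=d)$; this depends on neither $X_i$ nor $X_j$, since $C_{-ij}$ is independent of them. Then
\[
\E(Z_i\mid X_i,X_j)-\pi_n(X_i)=\sum_{c}\bP(C_i=c\mid X_i)\sum_{d}\{\bP(C_j=d\mid X_j)-\lambda_d\}\,a_c(d).
\]
Because the inner weights sum to zero and have total variation at most $2$, the absolute value is at most $2\max_c\sup_{d,d'}|a_c(d)-a_c(d')|$. Your Step 3 bounded-difference estimate, resampling only $C_j$, makes this $O(1/n)$, so the product term is $O(n^{-2})$. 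This is exactly the separate mean-mismatch piece in the paper's proof.

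\textbf{Comparison with the paper.} With that term restored, the rest of your argument is sound, and it handles the covariance of the conditional means by a genuinely different route.
\begin{itemize}
\item The paper conditions on $(C_i,C_j)$. It applies Efron--Stein over $C_{-ij}$ to the conditional covariance, then controls $\cov(a_{C_i,C_j},b_{C_i,C_j}\mid X_i,X_j)$ by centering each conditional mean around its average over the other unit's cell.
\item You instead resample $C_j$ inside $g_i$ and $C_i$ inside $g_j$, then condition on $C_{-ij}$. This gives conditional independence at once and needs only one Efron--Stein plus Cauchy--Schwarz step, which is somewhat more economical.
\item Where you truncate with a Chernoff bound on $N_{c,n}$, the paper uses exact binomial identities for $B\sim\textnormal{Bin}(n-2,\lambda_c)$, namely $\E\{1/(B+1)\}\le 1/\{(n-1)\lambda_c\}$ and $\E[1/\{(B+1)(B+2)\}]\le 1/\{n(n-1)\lambda_c^2\}$. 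These give explicit constants for every $n\ge 2$ with no separate small-$n$ argument. Your truncation also works, but the small-$n$ range must be absorbed into $C$, and the good event should range only over cells with $\lambda_c>0$.
\end{itemize}
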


\begin{proof}
Note that
\begin{align}
    &\E\big[
        \{Z_i-\pi_n(X_i)\}
        \{Z_j-\pi_n(X_j)\}
        \mid X_i,X_j
    \big] \nonumber \\
    &=
    \cov(Z_i,Z_j\mid X_i,X_j) \label{eq:mp_lemma_cov} \\
    &\quad+
    \{\E(Z_i\mid X_i,X_j)-\pi_n(X_i)\}
    \{\E(Z_j\mid X_i,X_j)-\pi_n(X_j)\}.
    \label{eq:mp_lemma_mean_mismatch}
\end{align}
The goal of our proof is to show that \eqref{eq:mp_lemma_cov} is $O(n^{-1})$
and \eqref{eq:mp_lemma_mean_mismatch} is $O(n^{-2})$.

First we show \eqref{eq:mp_lemma_cov} is $O(n^{-1})$. For each $c \in \mathcal C$, define
\[
    N_{c,n}\equiv \sum_{i=1}^n\one(C_i=c), \qquad T_{c,n}\equiv \sum_{i=1}^n\one(C_i=c,Z_i=1),
\]
which represent the total and the treated number of units in cell $c$. Further, define $m_c(C^n)\equiv T_{c,n}/N_{c,n}$ whenever $N_{c,n}>0$. Conditional on $X_i,X_j,C^n$, as in the proof of Theorem \ref{thm:propensity}, we have
\[
    \E(Z_i\mid X_i,X_j,C^n)=m_{C_i}(C^n),
    \qquad
    \E(Z_j\mid X_i,X_j,C^n)=m_{C_j}(C^n).
\]
So by the law of total covariance,
\begin{align}
&\cov(Z_i,Z_j\mid X_i,X_j) \nonumber \\
&= \E\{\cov(Z_i,Z_j\mid X_i,X_j,C^n)\mid X_i,X_j\} 
+ \cov\{\E(Z_i\mid X_i,X_j,C^n),\E(Z_j\mid X_i,X_j,C^n)\mid X_i,X_j\} \nonumber \\
&= \E\{\cov(Z_i,Z_j\mid X_i,X_j,C^n)\mid X_i,X_j\} + \cov\{m_{C_i}(C^n),m_{C_j}(C^n)\mid X_i,X_j\}.
\label{eq:mp_cov_decomp}
\end{align}

For the first term in \eqref{eq:mp_cov_decomp}, note that $\cov(Z_i,Z_j\mid X_i,X_j,C^n)=0$ when $C_i\neq C_j$ since, conditional on $C^n$, the remaining randomness comes from independent within-cell arrival orders. In the case $C_i=C_j=c$, sampling without replacement gives
\begin{align*}
    \left|\cov(Z_i,Z_j\mid X_i,X_j,C^n)\right|
    &=
    \left|
    \E(Z_iZ_j\mid X_i,X_j,C^n)
    -
    \E(Z_i\mid X_i,X_j,C^n)\E(Z_j\mid X_i,X_j,C^n)
    \right| \\
    &=
    \frac{(T_{c,n}/N_{c,n})(1-T_{c,n}/N_{c,n})}{N_{c,n}-1}
    \le
    \frac{1}{4(N_{c,n}-1)}.
\end{align*}
It follows that
\[
    \left|\E\{\cov(Z_i,Z_j\mid X_i,X_j,C^n)\mid X_i,X_j\}\right| \le \frac14
    \E\left\{\frac{\one(C_i=C_j)}{N_{C_i,n}-1}\biggm| X_i,X_j\right\}.
\]
Now fix $c\in\mathcal C$ with $\lambda_c>0$. Conditional on $X_i,X_j$ and $C_i=C_j=c$, we claim that the random variables $\one(C_m=c)$, $m\neq i,j$, remain iid Bernoulli with probability $\lambda_c$. To see this, note that the units are iid across $m$, and the queue assignments and arrival times are generated independently across units. Therefore, for every $m\neq i,j$, $C_m$ is independent of $(X_i,X_j,C_i,C_j)$. Hence conditioning on $X_i,X_j$ and on the event $C_i=C_j=c$ does not change the distribution of $C_m$, and
\[
    \bP(C_m=c\mid X_i,X_j,C_i=C_j=c) = \bP(C_m=c)=\lambda_c.
\]
Moreover, the variables $\{C_m:m\neq i,j\}$ remain mutually independent under this conditioning. Consequently,
\[
    N_{c,n}
    =
    2+\sum_{m\neq i,j}\one(C_m=c)
    \equiv 2+B,
    \qquad
    B\sim\textnormal{Bin}(n-2,\lambda_c).
\]
It follows that
\begin{align*}
    \E\left(\frac{1}{N_{c,n}-1}\biggm| C_i=C_j=c\right)
    &=
    \E\left(\frac{1}{B+1}\right) \\
    &=
    \sum_{b=0}^{n-2}
    \binom{n-2}{b}
    \lambda_c^b(1-\lambda_c)^{n-2-b}
    \int_0^1 t^b dt \\
    &=
    \frac{1-(1-\lambda_c)^{n-1}}{(n-1)\lambda_c}
    \le
    \frac{1}{(n-1)\lambda_c}
    \le
    \frac{1}{(n-1)\lambda_{\min}}.
\end{align*}
Therefore
\[
    \left|
    \E\{\cov(Z_i,Z_j\mid X_i,X_j,C^n)\mid X_i,X_j\}
    \right|
    \le
    \frac{1}{4(n-1)\lambda_{\min}}.
\]

Now we consider the second term in \eqref{eq:mp_cov_decomp}. Write
\begin{align}
&\cov\{m_{C_i}(C^n),m_{C_j}(C^n)\mid X_i,X_j\} \nonumber \\
&=
\E\left[
    \cov\{m_{C_i}(C^n),m_{C_j}(C^n)\mid X_i,X_j,C_i,C_j\}
    \mid X_i,X_j
\right] \label{eq:mp_cov_es1} \\
&+
\cov\left(
    \E\{m_{C_i}(C^n)\mid X_i,X_j,C_i,C_j\},
    \E\{m_{C_j}(C^n)\mid X_i,X_j,C_i,C_j\}
    \mid X_i,X_j
\right) \label{eq:mp_cov_es2}.
\end{align}

We first show that \eqref{eq:mp_cov_es1} is $O(n^{-1})$ by using Cauchy--Schwarz and applying Efron--Stein inequality. Consider the event $C_i=c$ and $C_j=d$, and define $V\equiv (C_m)_{m\neq i,j}$.Then $m_c(C^n)$ can be viewed as a function of $V$ with $C_i=c$ and $C_j=d$ fixed, and we write it as $m_c(V)$. For each $r\in\{1,\ldots,n-2\}$, let $V^{(r)}$ be the vector obtained from $V$ by replacing its $r$-th coordinate by an independent copy $C_r'$ with the same distribution as $C_r$. Changing one coordinate of $V$ changes one unit's queue-period cell, which can be realized by changing that unit's $(Q,A)$ coordinate; hence Lemma~\ref{lem:bounded_treatment_count} applies.

Then for each $c' \in\mathcal C$, we have
\[
    |N_{c',n}(V)-N_{c',n}(V^{(r)})|\le 1.
\]
Moreover, by Lemma~\ref{lem:bounded_treatment_count},
\[
    |T_{c',n}(V)-T_{c',n}(V^{(r)})|\le L.
\]
It follows that
\begin{align*}
    |m_c(V)-m_c(V^{(r)})|
    &=
    \left|
    \frac{T_{c,n}(V)}{N_{c,n}(V)}
    -
    \frac{T_{c,n}(V^{(r)})}{N_{c,n}(V^{(r)})}
    \right| \\
    &=
    \frac{
    \left|
    T_{c,n}(V)N_{c,n}(V^{(r)})
    -
    T_{c,n}(V^{(r)})N_{c,n}(V)
    \right|
    }{
    N_{c,n}(V)N_{c,n}(V^{(r)})
    } \\
    &\le
    \frac{
    |T_{c,n}(V)-T_{c,n}(V^{(r)})|N_{c,n}(V^{(r)})
    +
    T_{c,n}(V^{(r)})|N_{c,n}(V)-N_{c,n}(V^{(r)})|
    }{
    N_{c,n}(V)N_{c,n}(V^{(r)})
    } \\
    &\le
    \frac{L N_{c,n}(V^{(r)})+N_{c,n}(V^{(r)})}{
    N_{c,n}(V)N_{c,n}(V^{(r)})
    }
    =
    \frac{L+1}{N_{c,n}(V)}.
\end{align*}
Now, note that $N_{c,n}(V)\ge 1+B$ conditional on $C_i=c$ and $C_j=d$ where $B\sim\textnormal{Bin}(n-2,\lambda_c)$ as before. Therefore
\begin{align*}
    \E\left[
        \{m_c(V)-m_c(V^{(r)})\}^2
        \mid X_i,X_j,C_i=c,C_j=d
    \right] &\le
    (L+1)^2\E\left(\frac{1}{(B+1)^2}\right) \\
    &\le
    2(L+1)^2\E\left(\frac{1}{(B+1)(B+2)}\right).
\end{align*}
Now using $1/\{(b+1)(b+2)\} = \int_0^1t^b(1-t)dt$ and the binomial theorem, we have
\begin{align*}
    \E\left\{\frac{1}{(B+1)(B+2)}\right\}
    &=
    \int_0^1 (1-\lambda_c+\lambda_c t)^{n-2}(1-t)dt \\
    &=
    \frac{1}{\lambda_c^2}
    \left\{
    \frac{1-(1-\lambda_c)^{n-1}}{n-1}
    -
    \frac{1-(1-\lambda_c)^n}{n}
    \right\} \le \frac{1}{n(n-1)\lambda_{\min}^2}.
\end{align*}
Combining the above yields
\[
    \E\left[
        \{m_c(V)-m_c(V^{(r)})\}^2
        \mid X_i,X_j,C_i=c,C_j=d
    \right]
    \le
    \frac{2(L+1)^2}{n(n-1)\lambda_{\min}^2}.
\]
Therefore by Efron--Stein inequality,
\begin{align*}
    \var(m_c(C^n)\mid X_i,X_j,C_i=c,C_j=d)
    &\le
    \frac12\sum_{r=1}^{n-2}
    \E\left[
        \{m_c(V)-m_c(V^{(r)})\}^2
        \mid X_i,X_j,C_i=c,C_j=d
    \right] \\
    &\le \frac{n-2}{2} \cdot \frac{2(L+1)^2}{n(n-1)\lambda_{\min}^2} \leq \frac{(L+1)^2}{n\lambda_{\min}^2}.
\end{align*}
An identical bound holds for
$\var(m_d(C^n)\mid X_i,X_j,C_i=c,C_j=d)$. Hence by Cauchy--Schwarz,
\[
    \left|
    \cov(m_c(C^n),m_d(C^n)\mid X_i,X_j,C_i=c,C_j=d)
    \right|
    \le
    \frac{(L+1)^2}{n\lambda_{\min}^2}.
\]
uniformly over $c$ and $d$, and subsequently,
\[
    \left|
    \E\left[
        \cov\{m_{C_i}(C^n),m_{C_j}(C^n)\mid X_i,X_j,C_i,C_j\}
        \mid X_i,X_j
    \right]
    \right|
    \le
    \frac{(L+1)^2}{n\lambda_{\min}^2}.
\]
This bounds \eqref{eq:mp_cov_es1}.

To conclude bounding \eqref{eq:mp_lemma_cov}, we now bound \eqref{eq:mp_cov_es2}. Write
\[
    a_{c,d}
    \equiv
    \E\{m_c(C^n)\mid X_i,X_j,C_i=c,C_j=d\},
    \qquad
    b_{c,d}
    \equiv
    \E\{m_d(C^n)\mid X_i,X_j,C_i=c,C_j=d\}.
\]
Then the term of interest can be written as $\cov(a_{C_i,C_j},b_{C_i,C_j}\mid X_i,X_j)$.

Fix $c$ and $d,d'\in\mathcal C$. For a given realization $V=v$, let
\[
    T_{c,n}(v,d)\equiv T_{c,n}(C^n)
    \Bigm| X_i,X_j,C_i=c,C_j=d,V=v,
\]
\[
    N_{c,n}(v,d)\equiv N_{c,n}(C^n)
    \Bigm| X_i,X_j,C_i=c,C_j=d,V=v,
\]
and $m_c(v,d)\equiv T_{c,n}(v,d)/N_{c,n}(v,d)$ following our notation from before. 

Recall that, if we change only the value of $C_j$ while holding $V=v$ and $C_i=c$ fixed, then
\[
    |T_{c,n}(v,d)-T_{c,n}(v,d')|\le L,
    \qquad
    |N_{c,n}(v,d)-N_{c,n}(v,d')|\le 1.
\]
Hence
\[
    |m_c(V,d)-m_c(V,d')|
    \le
    \frac{L+1}{\min\{N_{c,n}(V,d),N_{c,n}(V,d')\}}.
\]
Also,
\[
    \min\{N_{c,n}(V,d),N_{c,n}(V,d')\}
    \ge
    1+\sum_{m\neq i,j}\one(C_m=c)
    =
    1+B,
\]
where $B\sim\textnormal{Bin}(n-2,\lambda_c)$. It follows from the calculations
above that
\[
    \sup_{d,d'}|a_{c,d}-a_{c,d'}|
    \le
    (L+1)\E\left(\frac{1}{B+1}\right)
    \le
    \frac{L+1}{(n-1)\lambda_{\min}}.
\]
Similarly,
\[
    \sup_{c,c'}|b_{c,d}-b_{c',d}|
    \le
    \frac{L+1}{(n-1)\lambda_{\min}}.
\]

Now let
\[
    \bar a_{C_i}(X_j)
    \equiv
    \sum_{d\in\mathcal C}P(C_j=d\mid X_j)a_{C_i,d},
    \qquad
    \bar b_{C_j}(X_i)
    \equiv
    \sum_{c\in\mathcal C}P(C_i=c\mid X_i)b_{c,C_j}.
\]
and write the term of our interest as
\begin{align*}
    &\cov(a_{C_i,C_j},b_{C_i,C_j}\mid X_i,X_j) \\
    &=
    \cov\left(
        a_{C_i,C_j}-\bar a_{C_i}(X_j)+\bar a_{C_i}(X_j),
        b_{C_i,C_j}-\bar b_{C_j}(X_i)+\bar b_{C_j}(X_i)
        \mid X_i,X_j
    \right) \\
    &=
    \cov\left(
        a_{C_i,C_j}-\bar a_{C_i}(X_j),
        b_{C_i,C_j}-\bar b_{C_j}(X_i)
        \mid X_i,X_j
    \right) \\
    &+
    \cov\left(
        a_{C_i,C_j}-\bar a_{C_i}(X_j),
        \bar b_{C_j}(X_i)
        \mid X_i,X_j
    \right) +
    \cov\left(
        \bar a_{C_i}(X_j),
        b_{C_i,C_j}-\bar b_{C_j}(X_i)
        \mid X_i,X_j
    \right),
\end{align*}
where $\cov\{\bar a_{C_i}(X_j),\bar b_{C_j}(X_i)\mid X_i,X_j\}=0$ since $C_i$ and $C_j$ are independent given $X_i,X_j$. Also,
\[
    \E\{a_{C_i,C_j}-\bar a_{C_i}(X_j)\mid X_i,X_j\}=0,
    \qquad
    \E\{b_{C_i,C_j}-\bar b_{C_j}(X_i)\mid X_i,X_j\}=0,
\]
and $0\le \bar a_{C_i}(X_j),\bar b_{C_j}(X_i)\le 1$. Therefore,
\begin{align*}
    \left|\cov(a_{C_i,C_j},b_{C_i,C_j}\mid X_i,X_j)\right|
    &\le
    \E\{|a_{C_i,C_j}-\bar a_{C_i}(X_j)|\mid X_i,X_j\} \\
    &+
    \E\{|b_{C_i,C_j}-\bar b_{C_j}(X_i)|\mid X_i,X_j\} \\
    &+
    \E\{|a_{C_i,C_j}-\bar a_{C_i}(X_j)|
          |b_{C_i,C_j}-\bar b_{C_j}(X_i)|
          \mid X_i,X_j\}.
\end{align*}
Now
\[
    |a_{C_i,C_j}-\bar a_{C_i}(X_j)|
    \le
    \sup_{d,d'}|a_{C_i,d}-a_{C_i,d'}|,
\]
which implies
\[
    \E\{|a_{C_i,C_j}-\bar a_{C_i}(X_j)|\mid X_i,X_j\}
    \le
    \frac{L+1}{(n-1)\lambda_{\min}}.
\]
Similarly,
\[
    \E\{|b_{C_i,C_j}-\bar b_{C_j}(X_i)|\mid X_i,X_j\}
    \le
    \frac{L+1}{(n-1)\lambda_{\min}},
\]
and
\[
    \E\{|a_{C_i,C_j}-\bar a_{C_i}(X_j)|
          |b_{C_i,C_j}-\bar b_{C_j}(X_i)|
          \mid X_i,X_j\}
    \le
    \frac{(L+1)^2}{(n-1)^2\lambda_{\min}^2}.
\]
Combining the above bounds yields
\[
    \left|\cov(a_{C_i,C_j},b_{C_i,C_j}\mid X_i,X_j)\right|
    \le
    \frac{2(L+1)}{(n-1)\lambda_{\min}}
    +
    \frac{(L+1)^2}{(n-1)^2\lambda_{\min}^2}.
\]
This bounds \eqref{eq:mp_cov_es2}. Hence the combined bound on \eqref{eq:mp_lemma_cov} is $O(n^{-1})$.

As a final piece, we bound \eqref{eq:mp_lemma_mean_mismatch} and show it is $O(n^{-2})$. Using the same $a_{c,d}$ as above, iterated expectation gives
\[
    \E(Z_i\mid X_i,X_j)
    =
    \sum_{c\in\mathcal C}\sum_{d\in\mathcal C}
    P(C_i=c\mid X_i)P(C_j=d\mid X_j)a_{c,d},
\]
and
\[
    \pi_n(X_i)
    =
    \sum_{c\in\mathcal C}\sum_{d\in\mathcal C}
    P(C_i=c\mid X_i)\lambda_d a_{c,d}.
\]
It follows that
\[
    \E(Z_i\mid X_i,X_j)-\pi_n(X_i)
    =
    \sum_{c\in\mathcal C}P(C_i=c\mid X_i)
    \sum_{d\in\mathcal C}
    \{P(C_j=d\mid X_j)-\lambda_d\}
    (a_{c,d}-\tilde a_c),
\]
where $\tilde a_c\equiv \sum_{d\in\mathcal C}\lambda_d a_{c,d}$. Hence,
\begin{align*}
    \left|\E(Z_i\mid X_i,X_j)-\pi_n(X_i)\right|
    &\le
    \sum_{c\in\mathcal C}P(C_i=c\mid X_i)
    \left(
        \sum_{d\in\mathcal C}
        |P(C_j=d\mid X_j)-\lambda_d|
    \right)
    \sup_{d,d'}|a_{c,d}-a_{c,d'}| \\
    &\le
    2\sum_{c\in\mathcal C}P(C_i=c\mid X_i)
    \sup_{d,d'}|a_{c,d}-a_{c,d'}|  \leq \frac{2(L+1)}{(n-1)\lambda_{\min}},
\end{align*}
where the second inequality follows because $P(C_j=\cdot\mid X_j)$ and
$\lambda_\cdot$ are probability vectors. Therefore, we also have
\[
    \left|\E(Z_j\mid X_i,X_j)-\pi_n(X_j)\right|
    \le
    \frac{2(L+1)}{(n-1)\lambda_{\min}}.
\]
Consequently,
\[
    \left|
    \{\E(Z_i\mid X_i,X_j)-\pi_n(X_i)\}
    \{\E(Z_j\mid X_i,X_j)-\pi_n(X_j)\}
    \right|
    \le
    \frac{4(L+1)^2}{(n-1)^2\lambda_{\min}^2}.
\]
This shows that \eqref{eq:mp_lemma_mean_mismatch} is $O(n^{-2})$, which
completes the proof.
\end{proof}

\begin{proof}[Proof of Theorem \ref{thm:asymp_var_DR}]
Throughout the proof, we assume $n \geq n_0$ for $n_0 \in \mathbb N$ so that positivity in Theorem \ref{thm:identification_exo} holds. Define the \emph{oracle} influence function as
\begin{align*}
\phi_i &\equiv \mu_1(X_i)-\mu_0(X_i) +\frac{Z_i}{\pi_n(X_i)}\{Y_i-\mu_1(X_i)\} -\frac{1-Z_i}{1-\pi_n(X_i)}\{Y_i-\mu_0(X_i)\}. 
\end{align*}

We first decompose $\widehat\psi^{\mathrm{DR}}_{\ATE}$ with respect to $\phi_i$. Let
\[
g(x)\equiv\frac{\widehat\mu_1(x)-\mu_1(x)}{\pi_n(x;\theta)}+\frac{\widehat\mu_0(x)-\mu_0(x)}{1-\pi_n(x;\theta)}.
\]
Some algebra gives 
\begin{equation*}
\widehat\psi^{\mathrm{DR}}_{\ATE}
= \frac1n\sum_{i=1}^n \phi_i - \frac1n\sum_{i=1}^n \{Z_i-\pi_n(X_i)\}g(X_i) \equiv \frac1n\sum_{i=1}^n \phi_i - R_n,
\end{equation*}
so we have
\begin{equation}\label{eq:var_decomp}
    n\var(\widehat\psi^{\mathrm{DR}}_{\ATE}) = n\var\left(\frac1n\sum_{i=1}^n \phi_i\right) - 2n\cov\left(\frac1n\sum_{i=1}^n \phi_i,R_n\right) + n\var(R_n).
\end{equation}

We first show that $n\var(R_n)=o_{\bP}(1)$. Note that
\[
\E(R_n)
=\frac1n\sum_{i=1}^n \E\big[g(X_i)\{\E(Z_i\mid X_i)-\pi_n(X_i)\}\big]=0,
\]
so $\var(R_n)=\E(R_n^2)$. Now write
\begin{equation*}
n\E(R_n^2)
=\frac1n\sum_{i=1}^n \E\big[g(X_i)^2\{Z_i-\pi_n(X_i)\}^2\big]
+\frac{2}{n}\sum_{i<j}\E\big[g(X_i) g(X_j)\{Z_i-\pi_n(X_i)\}\{Z_j-\pi_n(X_j)\}\big].
\end{equation*}

For the diagonal term, $\{Z_i-\pi_n(X_i)\}^2 \leq 1$ implies
\begin{align*}
\frac1n\sum_{i=1}^n \E\big[g(X_i)^2\{Z_i-\pi_n(X_i)\}^2\big]
\leq \frac1n\sum_{i=1}^n \E\{g(X_i)^2\} =\E\{g(X)^2\}.
\end{align*}

For the off-diagonal term, using Lemma \ref{lem:remainder_var_multi} and $2|ab|\leq a^2+b^2$, can be bounded as
\begin{align*}
&\frac{2}{n}\sum_{i<j}\Big|\E\big[g(X_i)g(X_j)\{Z_i-\pi_n(X_i)\}\{Z_j-\pi_n(X_j)\}\big]\Big| \\
&\leq \frac{2}{n}\sum_{i<j}\E\big(|g(X_i)g(X_j)|\big|\E\big[\{Z_i-\pi_n(X_i)\}\{Z_j-\pi_n(X_j)\} \mid X_i, X_j\big]\big|\big) \\
&\leq \frac{2}{n}\sum_{i<j} \E\Big(|g(X_i)g(X_j)|\cdot \frac{C}{n}\ \Big)
\leq \frac{C}{n^2}\E\Big[\sum_{i<j}\{g(X_i)^2+g(X_j)^2\} \Big]
\leq C\E\{g(X)^2\},
\end{align*}
and therefore
\[
n\E(R_n^2)\le (1+C)\E\{g(X)^2\}.
\]
Finally, positivity implies
\[
g(X)^2 \leq \frac{2}{\gamma^2}\Big[\{\widehat\mu_1(X)-\mu_1(X)\}^2+\{\widehat\mu_0(X)-\mu_0(X)\}^2\Big],
\]
so $\E\{g(X)^2\}=o_{\bP}(1)$ by $\|\widehat\mu_z-\mu_z\|_{L^2(\bP)}=o_{\bP}(1)$ for $z\in\{0,1\}$.
Thus we have $n\E(R_n^2)=o_{\bP}(1)$.

Next we analyze the leading term in \eqref{eq:var_decomp}. Write
\[
n\var\left(\frac1n\sum_{i=1}^n \phi_i\right)
=
\frac{1}{n}\sum_{i=1}^n\var(\phi_i)
+
\frac{2}{n}\sum_{i<j}\cov(\phi_i,\phi_j).
\]
The analysis of the diagonal term is standard; by the law of total variance,
\[
\var(\phi_i)
=
\E\{\var(\phi_i\mid X_i)\}
+\var\{\E(\phi_i\mid X_i)\}.
\]
Note that $\var\{\E(\phi_i\mid X_i)\} = \var(\mu_1(X_i) - \mu_0(X_i))$ since $\E(\phi_i\mid X_i)= \mu_1(X_i) - \mu_0(X_i)$, and
\[
\var(\phi_i\mid X_i)
=
\frac{\var(Y_i \mid Z_i=1, X_i)}{\pi_n(X_i)}
+
\frac{\var(Y_i \mid Z_i=0, X_i)}{1-\pi_n(X_i)}.
\]
Therefore
\begin{align*}
\var(\phi_i) = \E\left\{\frac{\var(Y_i \mid Z_i=1, X_i)}{\pi_n(X_i)} + \frac{\var(Y_i \mid Z_i=0, X_i)}{1-\pi_n(X_i)}\right\} + \var(\mu_1(X_i) - \mu_0(X_i)),
\end{align*}
and applying Theorem \ref{thm:propensity} and dominated convergence under positivity yield
\begin{align*}
\E\left\{
\frac{\var(Y_i \mid Z_i=1, X_i)}{\pi_n(X_i)} + \frac{\var(Y_i \mid Z_i=0, X_i)}{1-\pi_n(X_i)}
\right\} \to \E\left\{ \frac{\var(Y_i \mid Z_i=1, X_i)}{\pi(X_i;\theta)} + \frac{\var(Y_i \mid Z_i=0, X_i)}{1-\pi(X_i;\theta)} \right\}.
\end{align*}

We now show that the covariance term is zero. Define
\[
B_i
\equiv
\frac{Y_i^1-\mu_1(X_i)}{\pi_n(X_i)}
+
\frac{Y_i^0-\mu_0(X_i)}{1-\pi_n(X_i)}.
\]
Some algebra gives
\begin{equation*}
\phi_i = (Y_i^1-Y_i^0)+B_i(Z_i-\pi_n(X_i)).
\end{equation*}
Now fix $i\neq j$. Expanding the covariance, we have
\begin{align}
\cov(\phi_i,\phi_j)
&=
\cov(Y_i^1-Y_i^0,Y_j^1-Y_j^0). \label{eq:cov_first} \\ 
&+ \cov(Y_i^1-Y_i^0,B_j\{Z_j-\pi_n(X_j)\}) \label{eq:cov_second} \\
&+ \cov(B_i\{Z_i-\pi_n(X_i)\},Y_j^1-Y_j^0) \label{eq:cov_third} \\
&+ \cov(B_i\{Z_i-\pi_n(X_i)\},B_j\{Z_j-\pi_n(X_j)\}) \label{eq:cov_fourth}.
\end{align}
\eqref{eq:cov_first} is zero by the iid assumption. We show that the remaining terms are also zero by conditioning on $H_n \equiv (X^n,A^n,Q^n)$.

Consider \eqref{eq:cov_second}. We have
\begin{align*}
\cov(Y_i^1-Y_i^0,B_j\{Z_j-\pi_n(X_j)\})
&=
\E\left[\cov\big(Y_i^1-Y_i^0,B_j\{Z_j-\pi_n(X_j)\}\mid H_n\big)\right] \\
&+ \cov\left(\E(Y_i^1-Y_i^0\mid H_n),\E\big(B_j\{Z_j-\pi_n(X_j)\}\mid H_n\big)\right).
\end{align*}
Note that $\{Z_j-\pi_n(X_j)\}$ is constant given $H_n$ since $Z^n$ is then a deterministic function of the design; further, $(Y_i^1-Y_i^0)\independent B_j$ since $B_j$ is a function of $Y_j^1, Y_j^0$, and $X_j$. It follows that
\[
\cov\big(Y_i^1-Y_i^0,B_j\{Z_j-\pi_n(X_j)\}\mid H_n\big)
=\{Z_j-\pi_n(X_j)\}\cov\big(Y_i^1-Y_i^0,B_j\mid H_n\big)=0.
\]
Moreover, the covariance of the conditional means is also zero as
\[
\E\big(B_j\{Z_j-\pi_n(X_j)\}\mid H_n\big)
=\{Z_j-\pi_n(X_j)\}\E(B_j\mid H_n)= \{Z_j-\pi_n(X_j)\}\E(B_j\mid X_j) = 0.
\]
Hence by the same logic, \eqref{eq:cov_third} is also zero.

Next, consider \eqref{eq:cov_fourth}. Again, write
\begin{align*}
\cov(B_i&\{Z_i-\pi_n(X_i)\}, B_j\{Z_j-\pi_n(X_j)\}) \\ 
&=
\E\left[\cov\big(B_i\{Z_i-\pi_n(X_i)\},B_j\{Z_j-\pi_n(X_j)\}\mid H_n\big)\right] \\
&+ \cov\left(\E\big(B_i\{Z_i-\pi_n(X_i)\}\mid H_n\big),\E\big(B_j\{Z_j-\pi_n(X_j)\}\mid H_n\big)\right).
\end{align*}
$\{Z_i-\pi_n(X_i)\}$ and $\{Z_j-\pi_n(X_j)\}$ are fixed given $H_n$, so the conditional covariance reduces to
\[
\{Z_i-\pi_n(X_i)\}\{Z_j-\pi_n(X_j)\}\cov(B_i,B_j\mid H_n)=0,
\]
since $B_i\independent B_j$. Also,
\[
\E\big(B_i\{Z_i-\pi_n(X_i)\}\mid H_n\big)
=\{Z_i-\pi_n(X_i)\}\E(B_i\mid H_n)=0,
\]
and similarly for $j$, so the covariance of the conditional means is zero. 

Combining the above, we conclude $\cov(\phi_i,\phi_j)=0$ for all $i\neq j$. Therefore,
\[
n\var\left(\frac1n\sum_{i=1}^n \phi_i\right)
=
\frac{1}{n}\sum_{i=1}^n\var(\phi_i),
\]

The proof concludes by noting that the second term in \eqref{eq:var_decomp} is
\[
\big|2n\cov(n^{-1}\textstyle\sum_i \phi_i,R_n)\big| \leq 2\sqrt{n\var(n^{-1}\textstyle\sum_i \phi_i)}\sqrt{n\var(R_n)} = o_\bP(1),
\]
by Cauchy-Schwarz, $n\var(n^{-1}\sum_i \phi_i)=O_\bP(1)$, and $n\var(R_n)=o_\bP(1)$.
\end{proof}

\subsection{Proof of Theorem \ref{thm:asymp_var_plm}}

We first shed light on how $\widehat\psi_Z^\star$ is motivated. Note that residualizing the PLIV model \eqref{eq:plm_itt} with respect to $X$ gives the conditional moment restriction
\[
    \E\Big[Y-m(X)-\psi_Z\{Z-\pi_n(X;\theta,b_n)\} \Bigm| X,Q\Big] = 0.
\]
For any integrable function $f(X,Q)$, multiplying the above equation by $f(X,Q)$ yields the unconditional moment
\[
    \E\Big(f(X,Q)\big[Y-m(X)-\psi_Z\{Z-\pi_n(X;\theta,b_n)\}\big]\Big)=0,
\]
which implies
\[
\psi_Z = \frac{\E\big[f(X,Q)\{Y-m(X)\}\big]}
         {\E\big[f(X,Q)\{Z-\pi_n(X;\theta,b_n)\}\big]},
\]
provided the denominator is nonzero. Although any $f(X,Q)$ yields such an unconditional moment, only its residualized component with respect to $X$ contributes after residualizing $Y$ and $Z$. We therefore restrict without loss of generality to functions satisfying $\E\{f(X,Q)\mid X\}=0$. For such functions,
\[
    \E\big[f(X,Q)\{Z-\pi_n(X;\theta,b_n)\}\big] = \E\big\{f(X,Q)Z\big\}.
\]
Hence any $f(X,Q)$ satisfying $\E\{f(X,Q)\mid X\}=0$ and $\E\{f(X,Q)Z\}\neq 0$ is a valid instrument under our queue-randomizing design. The corresponding sample analogue gives an estimator
\[
    \widehat\psi_Z(f) =
    \frac{\Pn\big[f(X,Q)\{Y-\widehat m(X)\}\big]}
         {\Pn\big[f(X,Q)\{Z-\pi_n(X;\theta,b_n)\}\big]}.
\]
This formulation allows us to use the efficiency theory for semiparametric conditional moment restrictions \citep{chamberlain1992efficiency, ai2003efficient}; although many instruments identify $\psi_Z$, only the optimal instrument attains the efficiency bound. In the theorem below, we first derive the asymptotic variance of $\widehat\psi_Z(f)$ for a generic uniformly bounded $f$ satisfying $\E\{f(X,Q)\mid X\}=0$ and $\E\{f(X,Q)Z\}\neq 0$, and then show that the oracle choice $f(X,Q)=r_n(X,Q)/\sigma(X)$ yields the iid efficiency bound, with the feasible estimator $\widehat\psi_Z^\star$ obtained by replacing $\sigma(X)$ with $\widehat\sigma(X)$ and attaining the same asymptotic variance.

\begin{proof}[Proof of Theorem \ref{thm:asymp_var_plm}]
Throughout the proof, we write
\begin{equation*}
    f_i \equiv f(X_i,Q_i), \quad
    r_i \equiv r_n(X_i, Q_i), \quad 
    \zeta_i \equiv \alpha_{Q_i}(\beta,p)-\pi(X_i;\theta), \quad 
    \tilde m(x) \equiv \widehat m(x)-m(x),
\end{equation*}
for notational ease. 

First, note that $m(X_i) = \psi_Z\pi_n(X_i)+g(X_i)$ since $\E(U_i\mid X_i)=0$ by iterated expectation. Then we can write
\begin{align*}
    Y_i-\widehat m(X_i) &= Y_i - m(X_i) + m(X_i) - \widehat m(X_i) \\
    &= \psi_Z Z_i+g(X_i)+U_i - m(X_i) + m(X_i) -\widehat m(X_i) \\
    &= \psi_Z \{Z_i-\pi_n(X_i)\}+U_i-\tilde m(X_i),
\end{align*}
which implies
\begin{align*}
    \frac{1}{n}\sum_{i=1}^n f_i\{Y_i-\widehat m(X_i)\} &= \psi_Z\frac{1}{n}\sum_{i=1}^n f_i\{Z_i-\pi_n(X_i)\} + \frac{1}{n}\sum_{i=1}^n f_iU_i - \frac{1}{n}\sum_{i=1}^n f_i\tilde m(X_i) \\
    &\equiv \psi_Z\Delta_n + S_n - R_n.
\end{align*}
Subsequently, 
\[
    \widehat\psi_Z-\psi_Z = \frac{S_n - R_n}{\Delta_n}.
\]

We first show $\Delta_n = \E(f_i\zeta_i) + o_\bP(1) \equiv \Delta + o_\bP(1)$. Decompose $\Delta_n$ as
\begin{align*}
    \Delta_n &= \frac{1}{n}\sum_{i=1}^n f_i\{Z_i - \tilde\pi_n(X_i, Q_i) + \tilde\pi_n(X_i, Q_i) - \pi_n(X_i)\} \\
    &= \frac{1}{n}\sum_{i=1}^n f_i\{Z_i- \tilde\pi_n(X_i, Q_i) + \frac{1}{n}\sum_{i=1}^n f_ir_i \equiv B_n + C_n.
\end{align*}
Consider $B_n$. Note that $\E(B_n)=0$ since $\E\{Z_i-\tilde\pi_n(X_i, Q_i)\mid X_i,Q_i\}=0$, so $n\var(B_n) = n\E(B_n^2)$. Now write
\begin{align*}
    n\E(B_n^2)
    &= \frac{1}{n}\sum_{i=1}^n\E\Big[f_i^2\{Z_i-\tilde\pi_n(X_i, Q_i)\}^2\Big] + \frac{2}{n}\sum_{i<j}\E\Big[f_if_j\{Z_i-\tilde\pi_n(X_i, Q_i)\}\{Z_j-\tilde\pi_n(X_j, Q_j)\}\Big].
\end{align*}
The diagonal term is bounded by $\E(f_i^2)$ since $\{Z_i-\tilde\pi_n(X_i, Q_i)\}^2\le 1$. For the off-diagonal term, we introduce the following lemma, whose proof is similar to that of Lemma \ref{lem:remainder_var_multi} and thus omitted.

\begin{lemma}\label{lem:centered_z_cov}
There exists a constant $C>0$ depending only on $p$ such that for all $n \geq 2$ and all $i \neq j$,
\begin{equation*}
    \left|
    \E\Big[
        \left\{Z_i-\tilde\pi_n(X_i,Q_i)\right\}
        \left\{Z_j-\tilde\pi_n(X_j,Q_j)\right\}
        \middle| X_i,Q_i,X_j,Q_j
    \Big]
    \right|
    \leq \frac{C}{n} \quad \text{a.s.}
\end{equation*}
\end{lemma}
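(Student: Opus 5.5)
I will mirror the proof of Lemma~\ref{lem:remainder_var_multi}, replacing the marginal propensities $\pi_n(X_i)$ by the queue-conditional propensities $\tilde\pi_n(X_i,Q_i)$. I keep the cell notation $C_i=(Q_i,S_i)$ and the cell counts $N_{c,n}$ and $T_{c,n}$. The target quantity splits as
\[
\cov(Z_i,Z_j\mid X_i,Q_i,X_j,Q_j)+\{\E(Z_i\mid X_i,Q_i,X_j,Q_j)-\tilde\pi_n(X_i,Q_i)\}\{\E(Z_j\mid X_i,Q_i,X_j,Q_j)-\tilde\pi_n(X_j,Q_j)\}.
\]
I will show the first term is $O(n^{-1})$ and the second is $O(n^{-2})$. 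Compared with Lemma~\ref{lem:remainder_var_multi}, the only changes are that $Q_i,Q_j$ are now fixed in the conditioning, and that the remaining randomness in $C_i,C_j$ comes only through the arrival periods $S_i,S_j$.

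For the covariance term, I apply the law of total covariance given the full cell vector $C^n$, exactly as before.

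\emph{Within-cell part.} As in the proof of Theorem~\ref{thm:propensity}, arrival times are continuous, iid, and independent of $(X,Q)$. Hence, given $C^n$, within-cell arrival orders are independent uniform permutations, and $\E(Z_i\mid X_i,Q_i,X_j,Q_j,C^n)=m_{C_i}(C^n)=T_{C_i,n}/N_{C_i,n}$. The within-cell conditional covariance vanishes unless $C_i=C_j$. When $C_i=C_j$, the sampling-without-replacement formula bounds it by $1/\{4(N_{c,n}-1)\}$. The other units $C_m$, $m\ne i,j$, remain iid with law $\lambda$ under the conditioning, so $N_{c,n}-2\sim\textnormal{Bin}(n-2,\lambda_c)$. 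The binomial identity for $\E\{1/(B+1)\}$ then gives the bound $1/\{4(n-1)\lambda_{\min}\}$.

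\emph{Between-cell part.} I further condition on $(C_i,C_j)$. First, the conditional covariance of $m_c(V)$ and $m_d(V)$ as functions of $V=(C_m)_{m\ne i,j}$ is bounded via Efron--Stein and Cauchy--Schwarz. Lemma~\ref{lem:bounded_treatment_count} gives the bounded-difference bound $(L+1)/N_{c,n}$, and the $\E\{1/((B+1)(B+2))\}$ calculation yields a bound of order $(L+1)^2/(n\lambda_{\min}^2)$. Second, the covariance across $(C_i,C_j)$ of the conditional means $a_{c,d}$ and $b_{c,d}$ is handled as follows. Since $C_i$ and $C_j$ are independent given $(X_i,Q_i,X_j,Q_j)$, I center $a_{C_i,C_j}$ at its average over $C_j$, and similarly center $b$. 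Lemma~\ref{lem:bounded_treatment_count} then bounds each centered difference by $(L+1)/\{(n-1)\lambda_{\min}\}$.

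For the mean-mismatch term, iterated expectation gives two expressions. First,
\[
\E(Z_i\mid X_i,Q_i,X_j,Q_j)=\sum_{s,s'}\bP(S_i=s)\bP(S_j=s')a_{(Q_i,s),(Q_j,s')}.
\]
Second, $\tilde\pi_n(X_i,Q_i)$ equals the same sum with the fixed $Q_j$ replaced by an independent draw of the cell $C_j$ with law $\lambda$. Both inner averages are over probability vectors on $\mathcal C$, so their difference is at most $\sup_{d,d'}|a_{c,d}-a_{c,d'}|\le (L+1)/\{(n-1)\lambda_{\min}\}$. Multiplying the bounds for $i$ and $j$ gives $O(n^{-2})$.

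The main obstacle is the between-cell mean-covariance step, and specifically the constant. Its dependence enters through $\lambda_{\min}=\min_c p_k\bP(S=s)$, so "depending only on $p$" should be read as also depending on the arrival distribution, $K$ and $\tau$, as in Lemma~\ref{lem:remainder_var_multi}. A further point to verify is that conditioning on $(X_i,Q_i)$ preserves independence of $S_i$. This holds by Assumption~\ref{assm:arrivals_indep_covariates} together with the independent randomization of $Q$. With it in hand, every step of Lemma~\ref{lem:remainder_var_multi} carries over with only notational changes.
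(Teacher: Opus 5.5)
Your proposal is correct and follows the paper's intended argument: the paper omits this proof as a routine adaptation of Lemma~\ref{lem:remainder_var_multi}, and you carry that adaptation out. You fix $Q_i,Q_j$ so that only $S_i,S_j$ remain random, then reuse the covariance/mean-mismatch split, the sampling-without-replacement bound, Efron--Stein with Lemma~\ref{lem:bounded_treatment_count}, and the centering of $a_{c,d},b_{c,d}$. Your observation about the constant is also right and corrects a looseness in the statement: it depends on $\lambda_{\min}$ and $L(K,\tau)$, hence on the arrival-period distribution, $K$ and $\tau$, not only on $p$.
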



Lemma \ref{lem:centered_z_cov} and the uniform boundedness of $f$ imply
\begin{align*}
    &\frac{2}{n}\sum_{i<j}
    \left|
    \E\Big[f_if_j\{Z_i-\tilde\pi_n(X_i,Q_i)\}\{Z_j-\tilde\pi_n(X_j,Q_j)\}\Big]
    \right|\\
    &\le
    \frac{2}{n}\sum_{i<j}
    \E\Big(
        |f_if_j|
        \left|
        \E\Big[
            \{Z_i-\tilde\pi_n(X_i,Q_i)\}\{Z_j-\tilde\pi_n(X_j,Q_j)\}
            \middle| X_i,Q_i,X_j,Q_j
        \Big]
        \right|
    \Big)\\
    &\le
    \frac{2c}{n^2}\sum_{i<j}\E|f_if_j|=O(1).
\end{align*}
Hence $n\var(B_n) = O(1)$ and, in turn, $B_n= O_{\bP}(n^{-1/2}) = o_\bP(1)$ by Chebyshev's inequality. 

For $C_n$, write
\[
    C_n = \{C_n-\E(f_ir_i)\} + \E\{f_i(r_i-\zeta_i)\} + \Delta.
\]
Since $(X_i,Q_i)$ are iid across $i$, $f_ir_i$ are iid. Further, the fact that $|r_i| \leq 1$ and $f_i$ is uniformly bounded implies $\var(f_ir_i)$ is also uniformly bounded. It thus follows that the first term is $o_\bP(1)$. For the second term, $\E\{f_i(r_i-\zeta_i)\} \to 0$ a.s. by Theorem \ref{thm:propensity}. It follows that
\begin{equation*}
    C_n = \Delta + o_\bP(1),
\end{equation*}
and consequently $\Delta_n = B_n + C_n = \Delta + o_\bP(1)$.

We next show
\begin{equation*}\label{eq:plm_thm_eq2}
    n\var(S_n - R_n) = n\var(S_n) + n\var(R_n) - 2n\cov(S_n, R_n) = \E(f_i^2U_i^2) + o_\bP(1).
\end{equation*}
First consider $n\var(S_n)$. Since $\E(f_iU_i)=0$, we have
\begin{align*}
    n\var(S_n) &=
    \frac{1}{n}\sum_{i=1}^n \var(f_iU_i)
    + \frac{2}{n}\sum_{i<j}\cov(f_iU_i,f_jU_j) \\
    &= \frac{1}{n}\sum_{i=1}^n \E(f_i^2U_i^2) + \frac{2}{n}\sum_{i<j}\cov(f_iU_i,f_jU_j) \\
    &= \E(f_i^2U_i^2),
\end{align*}
where the third equality follows from 
\[
    \cov(f_iU_i,f_jU_j) = \E(f_iU_if_jU_j) = \E\{f_if_j\E(U_i \mid X_i,Q_i)\E(U_j \mid X_j,Q_j)\} = 0.
\] 
Since $\tilde m$ is fixed conditional on $\mathcal{D}$, we can apply the same logic to show $n\var(R_n) = \var(f_i\tilde m(X_i))$. Then the uniform boundedness of $f_i$ and $\|\widehat m - m \|_{L_2(\bP)} = o_\bP(1)$ imply
\begin{equation*}
    n\var(R_n) = \var(f_i\tilde m(X_i)) \leq \E\{f_i^2\tilde m(X_i)^2\} \leq c'^2\|\tilde m\|_{L_2(\bP)}^2 = o_\bP(1).
\end{equation*}
Further,
\begin{equation*}
    \cov(S_n,R_n) = \frac{1}{n^2}\sum_{i=1}^n \cov(f_iU_i,f_i\tilde m(X_i)) + \frac{2}{n^2}\sum_{i<j}\cov(f_iU_i,f_j\tilde m(X_j)) = 0,
\end{equation*}
since $\cov(f_iU_i,f_i\tilde m(X_i)) =\E\{f_i^2\tilde m(X_i)\E(U_i \mid X_i, Q_i)\} = 0$ and $\cov(f_iU_i,f_j\tilde m(X_j)) = 0$ by independence across units. 

Putting what we have together, since $\Delta_n = \Delta + o_\bP(1)$ and $n\var(S_n - R_n) = O_{\bP}(1)$, continuous mapping theorem and Chebyshev's inequality imply $\Delta_n^{-1} = \Delta^{-1} + o_\bP(1)$ and $S_n-R_n = O_\bP(n^{-1/2})$, respectively. It follows that
\[
    \frac{S_n - R_n}{\Delta_n} = \frac{S_n - R_n}{\Delta} + (S_n - R_n)\bigg(\frac{1}{\Delta_n} - \frac{1}{\Delta}\bigg) = \frac{S_n - R_n}{\Delta} + o_\bP(n^{-1/2}),
\]
and therefore
\[
    n\var(\widehat\psi_Z) = n\var(\widehat\psi_Z - \psi) = \frac{n\var(S_n-R_n)}{\Delta^2} + o_{\bP}(1) = \frac{\E(f_i^2U_i^2)}{\E(f_i\zeta_i)^2} +  o_{\bP}(1) = \frac{\E\{f_i^2\sigma(X_i)\}}{\E(f_i\zeta_i)^2} +  o_{\bP}(1),
\]
where the last equality follows because, under our queue-randomizing design, $Q_i$ is assigned using only $X_i$ and independent design randomness. Hence $Q_i \independent U_i \mid X_i$, so
\[
    \E(U_i^2\mid X_i,Q_i)=\E(U_i^2\mid X_i)=\sigma(X_i).
\]

Now consider the oracle choice $f_i=r_i/\sigma(X_i)$. Applying Theorem~\ref{thm:propensity} gives
\[
    \lim_{n\to\infty} n\var\{\widehat\psi_Z)\}
    =
    \left(\E\left[
        \frac{\{\alpha_Q(\beta,p)-\pi(X;\theta)\}^2}{\sigma(X)}
    \right]\right)^{-1}.
\]
Finally, replacing $\sigma(X)$ by $\widehat\sigma(X)$ does not change the limit under the assumed consistency of $\widehat\sigma$. Hence 
\[
    \lim_{n \to \infty} n\var(\widehat\psi_Z^\star)
    =
    \left(\E\left[
        \frac{\{\alpha_Q(\beta,p)-\pi(X;\theta)\}^2}{\sigma(X)}
    \right]\right)^{-1}. \qedhere
\]
\end{proof}

\subsection{Proof of Theorem 7}
\begin{proof}
Refer to Section \ref{sec:opt_detail} for details.    
\end{proof}

\subsection{Proof of Theorem \ref{thm:asymp_normal_DR}}
\begin{proof}
Throughout the proof, we assume $n \geq n_0$ for $n_0 \in \mathbb N$ to ensure positivity holds. As in the proof of Theorem \ref{thm:asymp_var_DR}, denote the \emph{oracle} influence function as
\begin{align*}
  \phi_i &\equiv \mu_1(X_i)-\mu_0(X_i) + \frac{Z_i}{\pi_n(X_i)}\{Y_i-\mu_1(X_i)\} - \frac{1-Z_i}{1-\pi_n(X_i)}\{Y_i-\mu_0(X_i)\} \\
  &= \mu_1(X_i)-\mu_0(X_i) + \frac{Z_i}{\pi_n(X_i)}\{Y_i^1-\mu_1(X_i)\} - \frac{1-Z_i}{1-\pi_n(X_i)}\{Y_i^0-\mu_0(X_i)\},
\end{align*}
where the equality holds from consistency.

Since
\[
\sqrt{n}\big(\widehat\psi_{\ATE}^{\DR}-\psi_{\ATE}\big)
=
\frac{1}{\sqrt{n}}\sum_{i=1}^n\big(\phi_i-\psi_{\ATE}\big)
+ o_\bP(1),
\]
it suffices to show that $n^{-1/2}\sum_{i=1}^n(\phi_i-\psi_{\ATE})$ converges in distribution to $\cN(0,V(\theta))$. 

Write
\begin{align*}
\phi_i-\psi_{\ATE}
&= \big\{\mu_1(X_i)-\mu_0(X_i)-\psi_{\ATE}\big\}
+ \frac{Z_i}{\pi_n(X_i)}\{Y_i^1-\mu_1(X_i)\}
-\frac{1-Z_i}{1-\pi_n(X_i)}\{Y_i^0-\mu_0(X_i)\} \\
&\equiv \big\{\mu_1(X_i)-\mu_0(X_i)-\psi_{\ATE}\big\} + R_{i,n}.
\end{align*}
Then
\[
\frac{1}{\sqrt{n}}\sum_{i=1}^n(\phi_i-\psi_{\ATE})
= \frac{1}{\sqrt{n}}\sum_{i=1}^n R_{i,n} + \frac{1}{\sqrt{n}}\sum_{i=1}^n\big\{\mu_1(X_i)-\mu_0(X_i)-\psi_{\ATE}\big\} 
\equiv T_{1,n} + T_{2,n}.
\]
Write
\[
V(\theta) = \E\bigg\{\frac{\var(Y_i^1\mid X_i)}{\pi(X_i;\theta)}
+\frac{\var(Y_i^0\mid X_i)}{1-\pi(X_i;\theta)}\bigg\}
+\var\big(\mu_1(X_i)-\mu_0(X_i)\big) \equiv V_1(\theta) + V_2.
\]

Since $X_i$'s are iid, together with $\E\{\mu_1(X_i)-\mu_0(X_i)\}=\psi_{\ATE}$ and $\E\big[\{\mu_1(X_i)-\mu_0(X_i)\}^2\big]<\infty$, we have by the canonical central limit theorem (CLT) that
\[
T_{2,n} \rightsquigarrow \cN\left(0,V_2\right).
\]

For $T_{1,n}$, let $H_n \equiv (X^n,A^n,Q^n)$ as in the proof of Theorem \ref{thm:asymp_var_DR}. We verify the conditional Lyapunov condition
\begin{equation*}
\frac{1}{s_n^{2+\delta}}\sum_{i=1}^n \E(|R_{i,n}|^{2+\delta} \mid H_n) \overset{p}{\to} 0,
\end{equation*}
where $\delta=2$ and $s_n^2 \equiv\sum_{i=1}^n \var(R_{i,n}\mid H_n)$, since $R_{1,n},\dots,R_{n,n}$ are independent conditional on $H_n$, as $Z^n$ is then a deterministic function of $(A^n,Q^n)$ under our design and the potential outcomes are independent across units. 

Note that 
\[
s_n^2
=
\sum_{i=1}^n\left\{
\frac{Z_i\var(Y_i^1\mid X_i)}{\pi_n(X_i)^2}
+\frac{(1-Z_i)\var(Y_i^0\mid X_i)}{\{1-\pi_n(X_i)\}^2}
\right\},
\]
since 
\begin{align*}
\var(R_{i,n}\mid H_n)
&= \E(R_{i,n}^2\mid H_n) \\
&= \E\Bigg(
\left[ \frac{Z_i}{\pi_n(X_i)}\{Y_i^1-\mu_1(X_i)\} - \frac{1-Z_i}{1-\pi_n(X_i)}\{Y_i^0-\mu_0(X_i)\}
\right]^2 \biggm| H_n \Bigg) \\
&=
\frac{Z_i^2}{\pi_n(X_i)^2}
\E\left[\{Y_i^1-\mu_1(X_i)\}^2\mid H_n\right]
+
\frac{(1-Z_i)^2}{\{1-\pi_n(X_i)\}^2}
\E\left[\{Y_i^0-\mu_0(X_i)\}^2\mid H_n\right] \\
&\quad
-
\frac{2Z_i(1-Z_i)}{\pi_n(X_i)\{1-\pi_n(X_i)\}}
\E\left[\{Y_i^1-\mu_1(X_i)\}\{Y_i^0-\mu_0(X_i)\}\mid H_n\right] \\
&=
\frac{Z_i\var(Y_i^1\mid X_i)}{\pi_n(X_i)^2}
+
\frac{(1-Z_i)\var(Y_i^0\mid X_i)}{\{1-\pi_n(X_i)\}^2}.
\end{align*}

Now define
\[
M_n \equiv \sum_{i=1}^n \E\big(|R_{i,n}|^4 \mid H_n\big),
\]
and note that
\begin{align*}
|R_{i,n}|^{4}
&\leq
8 \left| \frac{Z_i}{\pi_n(X_i)}\{Y_i^1-\mu_1(X_i)\} \right|^{4} +
8 \left| \frac{1-Z_i}{1-\pi_n(X_i)}\{Y_i^0-\mu_0(X_i)\} \right|^{4} \\
&\leq
\frac{8}{\gamma^4}
\Big( |Y_i^1-\mu_1(X_i)|^{4} + |Y_i^0-\mu_0(X_i)|^{4} \Big),
\end{align*}
where the first inequality follows from $|a-b|^{2+\delta} \leq 2^{1+\delta}(|a|^{2+\delta}+|b|^{2+\delta})$ with $\delta=2$, and the second from positivity. It follows that
\[
\E\big(|R_{i,n}|^{4}\mid H_n\big)
\leq
\frac{8}{\gamma^4}\Big\{\E\big(|Y_i^1-\mu_1(X_i)|^{4}\mid X_i\big) + \E\big(|Y_i^0-\mu_0(X_i)|^{4}\mid X_i\big) \Big\},
\]
and therefore
\begin{align*}
\frac{M_n}{n} \le \frac{8/\gamma^4}{n} \sum_{i=1}^n \Big\{\E\big(|Y_i^1-\mu_1(X_i)|^{4}\mid X_i\big) + \E\big(|Y_i^0-\mu_0(X_i)|^{4}\mid X_i\big) \Big\} = O_\bP(1).
\end{align*}
by the weak law of large numbers and the moment assumption.

Now decompose
\begin{align*}
\frac{s_n^2}{n}
&=
\frac1n \sum_{i=1}^n
\left\{
\frac{Z_i \var(Y_i^1 \mid X_i)}{\pi_n(X_i)^2}
+
\frac{(1-Z_i)\var(Y_i^0 \mid X_i)}{\{1-\pi_n(X_i)\}^2}
\right\} \\
&=
\frac1n \sum_{i=1}^n
\left\{
\frac{\var(Y_i^1 \mid X_i)}{\pi_n(X_i)}
+
\frac{\var(Y_i^0 \mid X_i)}{1-\pi_n(X_i)}
\right\} \\
&\quad +
\frac1n \sum_{i=1}^n
\{Z_i-\pi_n(X_i)\}
\left\{
\frac{\var(Y_i^1 \mid X_i)}{\pi_n(X_i)^2}
-
\frac{\var(Y_i^0 \mid X_i)}{\{1-\pi_n(X_i)\}^2}
\right\} \\
&\equiv A_n + B_n.
\end{align*}
We first show $A_n \overset{p}{\to} V_1(\theta)$. Let
\[
a_n(X_i)
\equiv
\frac{\var(Y_i^1 \mid X_i)}{\pi_n(X_i)}
+
\frac{\var(Y_i^0 \mid X_i)}{1-\pi_n(X_i)},
\]
so that
\[
A_n = \frac1n \sum_{i=1}^n a_n(X_i).
\]
Then
\[
A_n - V_1(\theta)
=
\left[\frac1n \sum_{i=1}^n a_n(X_i) - \E\{a_n(X_i)\}\right] + \left[ \E\{a_n(X_i)\} - V_1(\theta) \right].
\]
Since $a_n(X_1),\dots,a_n(X_n)$ are iid, positivity and the moment condition imply
\[
\frac1n \sum_{i=1}^n a_n(X_i) \overset{p}{\to} \E\{a_n(X_i)\},
\]
by the weak law of large numbers. Then applying Theorem \ref{thm:propensity} and dominated convergence yields
\[
\E\{a_n(X_i)\} \to V_1(\theta),
\]
and subsequently $A_n \overset{p}{\to} V_1(\theta)$.

It remains to show that $B_n = o_{\bP}(1)$, whose proof closely follows that of $n\var(R_n)=o_{\bP}(1)$ in Theorem \ref{thm:asymp_var_DR}, thanks to Lemma \ref{lem:remainder_var_multi}. Let
\[
h_n(X_i)
\equiv
\frac{\var(Y_i^1 \mid X_i)}{\pi_n(X_i)^2}
-
\frac{\var(Y_i^0 \mid X_i)}{\{1-\pi_n(X_i)\}^2},
\]
so that
\[
B_n = \frac1n \sum_{i=1}^n \{Z_i-\pi_n(X_i)\} h_n(X_i).
\]
Since $\E(B_n)=0$, we have
\begin{align*}
n \E(B_n^2)
&=
\frac1n \sum_{i=1}^n
\E\left[h_n(X_i)^2 \{Z_i-\pi_n(X_i)\}^2\right]  +
\frac{2}{n}\sum_{i<j}
\E\left[h_n(X_i)h_n(X_j)\{Z_i-\pi_n(X_i)\}\{Z_j-\pi_n(X_j)\}\right].
\end{align*}
The diagonal term is again bounded as
\[
\frac1n \sum_{i=1}^n
\E\left[h_n(X_i)^2 \{Z_i-\pi_n(X_i)\}^2\right]
\le
\frac1n \sum_{i=1}^n \E\{h_n(X_i)^2\}
=
\E\{h_n(X_i)^2\},
\]
and the off-diagonal term, by applying Lemma \ref{lem:remainder_var_multi} and $2|ab| \le a^2+b^2$ again, as
\begin{align*}
\frac{2}{n}\sum_{i<j}
\left|\E\left[h_n(X_i)h_n(X_j)\{Z_i-\pi_n(X_i)\}\{Z_j-\pi_n(X_j)\}\right]\right|
\le
C \E\{h_n(X)^2\}.
\end{align*}
Now by positivity,
\[
h_n(X)^2 \le \frac{2}{\gamma^4}\left[\{\var(Y^1\mid X)\}^2+\{\var(Y^0\mid X)\}^2\right].
\]
Moreover $\{\var(Y^z | X)\}^2 \le \E\{(Y^z-\mu_z(X))^4 | X\}$, so the fourth-moment assumption implies $\sup_n \E\{h_n(X)^2\} < \infty$. It follows that $\E(B_n^2) = O(n^{-1})$ and therefore $B_n = o_\bP(1)$ from Chebyshev's inequality. As a result,
\[
\frac{s_n^2}{n} = A_n + B_n \overset{p}{\to} V_1(\theta),
\]
and
\[
\frac{M_n}{s_n^{4}}
=
\frac{M_n/n}{(s_n^2/n)^2} \frac1n \overset{p}{\to} 0,
\]
since $M_n/n = O_\bP(1)$ and $s_n^2/n \overset{p}{\to} V_1(\theta) > 0$. Hence by Lyapunov CLT, we have
\[
T_{1,n} \mid H_n \rightsquigarrow \cN(0,V_1(\theta)).
\]

It remains to show the marginal result
\[
T_{1,n} \rightsquigarrow \cN(0,V_1(\theta)),
\]
and 
\[
    T_{1,n} + T_{2,n} \rightsquigarrow \cN\left(0,V(\theta)\right).
\]

By L\'{e}vy's continuity theorem, it suffices to show pointwise convergence of the corresponding characteristic functions. Define
\[
U_n \equiv \E(e^{itT_{1,n}} \mid H_n) - e^{-t^2V_1(\theta)/2},
\]
and note that
\[
|U_n| \leq \E(|e^{itT_{1,n}}| \mid H_n) + |e^{-t^2V_1(\theta)/2}| \leq 1 + 1 = 2.
\]
Hence, combined with $\E\left(e^{itT_{1,n}} \mid H_n\right) \overset{p}{\to} e^{-t^2V_1(\theta)/2}$, we have $\E|U_n| \to 0$. It follows that
\[
\E(e^{itT_{1,n}})
=
\E\big\{\E(e^{itT_{1,n}} \mid H_n)\big\}
=
e^{-t^2V_1(\theta)/2} + \E(U_n),
\]
and subsequently
\[
\left|\E(e^{itT_{1,n}}) - e^{-t^2V_1(\theta)/2}\right| = |\E(U_n)| \leq \E|U_n| \to 0,
\]
which implies
\[
\E(e^{itT_{1,n}}) \to e^{-t^2V_1(\theta)/2}.
\]

Finally, note that
$\E\big\{e^{it(T_{1,n}+T_{2,n})}\big\} =  e^{t^2V_1(\theta)/2}\E\big(e^{itT_{2,n}}\big)+o(1)$ since
\begin{align*}
    \E\big\{e^{it(T_{1,n} + T_{2,n})}\big\}
    &= \E\big\{e^{itT_{2,n}}\E(e^{itT_{1,n}} \mid H_n)\big\} \\
    &=  \E\big\{e^{itT_{2,n}}(e^{-t^2V_1(\theta)/2} + U_n)\big\} \\
    &= e^{-t^2V_1(\theta)/2}\E\big(e^{itT_{2,n}}\big) + \E\big(e^{itT_{2,n}}U_n\big).
\end{align*}
Moreover, since $|e^{itT_{2,n}}|=1$,
\[
\left|
\E\big\{e^{it(T_{1,n}+T_{2,n})}\big\}
-
e^{-t^2V_1(\theta)/2}\E\big(e^{itT_{2,n}}\big)
\right|
\le
\E|U_n|
\to 0.
\]
Hence, using $\E(e^{itT_{2,n}})\to e^{-t^2V_2/2}$,
\[
    \E(e^{it(T_{1,n} + T_{2,n})}) \to e^{-t^2\{V_1(\theta) + V_2\}/2},
\]
for each $t \in \R$, which concludes the proof.
\end{proof}


\subsection{Proof of Corollary \ref{cor:asymp_normal_psi_Z}}
\begin{proof}
We know from the proof of Theorem \ref{thm:asymp_var_plm} that
\[
\widehat{\psi}_Z-\psi_Z
=
\frac{
\mathbb{P}_n\{f(X,Q)U\}
+
\mathbb{P}_n[f(X,Q)\{m(X)-\widehat m(X)\}]
}{
\mathbb{P}_n[f(X,Q)\{Z-\pi_n(X;\theta,b_n)\}]
}.
\]
By the same logic as in the proof of Theorem \ref{thm:asymp_var_plm}, $\mathbb{P}_n[f(X,Q)\{m(X)-\widehat m(X)\}]$ is \(o_\bP(n^{-1/2})\) and $\mathbb{P}_n[f(X,Q)\{Z-\pi_n(X)\}]$ converges in probability to $\E[f(X_i,Q_i)\{\alpha_{Q_i}(\beta,p)-\pi(X_i)\}]$. Further, since $\E\{f(X_i,Q_i)U_i\}=0$ and $\E(U_i^4)<\infty$, applying Lyapunov CLT gives
\[
\frac{1}{\sqrt{n}}\sum_{i=1}^n f(X_i,Q_i)U_i \rightsquigarrow \cN\!\left(0,\E\{f(X_i,Q_i)^2\sigma(X_i)\}\right).
\]
The result then follows by choosing $f(X_i,Q_i) = r_n(X_i,Q_i)/\widehat\sigma(X_i)$ and Slutsky's theorem.
\end{proof}

\section{Numerical experiments}\label{app:data}

All experiments were conducted on a personal laptop equipped with an Apple M4 chip and 16GB memory. 
All experiments completed in under 15 minutes.

\subsection{Dataset}

The dataset we use in our experiments is from the local government of a large U.S. county. 
The county employs caseworkers who connect high-risk individuals with local organizations that provide housing.
As partner organization units become available, the county refers eligible individuals to the partner.
Many units are reserved for specific populations, such as veterans or women experiencing domestic violence.
In our simulations, we consider individuals only eligible for general-population units.

The county refers individuals to partners using first-in, first-out queues based on a computed risk score $h(X)$, which effectively determines queue assignment. 
The risk score aggregates the output of machine learning algorithms trained to predict several outcomes of interest, such as the risk an individual experiences homelessness in the next six months. 

Individuals in lower queues have effectively no chance at treatment, so we restrict our population of interest to those with a predicted risk score $h(X)$ placing them in the top three queues. 
This ensures all individuals in our sample have some chance at receiving treatment under the county's current allocation process.
After selecting for this population, we have 2,317 individuals in our dataset.

\subsection{Synthetic DGP}

In our dataset, we only observe outcomes under treatment or control, so to calculate the efficiency of our proposed estimators we generate semi-synthetic outcomes using the individual risk score $h(X)$. 
We assume a homogeneous treatment effect $\psi = -0.1$, corresponding to a reduction in risk.
We note $|\psi| \leq \min_{x \in \mathcal X} h(X)$, and so all potential outcome probabilities are well-defined.
Using $\psi$ and $h(X)$, we set the potential outcomes and nuisances as
\begin{align*}
\mu_0(X) &= h(X) \\ 
\mu_1(X) &= h(X) + \psi \\ 
\var(Y \mid Z = 0, X) &= \mu_0(X)(1-\mu_0(X)) \\ 
\var(Y \mid Z = 1, X) &= \mu_1(X)(1-\mu_1(X)).
\end{align*}

\subsection{Sample Efficiency Experiment}\label{app:sample_efficiency}

In this section, we provide implementation details for our two heuristic baseline algorithms and test the robustness of our results to alternative queue configurations.
Similar to the experiment in the main text, we consider both a 25\% and 75\% budget setting, first varying the queue treatment probabilities over equally-proportioned queues, then holding treatment probabilities fixed and varying the queue assignment proportions.

\subsubsection{Heuristic Methods}

We describe our two heuristic methods for constructing queue assignment probabilities given utility scores $u(X)$ and target queue proportions $p \in \Delta_K$.

Our first method, \textit{Switch}, assigns individuals to queues based $u(X)$ and $p$, then assigns each individual a probability of switching to the adjacent higher-priority queue ($p_{\uparrow}$), lower-priority queue ($p_{\downarrow}$), or staying in their assigned queue ($1-p_{\uparrow}-p_{\downarrow}$), returning the final assignment probabilities $\theta(X_i)$.
The switching probabilities $p_{\downarrow}$ and$p_{\uparrow}$ are controlled by a single parameter $\beta \in (0,1)$, scaled by the relative proportions of the adjacent and current queues, so that  larger values of $\beta$ correspond to higher switching probability.

\begin{algorithm}[H]
\caption{Switch}
\begin{algorithmic}[1]
\Require Utilities $u_i$, queue proportions $p \in \Delta_K$, switch probability $\beta \in (0,1)$
\Ensure Assignment probabilities $\theta_{ik}$
\State Compute cumulative proportions: $c_k \gets \sum_{j \le k} p_j$
\State Construct bins: $b_k \gets \mathrm{Quantile}(u; c_k)$ for $k=1,\dots,K-1$
\State Set bin edges: $\tau \gets [-\infty, b_1, \dots, b_{K-1}, \infty]$
\State Assign initial queues: $q_i \gets k \quad \text{such that} \quad \tau_k \leq u_i < \tau_{k+1}$
\For{each individual $i$ with current queue $k = q_i$}
    \State $p_{\uparrow} \gets \beta \cdot \dfrac{p_{k+1}}{p_{k+1} + p_k}$ if $k < K$, else $0$
    \State $p_{\downarrow} \gets \beta \cdot \dfrac{p_{k-1}}{p_{k-1} + p_k}$ if $k > 1$, else $0$
    \If{$p_{\uparrow} + p_{\downarrow} > 1$}
        \State $p_{\uparrow} \gets \dfrac{p_{\uparrow}}{p_{\uparrow} + p_{\downarrow}}, \quad p_{\downarrow} \gets \dfrac{p_{\downarrow}}{p_{\uparrow} + p_{\downarrow}}$
    \EndIf
    \State $\theta_{i,k} \gets 1 - p_{\uparrow} - p_{\downarrow}$
    \State $\theta_{i,k+1} \gets p_{\uparrow}$ \quad (if $k < K$)
    \State $\theta_{i,k-1} \gets p_{\downarrow}$ \quad (if $k > 1$)
\EndFor
\end{algorithmic}
\end{algorithm}

Our second method, \textit{Greedy Softmax}, sequentially constructs assignment probabilities by iterating over queues in priority order. 
At each queue $k$, the algorithm computes a softmax-like score from each individual's utility $u_i$ and residual unallocated probability mass $r_i$, normalizes these scores to meet the target queue assignment proportion $p_k$, and caps individual probabilities via an iterative rescaling procedure that ensures the budget constraint is met.
The remaining mass after processing queues $1, \dots, K-1$ is assigned to the lowest priority queue $K$.
The scale parameter $a$ controls how strongly utilities influence the allocation, with larger values of $a$ concentrating mass in higher-priority queues for individuals with high utility scores.

\begin{algorithm}[H]
\caption{Greedy Softmax}
\begin{algorithmic}[1]
\Require Utilities $u_i$, target proportions $p \in \Delta_K$, scale $a$, maximum value $m$
\Ensure Assignment probabilities $\theta_{ik}$
\State Initialize residual weights: $r_i \gets 1$
\State Initialize $\theta_{ik} \gets 0$
\For{$k = 1, 2, \dots, K - 1$}
    \State Compute scores: $s_i \gets \exp(a \cdot u_i \cdot r_i) - 1$
    \State Normalize: $s_i \gets s_i \cdot \frac{p_k n}{\sum_i s_i}$
    \While{$\max_i s_i > m$ and $|\mathcal{C}| \cdot m < p_k n$}
        \State $i^* \gets \arg\max_i s_i$
        \State Add $i^*$ to capped set $\mathcal{C}$, set $s_{i^*} \gets 0$
        \State Rescale: $s_i \gets s_i \cdot \frac{p_k n - |\mathcal{C}| \cdot m}{\sum_i s_i}$ for $i \notin \mathcal{C}$
    \EndWhile
    \State Set $s_i \gets m$ for $i \in \mathcal{C}$
    \State Set $\theta_{ik} \gets s_i$
    \State Update residuals: $r_i \gets r_i - \theta_{ik}$
\EndFor
\State Assign remaining mass: $\theta_{iK} \gets r_i$
\end{algorithmic}
\end{algorithm}

\subsubsection{Varying Budget}

We study queue configurations analogous to those in the main text under 25\% and 75\% budget scenarios. 
Figures~\ref{app:fig:25budget} and \ref{app:fig:75budget} reveal qualitatively similar patterns, though the endogenous arrival estimator is less efficient due to lower variation in $\alpha_Q$ We examine this more directly in the next section, where we explicitly vary $\alpha_Q$ and study its effect on efficiency.

\begin{figure}[H]
    \centering
    \includegraphics[width=\textwidth]{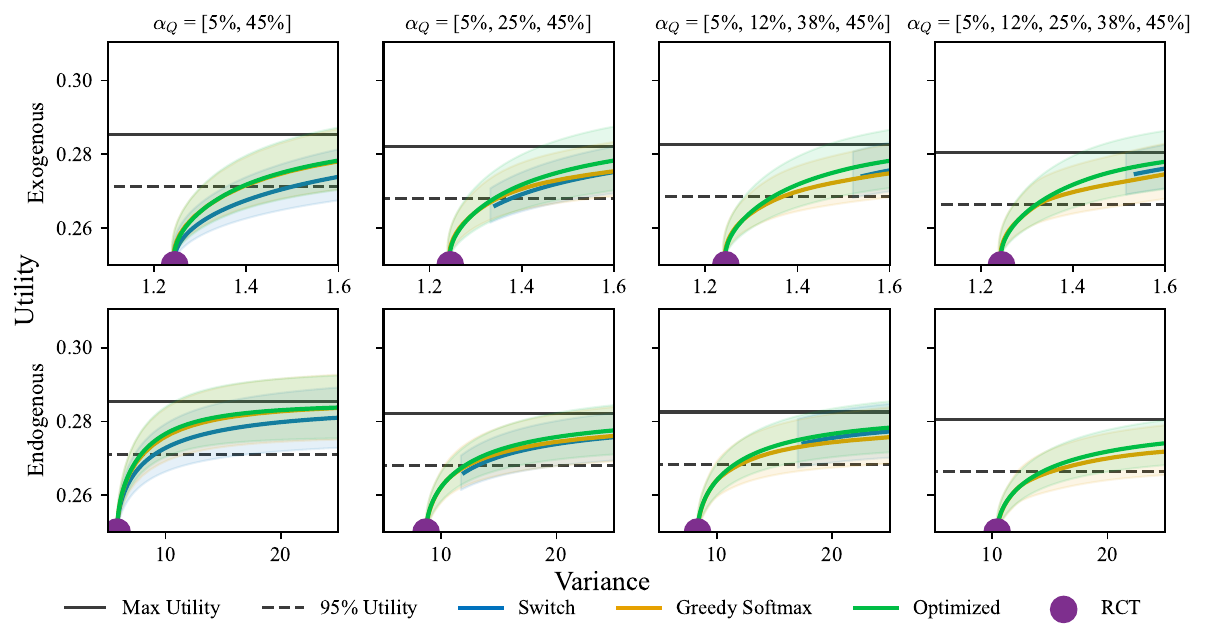}
    \caption{Pareto frontier of variance and utility under our optimized designs (green) versus two heuristic designs (orange, blue) and an RCT baseline (purple) under exogenous arrivals (top) and endogenous arrivals (bottom) across different queue configurations with a 25\% treatment budget. Queues are equally sized but vary in number, with treatment probabilities shown at the top of each column.}
    \label{app:fig:25budget}
\end{figure}

\begin{figure}[H]
    \centering
    \includegraphics[width=\textwidth]{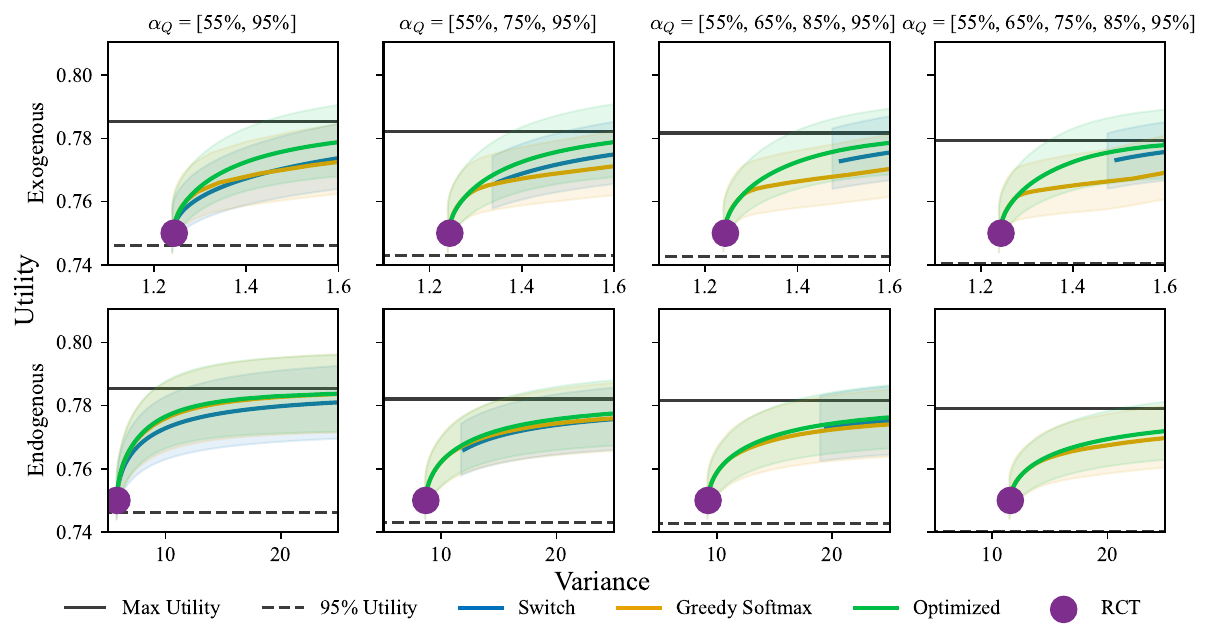}
    \caption{Pareto frontier of variance and utility under our optimized designs (green) versus two heuristic designs (orange, blue) and an RCT baseline (purple) under exogenous arrivals (top) and endogenous arrivals (bottom) across different queue configurations with a 50\% treatment budget. Queues are equally sized but vary in number, with treatment probabilities shown at the top of each column. All designs, including the RCT, achieve more than 95\% utility.}
    \label{app:fig:75budget}
\end{figure}

\subsubsection{Variance of $\alpha_Q$ on Efficiency}

Figure~\ref{app:fig:increasingvar} shows that the between-queue variance of treatment probabilities greatly influences the efficiency of the endogenous estimator. As the variance of $\alpha_Q$ increases, the endogenous estimator becomes more efficient, and at extreme treatment probabilities, 1\% in the lowest-priority queue and 99\% in the highest-priority queue, it achieves efficiency comparable to the exogenous estimator.

\begin{figure}[H]
    \centering
    \includegraphics[width=\textwidth]{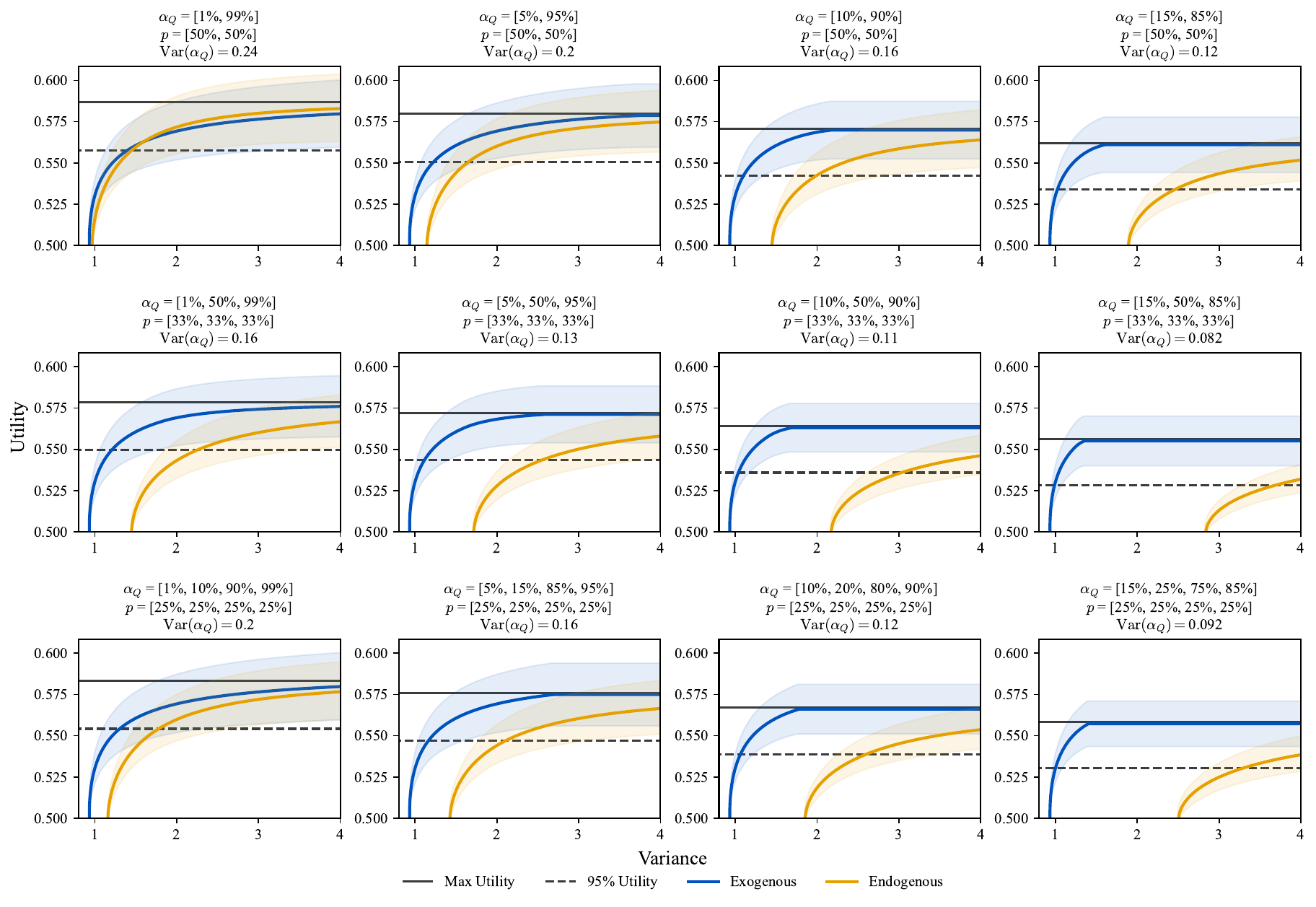}
    \caption{Pareto frontier of variance and utility under the endogenous design (orange) and exogenous design (blue) across different queue configurations with a 50\% treatment budget. Treatment probabilities and queue assignment proportions are shown at the top of each column.}
    \label{app:fig:increasingvar}
\end{figure}

\subsubsection{Varying Queue Assignment Proportions}

We vary the queue proportions while keeping the budget fixed, allocating more individuals to lower-priority queues. As with the varying budget setting, Figure~\ref{app:fig:queue_assignment_props} revals that the optimized designs remain competitive across all queue configurations, while neither heuristic dominates the other.

\begin{figure}[H]
    \centering
    \includegraphics[width=\textwidth]{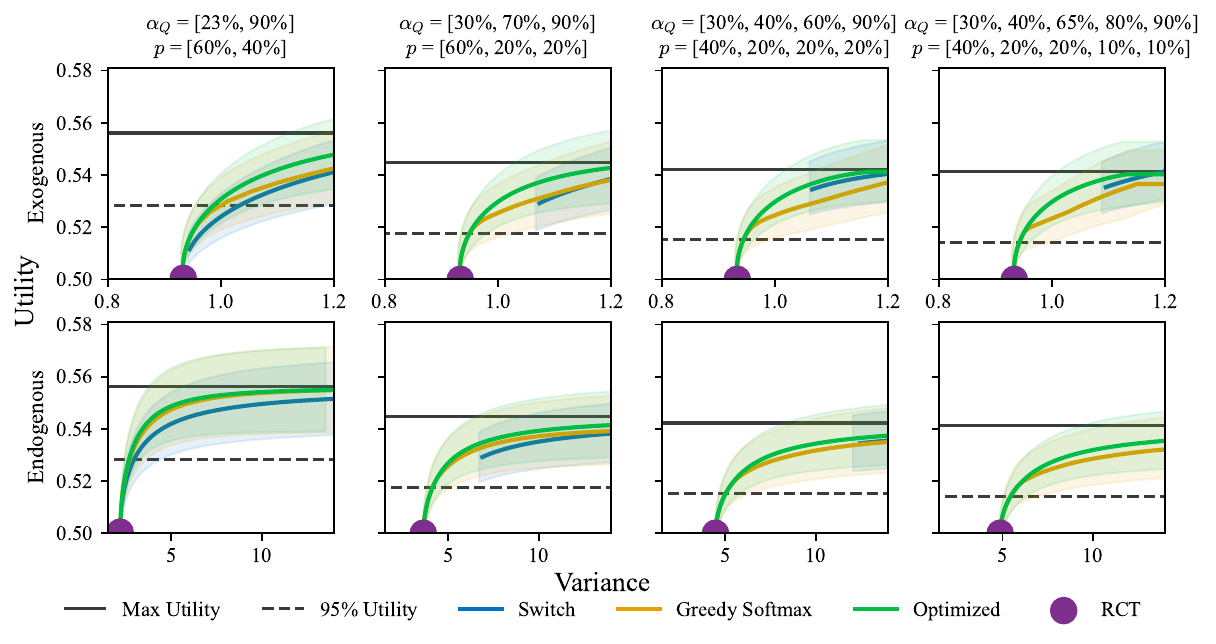}
    \caption{Pareto frontier of variance and utility under our optimized designs (green) versus two heuristic designs (orange, blue) and an RCT baseline (purple) under exogenous arrivals (top) and endogenous arrivals (bottom) across different queue configurations with a 50\% treatment budget. Treatment probabilities and queue assignment proportions are shown at the top of each column.}
    \label{app:fig:queue_assignment_props}
\end{figure}

\subsection{Bias Experiment}\label{app:bias}

Potential are generated according to the partially linear model
\begin{align*}
    Y^1 &= \psi + h(X) + U, \\
    Y^0 &= h(X) + U,
\end{align*}
where $\psi = -0.1$ is the homogeneous treatment effect and 
\begin{align*}
    U \sim \text{Uniform}(-0.2h(X), \, 0.2 h(X)),
\end{align*}
The arrival $A$ is set as the indices of $U$ sorted from highest to lowest.

For both the endogenous and exogenous settings, we use $A$, $Y^1$, $Y^0$, and the corresponding $\theta(X)$, to simulate observed treatment assignments $Z$ and outcomes $Y$. 
To study bias, we also calculate the misspecified nuisance functions $E[Y^1 \mid X]$ and $E[Y^0 \mid X]$ that omit $U$, introducing misspecification into the exogenous estimator $\psi_{\text{DR}}$, which relies on these models.
The realized outcomes, treatments, and known nuisance functions are then plugged into their corresponding estimators $\psi_{DR}$ in the exogenous setting and $\psi_Z$ in the endogenous setting. 
We repeat this procedure 10,000 times and report the average bias $\E[\widehat \psi_{\cdot}] - \psi$ for both estimators.


\newpage
\section*{NeurIPS Paper Checklist}

The checklist is designed to encourage best practices for responsible machine learning research, addressing issues of reproducibility, transparency, research ethics, and societal impact. Do not remove the checklist: {\bf The papers not including the checklist will be desk rejected.} The checklist should follow the references and follow the (optional) supplemental material.  The checklist does NOT count towards the page
limit. 

Please read the checklist guidelines carefully for information on how to answer these questions. For each question in the checklist:
\begin{itemize}
    \item You should answer \answerYes{}, \answerNo{}, or \answerNA{}.
    \item \answerNA{} means either that the question is Not Applicable for that particular paper or the relevant information is Not Available.
    \item Please provide a short (1--2 sentence) justification right after your answer (even for \answerNA). 
\end{itemize}

{\bf The checklist answers are an integral part of your paper submission.} They are visible to the reviewers, area chairs, senior area chairs, and ethics reviewers. You will also be asked to include it (after eventual revisions) with the final version of your paper, and its final version will be published with the paper.

The reviewers of your paper will be asked to use the checklist as one of the factors in their evaluation. While \answerYes{} is generally preferable to \answerNo{}, it is perfectly acceptable to answer \answerNo{} provided a proper justification is given (e.g., error bars are not reported because it would be too computationally expensive'' or ``we were unable to find the license for the dataset we used''). In general, answering \answerNo{} or \answerNA{} is not grounds for rejection. While the questions are phrased in a binary way, we acknowledge that the true answer is often more nuanced, so please just use your best judgment and write a justification to elaborate. All supporting evidence can appear either in the main paper or the supplemental material, provided in appendix. If you answer \answerYes{} to a question, in the justification please point to the section(s) where related material for the question can be found.

IMPORTANT, please:
\begin{itemize}
    \item {\bf Delete this instruction block, but keep the section heading ``NeurIPS Paper Checklist"},
    \item  {\bf Keep the checklist subsection headings, questions/answers and guidelines below.}
    \item {\bf Do not modify the questions and only use the provided macros for your answers}.
\end{itemize}


\begin{enumerate}

\item {\bf Claims}
    \item[] Question: Do the main claims made in the abstract and introduction accurately reflect the paper's contributions and scope?
    \item[] Answer: \answerYes{}
    \item[] Justification: All claims in the abstract and introduction are supported by main text.
    \item[] Guidelines:
    \begin{itemize}
        \item The answer \answerNA{} means that the abstract and introduction do not include the claims made in the paper.
        \item The abstract and/or introduction should clearly state the claims made, including the contributions made in the paper and important assumptions and limitations. A \answerNo{} or \answerNA{} answer to this question will not be perceived well by the reviewers. 
        \item The claims made should match theoretical and experimental results, and reflect how much the results can be expected to generalize to other settings. 
        \item It is fine to include aspirational goals as motivation as long as it is clear that these goals are not attained by the paper. 
    \end{itemize}

\item {\bf Limitations}
    \item[] Question: Does the paper discuss the limitations of the work performed by the authors?
    \item[] Answer: \answerYes{} 
    \item[] Justification: We discuss limitations and potential extensions in Section \ref{sec:conclusion}.
    \item[] Guidelines:
    \begin{itemize}
        \item The answer \answerNA{} means that the paper has no limitation while the answer \answerNo{} means that the paper has limitations, but those are not discussed in the paper. 
        \item The authors are encouraged to create a separate ``Limitations'' section in their paper.
        \item The paper should point out any strong assumptions and how robust the results are to violations of these assumptions (e.g., independence assumptions, noiseless settings, model well-specification, asymptotic approximations only holding locally). The authors should reflect on how these assumptions might be violated in practice and what the implications would be.
        \item The authors should reflect on the scope of the claims made, e.g., if the approach was only tested on a few datasets or with a few runs. In general, empirical results often depend on implicit assumptions, which should be articulated.
        \item The authors should reflect on the factors that influence the performance of the approach. For example, a facial recognition algorithm may perform poorly when image resolution is low or images are taken in low lighting. Or a speech-to-text system might not be used reliably to provide closed captions for online lectures because it fails to handle technical jargon.
        \item The authors should discuss the computational efficiency of the proposed algorithms and how they scale with dataset size.
        \item If applicable, the authors should discuss possible limitations of their approach to address problems of privacy and fairness.
        \item While the authors might fear that complete honesty about limitations might be used by reviewers as grounds for rejection, a worse outcome might be that reviewers discover limitations that aren't acknowledged in the paper. The authors should use their best judgment and recognize that individual actions in favor of transparency play an important role in developing norms that preserve the integrity of the community. Reviewers will be specifically instructed to not penalize honesty concerning limitations.
    \end{itemize}

\item {\bf Theory assumptions and proofs}
    \item[] Question: For each theoretical result, does the paper provide the full set of assumptions and a complete (and correct) proof?
    \item[] Answer: \answerYes{} 
    \item[] Justification: We provide full set of assumptions in Sections \ref{sec:identification} and \ref{sec:design} and the complete proofs in Appendix \ref{sec:proofs}.
    \item[] Guidelines:
    \begin{itemize}
        \item The answer \answerNA{} means that the paper does not include theoretical results. 
        \item All the theorems, formulas, and proofs in the paper should be numbered and cross-referenced.
        \item All assumptions should be clearly stated or referenced in the statement of any theorems.
        \item The proofs can either appear in the main paper or the supplemental material, but if they appear in the supplemental material, the authors are encouraged to provide a short proof sketch to provide intuition. 
        \item Inversely, any informal proof provided in the core of the paper should be complemented by formal proofs provided in appendix or supplemental material.
        \item Theorems and Lemmas that the proof relies upon should be properly referenced. 
    \end{itemize}

    \item {\bf Experimental result reproducibility}
    \item[] Question: Does the paper fully disclose all the information needed to reproduce the main experimental results of the paper to the extent that it affects the main claims and/or conclusions of the paper (regardless of whether the code and data are provided or not)?
    \item[] Answer: \answerYes{} 
    \item[] Justification: We are not able to provide the original data due to its sensitive nature, but we provide detailed information on all experiments in the Appendix~\ref{app:sample_efficiency}and ~\ref{app:bias}. We also provide a demo notebook with a fully synthetic dataset that demonstrates our approach.
    \item[] Guidelines:
    \begin{itemize}
        \item The answer \answerNA{} means that the paper does not include experiments.
        \item If the paper includes experiments, a \answerNo{} answer to this question will not be perceived well by the reviewers: Making the paper reproducible is important, regardless of whether the code and data are provided or not.
        \item If the contribution is a dataset and\slash or model, the authors should describe the steps taken to make their results reproducible or verifiable. 
        \item Depending on the contribution, reproducibility can be accomplished in various ways. For example, if the contribution is a novel architecture, describing the architecture fully might suffice, or if the contribution is a specific model and empirical evaluation, it may be necessary to either make it possible for others to replicate the model with the same dataset, or provide access to the model. In general. releasing code and data is often one good way to accomplish this, but reproducibility can also be provided via detailed instructions for how to replicate the results, access to a hosted model (e.g., in the case of a large language model), releasing of a model checkpoint, or other means that are appropriate to the research performed.
        \item While NeurIPS does not require releasing code, the conference does require all submissions to provide some reasonable avenue for reproducibility, which may depend on the nature of the contribution. For example
        \begin{enumerate}
            \item If the contribution is primarily a new algorithm, the paper should make it clear how to reproduce that algorithm.
            \item If the contribution is primarily a new model architecture, the paper should describe the architecture clearly and fully.
            \item If the contribution is a new model (e.g., a large language model), then there should either be a way to access this model for reproducing the results or a way to reproduce the model (e.g., with an open-source dataset or instructions for how to construct the dataset).
            \item We recognize that reproducibility may be tricky in some cases, in which case authors are welcome to describe the particular way they provide for reproducibility. In the case of closed-source models, it may be that access to the model is limited in some way (e.g., to registered users), but it should be possible for other researchers to have some path to reproducing or verifying the results.
        \end{enumerate}
    \end{itemize}

\item {\bf Open access to data and code}
    \item[] Question: Does the paper provide open access to the data and code, with sufficient instructions to faithfully reproduce the main experimental results, as described in supplemental material?
    \item[] Answer: \answerNo{} 
    \item[] Justification: The dataset contains sensitive information about individuals and so we are unable to release it. We do provide code and a demo notebook with a fully synthetic dataset that demonstrates our approach.
    \item[] Guidelines:
    \begin{itemize}
        \item The answer \answerNA{} means that paper does not include experiments requiring code.
        \item Please see the NeurIPS code and data submission guidelines (\url{https://neurips.cc/public/guides/CodeSubmissionPolicy}) for more details.
        \item While we encourage the release of code and data, we understand that this might not be possible, so \answerNo{} is an acceptable answer. Papers cannot be rejected simply for not including code, unless this is central to the contribution (e.g., for a new open-source benchmark).
        \item The instructions should contain the exact command and environment needed to run to reproduce the results. See the NeurIPS code and data submission guidelines (\url{https://neurips.cc/public/guides/CodeSubmissionPolicy}) for more details.
        \item The authors should provide instructions on data access and preparation, including how to access the raw data, preprocessed data, intermediate data, and generated data, etc.
        \item The authors should provide scripts to reproduce all experimental results for the new proposed method and baselines. If only a subset of experiments are reproducible, they should state which ones are omitted from the script and why.
        \item At submission time, to preserve anonymity, the authors should release anonymized versions (if applicable).
        \item Providing as much information as possible in supplemental material (appended to the paper) is recommended, but including URLs to data and code is permitted.
    \end{itemize}

\item {\bf Experimental setting/details}
    \item[] Question: Does the paper specify all the training and test details (e.g., data splits, hyperparameters, how they were chosen, type of optimizer) necessary to understand the results?
    \item[] Answer: \answerYes{} 
    \item[] Justification: We provide detailed descriptions on all experiments in the Appendix~\ref{app:sample_efficiency} and \ref{app:bias}.
    \item[] Guidelines:
    \begin{itemize}
        \item The answer \answerNA{} means that the paper does not include experiments.
        \item The experimental setting should be presented in the core of the paper to a level of detail that is necessary to appreciate the results and make sense of them.
        \item The full details can be provided either with the code, in appendix, or as supplemental material.
    \end{itemize}

\item {\bf Experiment statistical significance}
    \item[] Question: Does the paper report error bars suitably and correctly defined or other appropriate information about the statistical significance of the experiments?
    \item[] Answer: \answerYes{} 
    \item[] Justification: As explained in the main text, we use the multiplier bootstrap for uncertainty quantification in Section \ref{sec:experiments} and in the Appendix~\ref{app:sample_efficiency}.
    \item[] Guidelines:
    \begin{itemize}
        \item The answer \answerNA{} means that the paper does not include experiments.
        \item The authors should answer \answerYes{} if the results are accompanied by error bars, confidence intervals, or statistical significance tests, at least for the experiments that support the main claims of the paper.
        \item The factors of variability that the error bars are capturing should be clearly stated (for example, train/test split, initialization, random drawing of some parameter, or overall run with given experimental conditions).
        \item The method for calculating the error bars should be explained (closed form formula, call to a library function, bootstrap, etc.)
        \item The assumptions made should be given (e.g., Normally distributed errors).
        \item It should be clear whether the error bar is the standard deviation or the standard error of the mean.
        \item It is OK to report 1-sigma error bars, but one should state it. The authors should preferably report a 2-sigma error bar than state that they have a 96\% CI, if the hypothesis of Normality of errors is not verified.
        \item For asymmetric distributions, the authors should be careful not to show in tables or figures symmetric error bars that would yield results that are out of range (e.g., negative error rates).
        \item If error bars are reported in tables or plots, the authors should explain in the text how they were calculated and reference the corresponding figures or tables in the text.
    \end{itemize}

\item {\bf Experiments compute resources}
    \item[] Question: For each experiment, does the paper provide sufficient information on the computer resources (type of compute workers, memory, time of execution) needed to reproduce the experiments?
    \item[] Answer: \answerYes{}{} 
    \item[] Justification: We state that all experiments were run on a personal laptop with a M4 chip and 16gb of memory in Appendix~\ref{app:data}.
    \item[] Guidelines:
    \begin{itemize}
        \item The answer \answerNA{} means that the paper does not include experiments.
        \item The paper should indicate the type of compute workers CPU or GPU, internal cluster, or cloud provider, including relevant memory and storage.
        \item The paper should provide the amount of compute required for each of the individual experimental runs as well as estimate the total compute. 
        \item The paper should disclose whether the full research project required more compute than the experiments reported in the paper (e.g., preliminary or failed experiments that didn't make it into the paper). 
    \end{itemize}
    
\item {\bf Code of ethics}
    \item[] Question: Does the research conducted in the paper conform, in every respect, with the NeurIPS Code of Ethics \url{https://neurips.cc/public/EthicsGuidelines}?
    \item[] Answer: \answerYes{} 
    \item[] Justification: We have reviewed the Code of Ethics and confirm that our papers complies with it.
    \item[] Guidelines:
    \begin{itemize}
        \item The answer \answerNA{} means that the authors have not reviewed the NeurIPS Code of Ethics.
        \item If the authors answer \answerNo, they should explain the special circumstances that require a deviation from the Code of Ethics.
        \item The authors should make sure to preserve anonymity (e.g., if there is a special consideration due to laws or regulations in their jurisdiction).
    \end{itemize}

\item {\bf Broader impacts}
    \item[] Question: Does the paper discuss both potential positive societal impacts and negative societal impacts of the work performed?
    \item[] Answer: \answerYes{} 
    \item[] Justification: We discuss potential societal impacts in Section \ref{sec:conclusion}.
    \item[] Guidelines:
    \begin{itemize}
        \item The answer \answerNA{} means that there is no societal impact of the work performed.
        \item If the authors answer \answerNA{} or \answerNo, they should explain why their work has no societal impact or why the paper does not address societal impact.
        \item Examples of negative societal impacts include potential malicious or unintended uses (e.g., disinformation, generating fake profiles, surveillance), fairness considerations (e.g., deployment of technologies that could make decisions that unfairly impact specific groups), privacy considerations, and security considerations.
        \item The conference expects that many papers will be foundational research and not tied to particular applications, let alone deployments. However, if there is a direct path to any negative applications, the authors should point it out. For example, it is legitimate to point out that an improvement in the quality of generative models could be used to generate Deepfakes for disinformation. On the other hand, it is not needed to point out that a generic algorithm for optimizing neural networks could enable people to train models that generate Deepfakes faster.
        \item The authors should consider possible harms that could arise when the technology is being used as intended and functioning correctly, harms that could arise when the technology is being used as intended but gives incorrect results, and harms following from (intentional or unintentional) misuse of the technology.
        \item If there are negative societal impacts, the authors could also discuss possible mitigation strategies (e.g., gated release of models, providing defenses in addition to attacks, mechanisms for monitoring misuse, mechanisms to monitor how a system learns from feedback over time, improving the efficiency and accessibility of ML).
    \end{itemize}
    
\item {\bf Safeguards}
    \item[] Question: Does the paper describe safeguards that have been put in place for responsible release of data or models that have a high risk for misuse (e.g., pre-trained language models, image generators, or scraped datasets)?
    \item[] Answer: \answerNA{} 
    \item[] Justification: Our work does not contain any data or models that have a high risk for misuse. 
    \item[] Guidelines:
    \begin{itemize}
        \item The answer \answerNA{} means that the paper poses no such risks.
        \item Released models that have a high risk for misuse or dual-use should be released with necessary safeguards to allow for controlled use of the model, for example by requiring that users adhere to usage guidelines or restrictions to access the model or implementing safety filters. 
        \item Datasets that have been scraped from the Internet could pose safety risks. The authors should describe how they avoided releasing unsafe images.
        \item We recognize that providing effective safeguards is challenging, and many papers do not require this, but we encourage authors to take this into account and make a best faith effort.
    \end{itemize}

\item {\bf Licenses for existing assets}
    \item[] Question: Are the creators or original owners of assets (e.g., code, data, models), used in the paper, properly credited and are the license and terms of use explicitly mentioned and properly respected?
    \item[] Answer: \answerYes{} 
    \item[] Justification: We use a proprietary dataset from the county that will be properly attributed, but is not right now to preserve anonymity.
    \item[] Guidelines:
    \begin{itemize}
        \item The answer \answerNA{} means that the paper does not use existing assets.
        \item The authors should cite the original paper that produced the code package or dataset.
        \item The authors should state which version of the asset is used and, if possible, include a URL.
        \item The name of the license (e.g., CC-BY 4.0) should be included for each asset.
        \item For scraped data from a particular source (e.g., website), the copyright and terms of service of that source should be provided.
        \item If assets are released, the license, copyright information, and terms of use in the package should be provided. For popular datasets, \url{paperswithcode.com/datasets} has curated licenses for some datasets. Their licensing guide can help determine the license of a dataset.
        \item For existing datasets that are re-packaged, both the original license and the license of the derived asset (if it has changed) should be provided.
        \item If this information is not available online, the authors are encouraged to reach out to the asset's creators.
    \end{itemize}

\item {\bf New assets}
    \item[] Question: Are new assets introduced in the paper well documented and is the documentation provided alongside the assets?
    \item[] Answer: \answerYes{} 
    \item[] Justification: We provide a starter notebook and code to run our approach in the supplementary materials.
    \item[] Guidelines:
    \begin{itemize}
        \item The answer \answerNA{} means that the paper does not release new assets.
        \item Researchers should communicate the details of the dataset\slash code\slash model as part of their submissions via structured templates. This includes details about training, license, limitations, etc. 
        \item The paper should discuss whether and how consent was obtained from people whose asset is used.
        \item At submission time, remember to anonymize your assets (if applicable). You can either create an anonymized URL or include an anonymized zip file.
    \end{itemize}

\item {\bf Crowdsourcing and research with human subjects}
    \item[] Question: For crowdsourcing experiments and research with human subjects, does the paper include the full text of instructions given to participants and screenshots, if applicable, as well as details about compensation (if any)? 
    \item[] Answer: \answerNA{} 
    \item[] Justification: Our work does not regard any crowdsourcing experiments and research with human subjects.
    \item[] Guidelines:
    \begin{itemize}
        \item The answer \answerNA{} means that the paper does not involve crowdsourcing nor research with human subjects.
        \item Including this information in the supplemental material is fine, but if the main contribution of the paper involves human subjects, then as much detail as possible should be included in the main paper. 
        \item According to the NeurIPS Code of Ethics, workers involved in data collection, curation, or other labor should be paid at least the minimum wage in the country of the data collector. 
    \end{itemize}

\item {\bf Institutional review board (IRB) approvals or equivalent for research with human subjects}
    \item[] Question: Does the paper describe potential risks incurred by study participants, whether such risks were disclosed to the subjects, and whether Institutional Review Board (IRB) approvals (or an equivalent approval/review based on the requirements of your country or institution) were obtained?
    \item[] Answer: \answerNA{} 
    \item[] Justification: Our work does not regard any crowdsourcing experiments and research with human subjects.
    \item[] Guidelines:
    \begin{itemize}
        \item The answer \answerNA{} means that the paper does not involve crowdsourcing nor research with human subjects.
        \item Depending on the country in which research is conducted, IRB approval (or equivalent) may be required for any human subjects research. If you obtained IRB approval, you should clearly state this in the paper. 
        \item We recognize that the procedures for this may vary significantly between institutions and locations, and we expect authors to adhere to the NeurIPS Code of Ethics and the guidelines for their institution. 
        \item For initial submissions, do not include any information that would break anonymity (if applicable), such as the institution conducting the review.
    \end{itemize}

\item {\bf Declaration of LLM usage}
    \item[] Question: Does the paper describe the usage of LLMs if it is an important, original, or non-standard component of the core methods in this research? Note that if the LLM is used only for writing, editing, or formatting purposes and does \emph{not} impact the core methodology, scientific rigor, or originality of the research, declaration is not required.
    \item[] Answer: \answerNA{} 
    \item[] Justification: The core method development in our work does not involve LLMs as any important, original, or non-standard components.
    \item[] Guidelines:
    \begin{itemize}
        \item The answer \answerNA{} means that the core method development in this research does not involve LLMs as any important, original, or non-standard components.
        \item Please refer to our LLM policy in the NeurIPS handbook for what should or should not be described.
    \end{itemize}

\end{enumerate}

\end{document}